\documentclass[10pt,journal,compsoc]{IEEEtran}

\ifCLASSOPTIONcompsoc
  \usepackage[nocompress]{cite}
\else
  \usepackage{cite}
\fi

\ifCLASSINFOpdf
  \usepackage[pdftex]{graphicx}
\else
  \usepackage[dvips]{graphicx}
\fi
    
\ifCLASSOPTIONcompsoc
 \usepackage[caption=false,font=footnotesize,labelfont=sf,textfont=sf]{subfig}
\else
 \usepackage[caption=false,font=footnotesize]{subfig}
\fi

\usepackage{amsmath,amsfonts,amsthm,amssymb}
\usepackage{algorithm, algorithmic}
\usepackage{array}
\usepackage[colorlinks]{hyperref} 
\usepackage{enumitem}
\usepackage{booktabs,multirow,multicol}
\usepackage[table]{xcolor}

\hyphenation{op-tical net-works semi-conduc-tor IEEE-Xplore}

\allowdisplaybreaks

\theoremstyle{plain}
\newtheorem{theorem}{Theorem}
\newtheorem{assumption}{Assumption}
\newtheorem{proposition}[theorem]{Proposition}
\newtheorem{lemma}[theorem]{Lemma}
\newtheorem{corollary}[theorem]{Corollary}

\theoremstyle{definition}

\theoremstyle{remark}
\newtheorem{remark}{Remark}

\DeclareMathOperator*{\argmin}{arg\,min}
\DeclareMathOperator{\var}{Var}
\newcommand{\bftheta}{\boldsymbol{\theta}}
\newcommand{\bfphi}{\boldsymbol{\phi}}
\newcommand{\bfa}{\mathbf{a}}
\newcommand{\bfb}{\mathbf{b}}
\newcommand{\bfg}{\mathbf{g}}
\newcommand{\bfG}{\mathbf{G}}
\newcommand{\bfh}{\mathbf{h}}
\newcommand{\bfH}{\mathbf{H}}
\newcommand{\bfI}{\mathbf{I}}
\newcommand{\bfM}{\mathbf{M}}
\newcommand{\bfP}{\mathbf{P}}

\newcommand{\bfv}{\mathbf{v}}

\newcommand{\bfW}{\mathbf{W}}

\newcommand{\bfX}{\mathbf{X}}

\newcommand{\bfz}{\mathbf{z}}
\newcommand{\data}[1]{\mathcal{D}^\mathrm{#1}}
\newcommand{\loss}[1]{\ell^\mathrm{#1}}
\newcommand{\metaloss}{\mathcal{L}}
\newcommand{\reg}{r}
\newcommand{\expect}{\mathbb{E}}
\newcommand{\real}{\mathbb{R}}
\newcommand{\batch}{\mathcal{B}}
\newcommand{\lin}{\text{lin}}
\newcommand{\lightcell}{\cellcolor{gray!25}}
\newcommand{\deepcell}{\cellcolor{gray!75}}

\begin{document}

\title{Learnable Loss Geometries with Mirror Descent for Scalable and Convergent Meta-Learning}

\author{Yilang Zhang, Bingcong Li, and Georgios B. Giannakis,~\IEEEmembership{Fellow,~IEEE}
\thanks{Y. Zhang and G. B. Giannakis are with the Dept. of Electrical and Computer
Engineering, University of Minnesota, Minneapolis, MN 55455, USA. B. Li is with Dept. of Computer Science, ETH Z{\"u}rich, 8092 Z{\"u}rich, Switzerland. Emails: \{zhan7453, georgios\}@umn.edu and bingcong.li@inf.ethz.ch.}
}

\markboth{IEEE TRANSACTIONS ON PATTERN ANALYSIS AND MACHINE INTELLIGENCE (SUBMITTED)}{}


\IEEEtitleabstractindextext{
\begin{abstract}
Utilizing task-invariant knowledge acquired from related tasks as prior information, meta-learning offers a principled approach to learning a new task with limited data records. Sample-efficient adaptation of this prior information is a major challenge facing meta-learning, and plays an important role because it facilitates training the sought task-specific model with just \textit{a few} optimization steps. Past works deal with this challenge through preconditioning that speeds up convergence of the per-task training. Though effective in representing locally quadratic loss curvatures, simple linear preconditioning can be hardly potent with complex loss geometries. Instead of relying on a quadratic distance metric, the present contribution copes with complex loss metrics by learning a versatile distance-generating function, which induces a nonlinear mirror map to effectively capture and optimize a wide range of loss geometries. With suitable parameterization, this generating function is effected by an expressive neural network that is provably a valid distance. Analytical results establish convergence of not only the proposed method, but also all meta-learning approaches based on preconditioning. To attain gradient norm less than $\epsilon$, the convergence rate of $\mathcal{O}(\epsilon^{-2})$ is on par with standard gradient-based meta-learning methods. Numerical tests on few-shot learning datasets demonstrate the superior empirical performance of the novel algorithm, as well as its rapid per-task convergence, which markedly reduces the number of adaptation steps, hence also accommodating large-scale meta-learning models. 
\end{abstract}

\begin{IEEEkeywords}
Meta-learning, mirror descent, loss geometries, bilevel optimization, scalability.
\end{IEEEkeywords}
}

\maketitle

\IEEEraisesectionheading{\section{Introduction}\label{sec:introduction}}
\IEEEPARstart{T}{he} advent and advances of deep learning (DL) have led to documented success across a broad spectrum of fields, including computer vision~\cite{CNN} and nature language processing~\cite{transformer}. However, DL relies heavily on large-scale and high-capacity models, which necessitate extensive training using a vast number of labeled data. However, the data collection and annotation process in certain applications can be non-trivial and costly, requiring substantial human efforts and prohibitively high budget. As an example, the ResNet-50 neural network (NN) model, which is widely adopted in natural and medical image processing, contains over 23 million parameters~\cite{ResNet}. In contrast, a medical image dataset such as BreastMNIST consists of merely 780 data~\cite{BreastMNIST}, given the imperative to uphold medical confidentiality by avoiding the disclosure of private patient information. Consequently, large models tend to overfit the given dataset, and lead to degraded generalization ability. This calls for a paradigm shift to reach the desired model capacity with manageable number of labeled data. 

Interestingly, in comparison to DL models, humans can perform exceptionally well on complicated tasks such as object recognition or concept comprehension with minimal samples. For instance, a child can efficiently learn to recognize objects such as cars and planes after observing solely a couple of pictures~\cite{Meta-LSTM}. How to integrate this data-efficient learning ability of humans into DL is both appealing and crucial, especially for applications with limited data records. Examples of such applications include drug molecule discovery~\cite{metaapp-drug}, minority language translation~\cite{metaapp-trans}, and catastrophic weather prediction~\cite{metaapp-weather-pred}. 

To cross-pollinate learning of humans to DL, \emph{meta-learning} (aka \emph{learning-to-learn}) has been recognized to offer a powerful framework for learning an unseen task from limited labeled data. Specifically, meta-learning seeks to accumulate \emph{task-invariant prior} information from a collection of interrelated tasks, that can subsequently aid the learning of a novel, albeit related task. Although limited data may be available to learn this new sought task, the prior serves as a strong inductive bias that effectively transfers knowledge to aid its learning~\cite{MAML}. In image classification for instance, a feature extractor learned from a collection of given tasks can act as a common prior, thus benefiting a variety of other image classification tasks. 

Depending on how this ``data-limited learning'' is performed per task, existing meta-learning algorithms can be categorized as either NN- or optimization-based ones. In the former, the learning of a task is modeled by an NN mapping from its training data to task-specific model parameters~\cite{mem-aug,Meta-LSTM, neural-attentive}. The prior information is encoded in the NN weights, which are shared and optimized across all tasks. With the effectiveness of NNs to approximate complex mappings granted, their black-box structure challenges their reliability and interpretability. In contrast, optimization-based meta-learning alternatives interpret ``data-limited learning'' as a cascade of a few optimization iterations (aka \emph{adaptation}) over the model parameters. The prior here is captured by the shared hyperparameters of the iterative optimizer. A representative algorithm is model-agnostic meta-learning (MAML)~\cite{MAML}, which views the prior as a learnable task-invariant initialization of the optimizer. By starting from an informative initial point, the model parameters can rapidly converge to a stationary point within a few gradient descent (GD) steps. It has been pointed out that the initialization in MAML can be viewed as the mean of a multivariate Gaussian prior over the model parameters under a second-order Taylor approximation~\cite{LLAMA}. Building upon MAML, a series of variants have been proposed to learn different priors~\cite{iMAML,R2D2,MetaOptNet,BMAML,MetaProxNet}. 

While optimization-based meta-learning has been proven effective numerically, recent studies suggest that its stability heavily rely on the convergence of per-task optimization~\cite{iMAML, iBaML}. Hence, a straightforward improvement is to grow the number of descent iterations. However, this can be infeasible as the overall complexity of meta-learning scales linearly with the number of adaptation GD steps~\cite{iMAML}. Moreover, using accelerated first-order optimizers, such as Nesterov's algorithm~\cite{Nesterov}, introduces extra backpropagation complexity when optimizing the prior. To improve the per-task convergence without markedly adding to the computational overhead, another line of research focuses on second-order optimization using a learnable preconditioning matrix~\cite{MetaSGD,MetaMD,MetaCurvature,WarpGrad,sparse-MAML}. Essentially, this preconditioner captures the quadratic curvature of the training loss function, and linearly transforms the gradient based on the curvature. To acquire more expressive and generic forms of preconditioning, recent advances suggest replacing the linear matrix multiplication with a nonlinear NN transformation~\cite{MetaKFO}, or, a singular value preconditioning~\cite{GAP}. 

Unlike previous works that rely on preconditioning to model the prior of \emph{quadratic loss curvature}, this contribution proposes to learn a distance metric that captures a  \emph{generic loss geometry} prior shared across tasks, thus accelerating the convergence of task adaptation. This generalization is effected by replacing the preconditioned GD (PGD) with the more generic mirror descent algorithm (MiDA)~\cite{MDA}. We thus term the resultant meta-learning algorithm MetaMiDA. All in all, our contribution is threefold. 
\begin{enumerate}
\item [C1.] An NN-based distance-generating function is developed to model generic loss geometries, with theoretical guarantees on the validity of the resultant mirror map. 
\item [C2.] Convergence analysis of not only MetaMiDA, but also all PGD-based meta-learning methods is established. The rate $\mathcal{O}(\epsilon^{-2})$ matches GD-based meta-learning~\cite{converge-onestep-MAML, converge-multistep-MAML, converge-partial-param}, yet relying on fewer assumptions. 
\item [C3.] Extensive numerical tests showcase MetaMiDA's improved empirical performance and accelerated adaptation, even with merely \emph{one} optimization step. 
\end{enumerate}

This work extends our previous conference paper~\cite{MetaMDA-ICASSP} in three key aspects. First, while the mirror map in~\cite{MetaMDA-ICASSP} is restricted to have a triangular Jacobian, this contribution allows for more generic mirror maps by modeling alternatively the distance-generating function. Second, theoretical analysis is provided to guarantee the convergence of a broad family of meta-learning algorithms. Lastly, numerical experiments are vastly expanded to incorporate extensive datasets, large-scale models, complexity analysis, and more challenging setups such as cross-domain generalization. 

\textbf{Notation.} Bold lowercase (capital) letters denote column vectors (matrices); $\langle \cdot, \cdot \rangle$ and $\cdot^\top$ represent respectively inner-product and transposition; and $\nabla_i$ stands for partial derivative wrt the $i$-th function argument (input or parameter). 

\section{Preliminaries}
This section outlines the meta-learning formulation, followed by a recap of popular meta-learning approaches, and their limitations, especially that of scalability. 

\subsection{Problem setup}
To enable learning a new task from limited data, meta-learning extracts task-invariant information from a collection of $T$ given tasks. Let $t \in \{ 1,\ldots,T \}$ be a uniform discrete random variable indexing these tasks; that is, $\Pr(t=1) = \ldots = \Pr(t=T) = {T}^{-1}$. Each task comprises a dataset $\data{}_t := \{ (\mathbf{x}_t^n, y_t^n) \}_{n=1}^{N_t}$ of $N_t$ (data, label) pairs that are split into a training subset $\data{trn}_t \subset \data{}_t$, and a disjoint validation subset $\data{val}_t := \data{}_t \setminus \data{trn}_t$. In addition to the $T$ given tasks, a new task indexed by $\star$, contains a small training subset $\data{trn}_\star$, and a set of test data $\{ \mathbf{x}_\star^n \}_{n=1}^{N_\star^{\mathrm{tst}}}$ for which the corresponding labels $\{ y_\star^n \}_{n=1}^{N_\star^{\mathrm{tst}}}$ are to be predicted. 

The key premise of meta-learning is that the aforementioned $T$ tasks share related model structures or data distributions. Thus, one can postulate a large model shared across all tasks, along with distinct model parameters $\bfphi_t \in \real^d$ pertaining to each individual task $t$. Given that the cardinality $N_t^\mathrm{trn} := |\data{trn}_t|$ can be much smaller than $d$, learning a task by directly optimizing $\bfphi_t$ over $\data{trn}_t$ could lead to severe overfitting, and it is thus undesirable. Fortunately, since $\sum_{t=1}^T N_t^\mathrm{val}$ can be considerably large, a task-invariant prior can be learned using $\{ \data{val}_t \}_{t=1}^T$ to render per-task learning well posed. Once acquired, this prior can be readily transferred to the new task $\star$ to facilitate its training on $\data{trn}_\star$. 

Letting $\bftheta \in \real^D$ denote the parameter of the prior (aka meta-parameter), the meta-learning objective can be formulated as a bilevel optimization problem. The inner-level (task-level) trains each task-specific model by optimizing $\bfphi_t$ using $\data{trn}_t$ and $\bftheta$ provided by the outer-level (meta-level). The outer-level adjusts $\bftheta$ by evaluating the optimized $\{ \bfphi_t \}_{t=1}^T$ on $\{ \data{val}_t \}_{t=1}^T$. The two levels depend on each other and yield the following nested bilevel objective
\begin{subequations}
\label{eq:metalearning-obj}
\begin{align}
\label{eq:metalearning-obj-meta}
	&\bftheta^* = \argmin_{\bftheta} \; \expect_t \loss{val}_t(\bfphi_t^* (\bftheta)) := \frac{1}{T} \sum_{t=1}^T \loss{val}_t(\bfphi_t^* (\bftheta)) \\
\label{eq:metalearning-obj-task}
\hspace*{-0.3cm} \text{s.t.}\;&\;\bfphi_t^* (\bftheta) = \argmin_{\bfphi_t} \loss{trn}_t(\bfphi_t) + \reg (\bfphi_t; \bftheta),\;\;  t=1,\ldots,T
\end{align}
\end{subequations}
where $\loss{val}_t$ ($\loss{trn}_t$) is the validation (training) loss function, 
and $\reg$ is the regularizer accounting for the task-invariant prior parameterized by $\bftheta$. With $\mathrm{set}$ denoting either $\mathrm{trn}$ or $\mathrm{val}$, we can view $\loss{set}_t (\bfphi_t)$ and $\reg(\bfphi_t; \bftheta)$ as the negative log-likelihood (NLL) $-\log p(\mathbf{y}_t^{\mathrm{set}} | \bfphi_t; \mathbf{X}_t^{\mathrm{set}})$, and the negative log-prior (NLP) $-\log p(\bfphi_t ; \bftheta)$. Here, matrix $\bfX_t^\mathrm{set}$ collects all the data in $\data{set}_t$, and $\mathbf{y}_t^\mathrm{set}$ is the corresponding label vector. Bayes' rule then implies $\bfphi_t^* (\bftheta) = \argmin_{\bfphi_t} - \log p(\bfphi_t | $ $\mathbf{y}_t^{\mathrm{trn}}$; $\mathbf{X}_t^{\mathrm{trn}}, \bftheta)$ is the maximum a posteriori (MAP) estimator. 

\subsection{Past works on meta-learning}
Unfortunately, reaching the global optimum $\bfphi_t^*$ is generally infeasible because the task-specific model can be highly nonlinear wrt $\bfphi_t$. Hence, a prudent remedy is to rely on an approximate solver generated from a tractable optimizer. Depending on  how this solver is designed, meta-learning approaches can be grouped into NN- and optimization-based ones. Methods in the first group rely on an NN optimizer $\hat{\bfphi}_t (\bftheta) = \mathrm{NN}(\data{trn}_t; \bftheta) \approx \bfphi_t^* (\bftheta)$ to model the map from $\data{trn}_t$ to $\bfphi_t^*$, with the sought prior information captured by the NN learnable weights collected in $\bftheta$~\cite{mem-aug,Meta-LSTM,neural-attentive}. Although NNs offer universal approximators for a large family of functions~\cite{univ-approx}, the black box NN structure challenges the interpretability of $\bftheta$. 

To enhance interpretability and robustness of per-task training, optimization-based meta-learning resorts to an iterative optimizer to approximately solve~\eqref{eq:metalearning-obj-task}, where the prior $\bftheta$ is formed by hyperparameters of the optimizer. A representative algorithm is model-agnostic meta-learning (MAML)~\cite{MAML}.  MAML replaces~\eqref{eq:metalearning-obj-task} with a $K$-step GD minimizing the NLL, and forms $\bftheta$ as a common initialization shared across tasks; i.e., 
\begin{subequations}
\label{eq:MAML-obj}
\begin{align}
\label{eq:MAML-obj-meta}
	\bftheta^* &= \argmin_{\bftheta} \;\expect_t \loss{val}_t(\bfphi_t^K (\bftheta)) \\
\label{eq:MAML-obj-task}
	\text{s.t.}\;\;  \bfphi_t^0 (\bftheta) &= \bftheta, \;\; t=1,\ldots,T, \\
	\bfphi_t^{k+1} (\bftheta) &= \bfphi_t^k (\bftheta) - \alpha \nabla \loss{trn}_t(\bfphi_t^k (\bftheta)),\;k=0,\ldots,K-1 \nonumber
\end{align}
\end{subequations}
where $\alpha > 0$ is the training step size. Although MAML intentionally sets $\reg (\bfphi_t; \bftheta) = 0$, it has been shown that under a second-order Taylor approximation, MAML satisfies~\cite{LLAMA}
\begin{equation*}
	\bfphi_t^K (\bftheta) \approx \bfphi_t^* (\bftheta) = \argmin_{\bfphi_t} \loss{}_t (\bfphi_t) + \frac{1}{2} \| \bfphi_t - \bftheta \|_{\mathbf{\Lambda}_t}^2, ~\forall t
\end{equation*}
where the precision matrix $\mathbf{\Lambda}_t$ is determined by $\nabla^2 \loss{}_t (\bftheta)$, $\alpha$, and $K$. This indicates that MAML's optimization strategy~\eqref{eq:MAML-obj-task} is approximately tantamount to an implicit Gaussian prior $p(\bfphi_t; \bftheta) = \mathcal{N} (\bftheta, \mathbf{\Lambda}_t^{-1})$, with the task-invariant initialization $\bftheta$ serving as the mean vector. Alongside implicit priors, their explicit counterparts have also been investigated with various distributions including isotropic Gaussian~\cite{iMAML}, diagonal Gaussian~\cite{ABML}, Laplacian~\cite{meta-prune}, partially degenerate~\cite{R2D2, MetaOptNet}, and data-driven ones~\cite{BMAML, MetaProxNet}. For example,~\cite{iMAML} chooses $r(\bfphi_t; \bftheta) = -\log \mathcal{N} (\bftheta, \lambda^{-1} \bfI_d) = (\lambda / 2) \| \bfphi_t - \bftheta \|_2^2$, and performs task-level optimization via 
\begin{equation}
\label{eq:explicit-prior}
	\bfphi_t^{k+1} (\bftheta) = \bfphi_t^k (\bftheta) - \alpha [\nabla \loss{trn}_t(\bfphi_t^k (\bftheta)) + \nabla_1 r (\bfphi_t^k; \bftheta)],\, \forall k. 
\end{equation}

\subsection{Key challenge in meta-learning}
It is known that GD converges with sublinear rate $\mathcal{O}(1/K)$, when $\loss{trn}_t (\bfphi_t) + r (\bfphi_t; \bftheta)$ is Lipschitz-smooth wrt $\bfphi_t$~\cite{nonlinear-prog}. This necessitates a large $K$ for $\bfphi_t^K$ to approach a stationary point, which is globally optimal if the training objective is also assumed convex. Further, it has been shown that the gradient error between~\eqref{eq:metalearning-obj-meta} and~\eqref{eq:MAML-obj-meta} grows linearly with the convergence error $\expect_t \| \bfphi_t^K - \bfphi_t^* \|_2$~\cite{iMAML, iBaML}. This promotes a sufficiently large $K$ to ensure that~\eqref{eq:MAML-obj} approximates well~\eqref{eq:metalearning-obj}. 

Nevertheless, the overall computation for solving~\eqref{eq:MAML-obj-meta} grows linearly with $K$~\cite{MAML} that can prohibitively increase complexity. Although accelerated optimizers such as Nesterov's algorithm~\cite{Nesterov} can improve the convergence rate of~\eqref{eq:MAML-obj-task} to $\mathcal{O}(1/K^2)$, the constant hidden inside $\mathcal{O}$ could be considerably large, and the Nesterov's momentum would introduce extra backpropagation and thus markedly grow the burden for computing the gradient in~\eqref{eq:MAML-obj-meta}. As a consequence, attention has been placed towards preconditioned (P) GD solvers, as in the following task-level update\footnote{The PGD update is presented with an implicit prior for simplicity, while this approach can be also combined with explicit priors~\eqref{eq:explicit-prior}.}
\begin{equation}
\label{eq:PGD}
	\bfphi_t^{k+1} (\bftheta) = \bfphi_t^k (\bftheta) - \alpha \mathbf{P} (\bftheta_P) \nabla \loss{trn}_t(\bfphi_t^k (\bftheta)),\, \forall k. 
\end{equation}
where $\bfphi_t^0 = \bftheta_\phi,\,\forall t$ is the shared initialization, $\bftheta_P$ parameterizes $\bfP \in \real^{d\times d}$, and $\bftheta := \{ \bftheta_\phi, \bftheta_P \}$. To ensure~\eqref{eq:PGD} incurs affordable complexity, the preconditioner $\mathbf{P}$ must be simple enough so that $\mathbf{P} (\bftheta_P) \nabla \loss{trn}_t (\bfphi_t^k)$ has computational complexity $\mathcal{O}(d)$. Examples of simple preconditioners include diagonal~\cite{MetaSGD, MetaMD, sparse-MAML}, block-diagonal~\cite{MetaCurvature, MT-net}, and sparse~\cite{sparse-MAML} matrices. One straightforward variant is to rely on a matrix $\bfP_t^k (\bfv_t^k; \bftheta_P)$ evolving with the iterations, where $\bfv_t^k$ is a vector encoding the ``context'' of task $t$ at step $k$~\cite{ModGrad,PAMELA}. A more generic preconditioning can be formed by replacing the linear transformation $\mathbf{P} (\bftheta_P) \nabla \loss{trn}_t (\bfphi_t^k)$ with an additional nonlinear NN $f_P (\nabla \loss{trn}_t (\bfphi_t^k); \bftheta_P)$~\cite{MetaKFO}, but unfortunately convergence of the resultant iteration may not be guaranteed. In addition,~\cite{GAP} advocates learning a Riemannian metric by preconditioning the singular values of the learning model's weights. However, this method requires burdensome singular value decomposition (SVD) that is infeasible for large-scale NNs. To enhance the scalability, the singular value preconditioning is further approximated by a diagonal linear preconditioning matrix~\cite{GAP}. 

\begin{figure}[!t]
\begin{center}
	\subfloat[]{\includegraphics[width=.49\columnwidth]{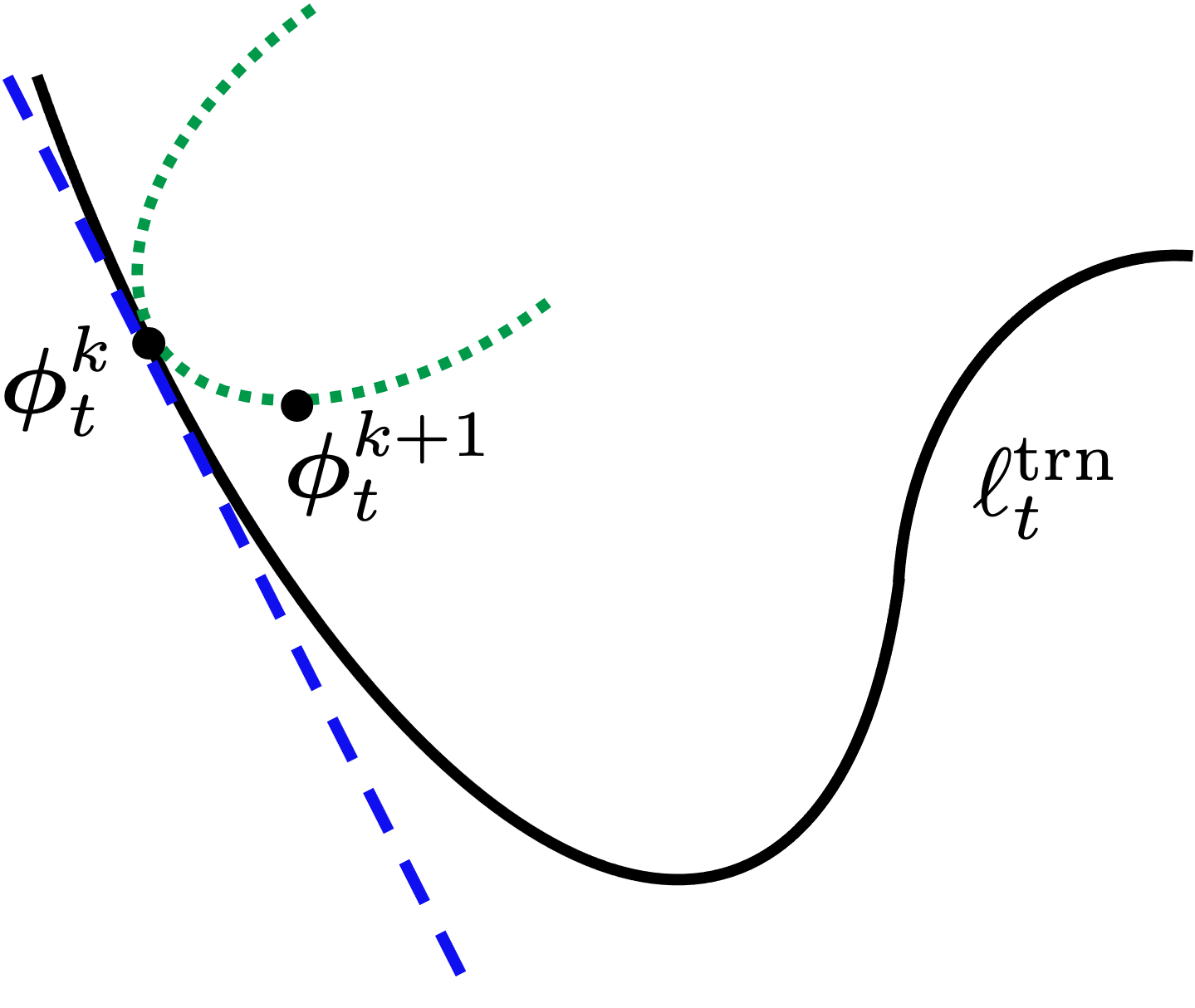}}
	\hfil
	\subfloat[]{\includegraphics[width=.44\columnwidth]{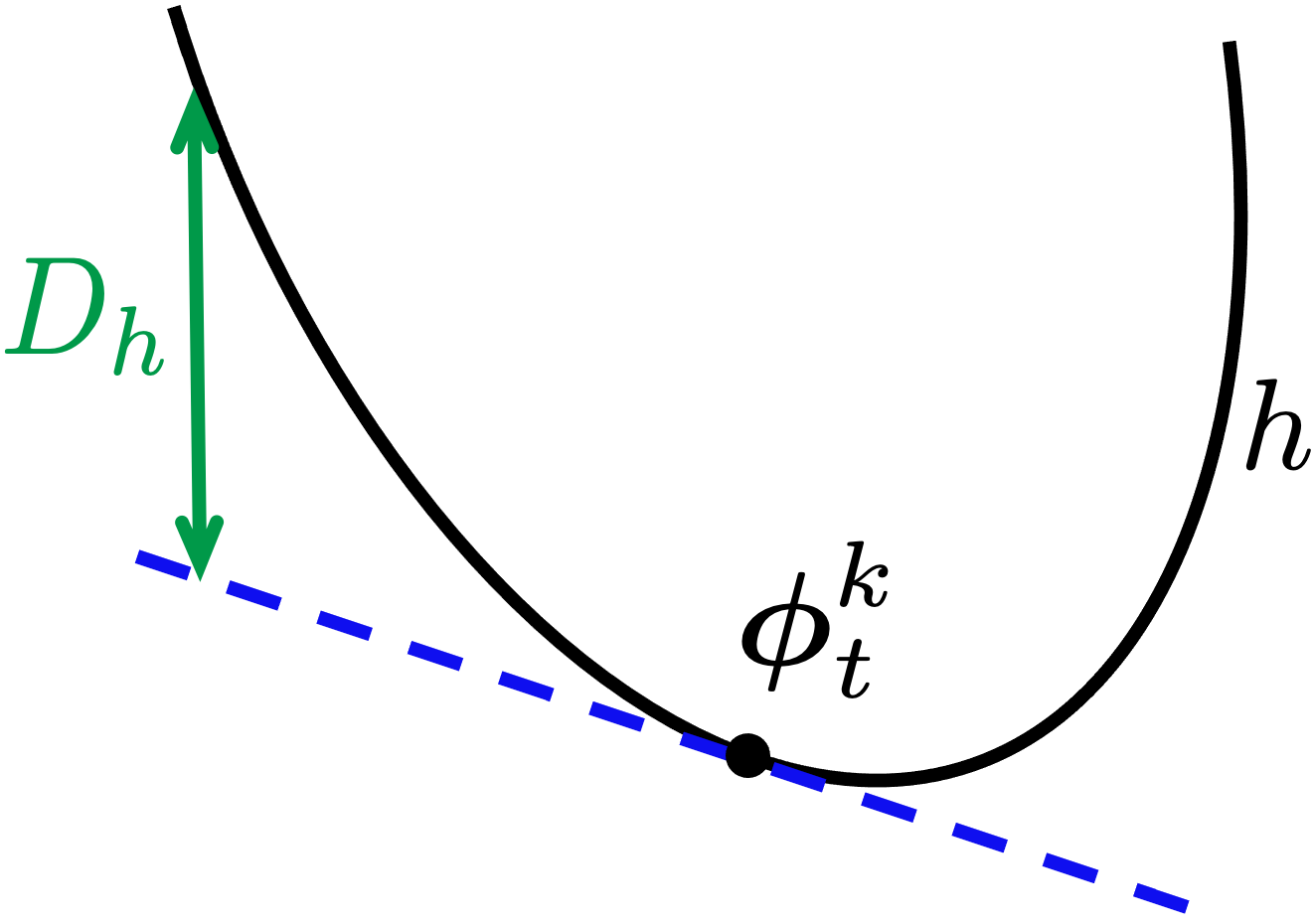}}
\end{center}
\vskip -0.1in
\caption{Illustration of (a) (P)GD, and (b) DGF. The blue dashed line and green dotted lines stand for linearization and quadratic upper bounds.}
\label{fig:PGD-DGF}
\end{figure}

Essentially, GD conducts a pre-step greedy search with a quadratic approximation of the training loss. To see this, letting 
$\lin(\loss{trn}_t,\tilde{\bfphi}_t)(\bfphi_t) := \loss{trn}_t (\tilde{\bfphi}_t) + (\bfphi_t - \tilde{\bfphi}_t)^\top \nabla \loss{trn}_t (\tilde{\bfphi}_t)$ be the linearization of $\loss{trn}_t$ at $\tilde{\bfphi}_t \in \real^d$,~\eqref{eq:MAML-obj-task} reduces to
\begin{equation}
\label{eq:GD-greedy}
	\hspace{-0.05cm}\bfphi_t^{k+1} = \argmin_{\bfphi_t} \lin(\loss{trn}_t, \bfphi_t^k) (\bfphi_t) + \frac{1}{2\alpha} \| \bfphi_t - \bfphi_t^k \|_2^2, \;\forall k
\end{equation}
where dependencies on $\bftheta$ are dropped hereafter for notational brevity; see also Figure~\ref{fig:PGD-DGF}(a) for an illustration. The term $\frac{1}{2\alpha} \| \bfphi_t - \bfphi_t^k \|_2^2$ implies the curvature approximation $\nabla^2 \loss{}_t (\bfphi_t^k) \approx \frac{1}{\alpha} \mathbf{I}_d$, while~\eqref{eq:PGD} refines this isotropic term to a more general quadratic term $\| \bfphi_t - \bfphi_t^k \|_{\bfP}^2 := \frac{1}{2\alpha} (\bfphi_t - \bfphi_t^k)^\top \mathbf{P}^{-1} (\bfphi_t - \bfphi_t^k)$. It is worth noting that when $\bfP$ is singular, one can replace $\bfP^{-1}$ with the pseudo-inverse $\bfP^\dagger$, and also expand $\argmin$ in~\eqref{eq:GD-greedy} to a set containing $\bfphi_t^{k+1}$. 

Since the quadratic term is defined to be symmetric wrt $\bfphi_t^k$, and has a constant Hessian matrix, the approximation can be significantly biased on regions where $\| \nabla^2 \loss{trn}_t (\bfphi_t) - \bfP^{-1} \|_2$ is large. Thus, (P)GD is particularly effective when $K$ is large and $\alpha$ is small, which gradually ameliorates $\bfphi_t^k$ to a stationary point. In meta-learning however, the standard setup requires a small $K$ (e.g., $1$ or $5$) and a sufficiently large $\alpha$, so that the model can quickly adapt to the task with low complexity. This discrepancy highlights the need for learning more expressive loss geometries. 

\section{Learning loss geometries with MetaMiDA}
Instead of relying on quadratic approximations of the local loss induced by certain norms (e.g., $\| \cdot \|_2$ and $\| \cdot \|_{\mathbf{P}}$), our fresh idea is to learn a data-driven distance metric that captures a broader spectrum of loss geometries. This will be accomplished by learning the so-termed distance generating function (DGF), which is introduced first. All the proofs in this section are delegated to the Appendices. 

\subsection{Distance metric reflecting loss geometry prior}
Leveraging the idea of mirror descent~\cite{MDA}, one can replace $\frac{1}{2} \| \bfphi_t - \bfphi_t^k \|_2^2$ in~\eqref{eq:GD-greedy} with a generic metric $D_h$ to arrive at
\begin{equation}
\label{eq:MDA-greedy}
	\bfphi_t^{k+1} = \argmin_{\bfphi_t} \lin(\loss{trn}_t, \bfphi_t^k) (\bfphi_t) + \frac{1}{\alpha} D_h (\bfphi_t, \bfphi_t^k),\;\forall k
\end{equation}
where $D_h (\bfphi_t, \bfphi_t^k) := h(\bfphi_t) - \lin (h(\bfphi_t), \bfphi_t^k)$ is the Bregman divergence depicted in Figure~\ref{fig:PGD-DGF}(b), and the associated DGF $h: \real^d \mapsto \real$ is strongly convex to ensure the existence and uniqueness of the minimizer. Then, applying the stationary point condition leads to the mirror descent update
\begin{equation}
\label{eq:MDA}
	\bfphi_t^{k+1} = \nabla h^* \big( \nabla h(\bfphi_t^k) - \alpha \nabla \loss{trn}_t (\bfphi_t^k) \big),\;\forall k
\end{equation}
where $h^* (\mathbf{z}) := \sup_{\bfphi} \bfphi^\top \mathbf{z} - h(\bfphi)$ is the Fenchel conjugate of $h$. Since $h$ is strongly convex, it holds that $h^*: \real^d \mapsto \real$ is convex and Lipschitz-smooth~\cite{MDA}. As a special case, with  $h(\cdot) = \frac{1}{2}\| \cdot \|_2^2$, it is easy to verify that~\eqref{eq:MDA} boils down to~\eqref{eq:GD-greedy} due to the self-duality of $\| \cdot \|_2$. Likewise,~\eqref{eq:PGD} can be recovered with $h(\cdot) = \frac{1}{2}\| \cdot \|_{\mathbf{P}}^2$, where $\nabla h$ reduces to a linear mapping. 

Function $h$ reflects our prior knowledge about the geometry of $\loss{trn}_t$. Upon  setting $h = \loss{trn}_t$ (though $\loss{trn}_t$ may not be strongly convex) in~\eqref{eq:MDA-greedy} gives $\bfphi_t^{k+1} = \argmin_{\bfphi_t} \loss{trn}_t (\bfphi_t)$, which recovers the original NLL minimization solved in~\eqref{eq:MAML-obj-task}. Thus, an ideal choice of $h$ would yield $h \approx \loss{trn}_t$ (up to a constant) within a sufficiently large region around $\bfphi_t^{k}$. If $h$ is properly chosen for the geometry of the optimization landscape, mirror descent can outperform GD markedly in either convergence rate or constant factor~\cite{MD-cvx}. 

Different from past works with \emph{preselected} $h$ (e.g., $\| \cdot \|_2$ and $\| \cdot \|_{\mathbf{P}}$), we are here after a \emph{data-driven} $h$ that best fits the given tasks. Toward this end, we reformulate~\eqref{eq:MDA} to yield an update of the dual vector $\bfz_t := \nabla h(\bfphi_t)$ as
\begin{equation}
\label{eq:MDA-dual}
	\bfz_t^{k+1} = \bfz_t^k - \alpha \nabla \loss{trn}_t ( \nabla h^* (\bfz_t^k) ), \;\forall k
\end{equation}
where the primal vector is recovered via $\bfphi_t^K = \nabla h^* (\mathbf{z}_t^K)$. It can be observed that the dual update~\eqref{eq:MDA-dual} removes the explicit computation of $\nabla h$, requiring merely the calculation of $\nabla h^*$. Hence, instead of learning $\bftheta_\phi$ and $h$ in the primal space, we learn the dual initialization $\bfz_t^0 = \bftheta_z := \nabla h(\bftheta_\phi),\,\forall t$ along with the conjugate $h^*$. 

\subsection{Modeling versatile \texorpdfstring{$h^*$}{h*} via an expressive NN}
\label{sec:h*}
To ensure that the data-driven $h^*$ yields a strongly convex $h$, it suffices to maintain its convexity and Lipschitz-smoothness. The following theorem asserts that such an $h^*$ can be formed by slightly modifying a multi-layer NN. 

\begin{theorem}[Convex and Lipschitz-smooth NN mapping]
\label{thm:cvx-Lip-NN}
Let $h^*(\mathbf{z}; \bftheta_h)$ be an $I$-layer NN, with per-layer neurons $\{ \bfa_i \}_{i=1}^I$, and parameter $\bftheta_h := \{ \bfW_i, \bfM_i, \bfb_i \}_{i=1}^I$ satisfying
\begin{subequations}
\label{eq:NN-h*}
\begin{align}
\label{eq:NN-h*-forward}
	\bfa_i &= \sigma(\bfW_i^\top \bfa_{i-1} + \bfM_i^\top \bfz + \bfb_i),\;i=1,\ldots,I, \\
\label{eq:NN-h*-inout}
	\bfa_0 &= \mathbf{z}, \;h^*(\bfz; \bftheta_h) := \bfa_I. 
\end{align}	
\end{subequations}
Then $h^*(\mathbf{z}; \bftheta_h)$ is convex and Lipschitz-smooth in norm $\| \cdot \|$ wrt $\bfz$ provided: i) Element-wise activation $\sigma: \real \mapsto \real$ is convex, Lipschitz-continuous, Lipschitz-smooth, and non-decreasing; ii) $\{ \bfW_i \}_{i=1}^I$ are bounded wrt the dual norm $\| \cdot \|_*$ and element-wise non-negative; and iii) $\{ \bfM_i \}_{i=1}^I$ are bounded wrt $\| \cdot \|_*$. 
\end{theorem}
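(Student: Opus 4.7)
The plan is to handle the two assertions separately by induction on the layer index $i$, following the input-convex neural network blueprint for the convexity half and a chain-rule bookkeeping for the smoothness half.

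For convexity, I would proceed by showing inductively that every component of $\bfa_i$ is a convex function of $\bfz$. The base case $\bfa_0 = \bfz$ is trivial since each component is linear. For the inductive step, set $\bfg_i(\bfz) := \bfW_i^\top \bfa_{i-1}(\bfz) + \bfM_i^\top \bfz + \bfb_i$. Since the entries of $\bfW_i$ are nonnegative and each component of $\bfa_{i-1}$ is convex in $\bfz$ by hypothesis, each coordinate of $\bfW_i^\top \bfa_{i-1}$ is a nonnegative combination of convex functions, hence convex; the affine term $\bfM_i^\top \bfz + \bfb_i$ preserves convexity. Finally, $\sigma$ is convex and nondecreasing, so the componentwise composition $\sigma(\bfg_i(\bfz))$ is convex in $\bfz$. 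Taking $i = I$ (and noting that $h^*$ is scalar-valued) yields convexity of $h^*$.

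For Lipschitz-smoothness in $\|\cdot\|$, I would carry through a joint induction tracking two quantities at each layer: a Lipschitz constant $L_i$ for $\bfa_i(\bfz)$ and a smoothness constant $S_i$ for $\bfa_i(\bfz)$, both measured with the operator norms induced by $\|\cdot\|$ on inputs and $\|\cdot\|_*$ on (co)derivatives. The base case gives $L_0 = 1$, $S_0 = 0$. For the inductive step, write $\bfa_i = \sigma \circ \bfg_i$. The pre-activation $\bfg_i$ is Lipschitz with constant at most $\|\bfW_i\|_* L_{i-1} + \|\bfM_i\|_*$ and smooth with constant at most $\|\bfW_i\|_* S_{i-1}$ (the affine piece contributes zero Hessian); these are finite by hypotheses (ii) and (iii). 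Applying the elementary composition rule -- if $f$ is $L_f$-Lipschitz and $S_f$-smooth and $g$ is $L_g$-Lipschitz and $S_g$-smooth, then $f \circ g$ is $(L_f L_g)$-Lipschitz and $(S_f L_g^2 + L_f S_g)$-smooth, a consequence of the first- and second-order chain rules -- together with the Lipschitz continuity and Lipschitz-smoothness of $\sigma$ from hypothesis (i), produces closed-form bounds for $L_i$ and $S_i$. Iterating up to $i = I$ gives a finite smoothness constant for $h^*$.

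The delicate part is the smoothness induction: convexity is a one-line ICNN observation, but smoothness requires that Lipschitz constants do not blow up, and this is precisely why hypothesis (i) asks for \emph{both} Lipschitz-continuity and Lipschitz-smoothness of $\sigma$, and why hypotheses (ii)--(iii) demand \emph{bounded} dual norms of $\bfW_i$ and $\bfM_i$. A subtlety I would be careful about is the interplay between the norm $\|\cdot\|$ used on the input $\bfz$ and the dual norm $\|\cdot\|_*$ used for the weights and gradients: I would state the induction with the operator norm of each layer's Jacobian measured as a map from $(\real^d, \|\cdot\|)$ to $(\real^{d_i}, \|\cdot\|_*)$ so that $\|\bfW_i\|_*$ and $\|\bfM_i\|_*$ plug in directly, and I would verify that the chain-rule bound for $\nabla^2(\sigma \circ \bfg_i)$ is compatible with that choice. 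Modulo that bookkeeping, everything reduces to the composition rule above and the assumed boundedness constants.
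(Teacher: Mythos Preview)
Your proposal is correct and follows essentially the same route as the paper's proof in Appendix~A: both arguments establish convexity via the standard input-convex-NN induction (nonnegative $\bfW_i$ plus convex nondecreasing $\sigma$), and both establish Lipschitz-smoothness by inducting through the layers while simultaneously tracking a Lipschitz-continuity constant and a Lipschitz-smoothness constant for $\bfa_i$. The only cosmetic differences are that the paper inducts on the total depth $I$ rather than the layer index $i$, and it carries out the gradient-difference computation explicitly (arriving at $G_{h,I_0+1} = W L_\sigma G_{h,I_0} + G_\sigma (W L_{h,I_0} + M)^2$) rather than invoking the abstract $(S_f L_g^2 + L_f S_g)$ composition rule you cite; unpacking your rule with $f=\sigma$ and $g=\bfg_i$ reproduces exactly the paper's recursion.
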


The term $\bfM_i^\top \bfz$ in~\eqref{eq:NN-h*-forward} implements a skip connection from the input to $i$-th layer, which improves NN expressiveness. The following remark demonstrates how to construct such an NN via proper parameterization. 

\begin{remark}[Valid NN parametrization]
\label{remark:valid-NN}
The three conditions under Theorem~\ref{thm:cvx-Lip-NN} are mild, and they can be readily satisfied. For instance, choices of $\sigma$ obeying i) include ELU~\cite{ELU} and Softplus $\sigma(\cdot)=\log(1+\exp(\cdot))$. In addition, conditions ii) and iii) can be attained via parameterization $\bfW_i := \sigma_W(\check{\bfW}_i)$ and $\bfM_i  := \sigma_M(\check{\bfM}_i)$, where $\sigma_W$ is non-negative and bounded (e.g., sigmoid), $\sigma_M$ is bounded (e.g., $\tanh$), and parameter of $h^*$ is defined as $\bftheta_h := \{ \check{\bfW}_i, \check{\bfM}_i, \bfb_i \}_{i=1}^I$. 
\end{remark}

Using these choices, the next corollary asserts that the resultant NN has a universal Lipschitz-smoothness constant wrt $\bfz$ regardless of $\bftheta_h$. 

\begin{corollary}[Universal Lipschitz-smoothness]
\label{cor:univ-Lip}
Suppose conditions in Theorem~\ref{thm:cvx-Lip-NN} hold, with weight bounds in conditions ii) and iii) fixed. For any $\bftheta_h$, there exists a universal Lipschitz-smoothness constant for $h^*$. 
\end{corollary}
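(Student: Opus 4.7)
The plan is to induct on the layer index $i$ and show that each intermediate representation $\bfa_i(\bfz)$ is both Lipschitz continuous and Lipschitz-smooth in $\bfz$, with constants that depend only on the fixed weight bounds, the fixed Lipschitz and smoothness constants of $\sigma$, and the depth $I$, but \emph{not} on the particular $\bftheta_h$. Since $h^*(\bfz;\bftheta_h)=\bfa_I(\bfz)$ by definition, propagating such constants to the last layer immediately yields the claim.

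First I would set up the inductive step at layer $i$. Write the pre-activation $\bfu_i(\bfz):=\bfW_i^\top \bfa_{i-1}(\bfz)+\bfM_i^\top \bfz+\bfb_i$, so that $\bfa_i(\bfz)=\sigma(\bfu_i(\bfz))$. Applying the chain rule gives $\nabla_{\bfz}\bfa_i=(\nabla_{\bfz}\bfu_i)\cdot\mathrm{diag}(\sigma'(\bfu_i))$, with $\nabla_{\bfz}\bfu_i=\bfM_i+(\nabla_{\bfz}\bfa_{i-1})\bfW_i$. Condition (i) gives a uniform bound on $|\sigma'|$ (Lipschitz continuity of $\sigma$) and on the Lipschitz constant of $\sigma'$ (Lipschitz smoothness of $\sigma$). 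Conditions (ii)--(iii) give uniform bounds on $\|\bfW_i\|_*$ and $\|\bfM_i\|_*$ that are prescribed and fixed. Hence a straightforward induction bounds $\|\nabla_{\bfz}\bfa_i\|_*$ by a constant $L_i$ expressed purely in terms of these fixed quantities; the base case $\bfa_0=\bfz$ is trivial.

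Next I would upgrade Lipschitz continuity to Lipschitz smoothness. Differentiating a second time, $\nabla_{\bfz}^2 \bfa_i$ picks up contributions of the form $(\nabla_{\bfz}\bfu_i)\cdot\mathrm{diag}(\sigma''(\bfu_i))\cdot(\nabla_{\bfz}\bfu_i)^\top$ plus $\nabla_{\bfz}^2\bfu_i\cdot\mathrm{diag}(\sigma'(\bfu_i))$, and an analogous recursion $\nabla_{\bfz}^2\bfu_i=(\nabla_{\bfz}^2 \bfa_{i-1})\bfW_i$. Using the already-established bound $L_{i-1}$, the fixed weight-norm bounds, and the uniform bounds on $|\sigma'|$ and the Lipschitz constant of $\sigma'$ (which controls $|\sigma''|$ in an integrated sense sufficient for smoothness), I would derive a constant $S_i$ bounding the Lipschitz constant of $\nabla_{\bfz}\bfa_i$ measured in $\|\cdot\|_*$. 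Again the recursion for $S_i$ involves only fixed quantities, so $S_I$ is a universal constant, which is exactly the Lipschitz-smoothness constant for $h^*$.

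The main obstacle will be bookkeeping the norm/dual-norm interplay consistently: the weight bounds are stated in the dual norm $\|\cdot\|_*$ (since $\bfM_i^\top \bfz$ must define a bounded linear form on inputs measured in $\|\cdot\|$), whereas the output constant must be read as Lipschitz-smoothness of $\nabla h^*$ in the primal sense used by Theorem~\ref{thm:cvx-Lip-NN}. I would resolve this by keeping intermediate derivatives in operator norms between the appropriate primal and dual spaces at each layer, invoking standard duality identities $\|\bfW_i\|=\|\bfW_i^\top\|_*$, and verifying at the last layer that the scalar-valued output yields a dual-vector gradient whose Lipschitz constant is controlled in $\|\cdot\|$. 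Once that bookkeeping is in place, collecting the recursion for $S_I$ gives the universal constant, independent of $\bftheta_h$.
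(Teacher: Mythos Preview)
Your proposal is correct and follows essentially the same inductive strategy as the paper: both argue layer by layer, propagating Lipschitz-continuity and Lipschitz-smoothness constants for the intermediate representations that depend only on the fixed bounds $W,M$ and the constants $L_\sigma,G_\sigma$ of $\sigma$, never on the particular $\bftheta_h$. The one technical difference worth flagging is that the paper avoids writing $\nabla_\bfz^2\bfa_i$ and invoking $\sigma''$; instead it works directly with differences $\|\nabla_1 h^*(\bfz)-\nabla_1 h^*(\tilde\bfz)\|_*$ and bounds them by a telescoping argument using only the Lipschitz property of $\sigma'$, which is exactly what condition~(i) provides (Lipschitz-smooth, not $C^2$). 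You anticipate this issue with your ``in an integrated sense'' remark, but the cleaner route is to bypass the second derivative altogether.
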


In fact, $h^*$ in Theorem~\ref{thm:cvx-Lip-NN} can be proved also Lipschitz-continuous wrt $\bfz$; see Appendix~\ref{app:proof-prop-corollary}. To enable modeling Lipschitz-smooth yet not Lipschitz-continuous functions, one can add an additional quadratic term to the output of the NN; that is, $h^*(\bfz;\bftheta_h) := \bfa_I (\bfz; \bftheta_h) + \frac{1}{2} \bfz^\top \bfP (\bftheta_P) \bfz$, where $\bfP \succeq 0$ and $\| \bfP \|$ is bounded, and $\bftheta_h$ is augmented with $\bftheta_P$. This not only satisfies the desired convexity and Lipschitz-smoothness requirements, but also enhances the expressiveness of $h^*$ by explicitly incorporating the quadratic loss curvature model used in PGD. 

\subsection{Algorithm design with guaranteed convergence}
Having modeled $h^* (\cdot; \bftheta_h)$ with an NN, here we develop the proposed approach dubbed meta-learning with MiDA (MetaMiDA). Utilizing the dual update~\eqref{eq:MDA-dual} and $h^*$ provided by Theorem~\ref{thm:cvx-Lip-NN}, our optimization objective is
\begin{subequations}
\label{eq:MetaMDA-obj}
\begin{align}
\label{eq:MetaMDA-obj-meta}
	\min_{\bftheta_z, \bftheta_h} \;&\expect_t\metaloss_t (\bftheta_z, \bftheta_h) := \expect_t \loss{val}_t(\nabla_1 h^*(\bfz_t^K (\bftheta_z, \bftheta_h); \bftheta_h)) \\
\label{eq:MetaMDA-obj-task}
	\text{s.t.}\;\; &\bfz_t^0 (\bftheta_z) = \bftheta_z, \;  t=1,\ldots,T, \\
	& \bfz_t^{k+1} (\bftheta_z, \bftheta_h) = \bfz_t^k (\bftheta_z, \bftheta_h) - \nonumber \\
	&\quad \alpha \nabla \loss{trn}_t(\nabla_1 h^* (\bfz_t^k (\bftheta_z, \bftheta_h); \bftheta_h)), \, k=0,\ldots,K-1. \nonumber
\end{align}
\end{subequations}
This can be solved using the standard alternating optimizer, where~\eqref{eq:MetaMDA-obj-meta} is optimized via mini-batch stochastic GD (SGD). The pseudo-codes of the resultant MetaMiDA approach are summarized in Algorithm~\ref{alg:MetaMiDA}.

\begin{algorithm}[t]
\caption{MetaMiDA}
\label{alg:MetaMiDA}
\begin{algorithmic}[1]
\REQUIRE datasets $\{ \data{}_t \}_{t=1}^T$, maximum iterations $K$ and $R$, step sizes $\alpha$ and $\{ (\beta_1^r, \beta_2^r) \}_{r=1}^R$, and batch size $B$. 
\STATE Randomly initialize $\bftheta^0 := \{ \bftheta_z^0, \bftheta_h^0 \}$.
\FOR{$r=0,\ldots,R-1$}
	\STATE Randomly sample $\batch^r \subset \{ 1,2,\ldots,T \}$ with $|\batch^r| = B$. 
	\FOR{$t \in \batch^r$}
		\STATE Initialize $\bfz_t^0 (\bftheta^r) = \bftheta_z^r$. 
		\FOR{$k=0,\ldots,K-1$}
			\STATE Map $\bfphi_t^k (\bftheta^r) = \nabla_1 h^*(\bfz_t^k (\bftheta^r); \bftheta_h^r)$.
			\STATE Descend $\bfz_t^{k+1} (\bftheta^r) = \bfz_t^k (\bftheta^r) - \alpha \nabla \loss{trn}_t(\bfphi_t^k (\bftheta^r))$.
		\ENDFOR
		\STATE Map $\bfphi_t^K (\bftheta^r) = \nabla_1 h^*(\bfz_t^K (\bftheta^r); \bftheta_h^r)$. 
	\ENDFOR
	\STATE Update $\bftheta_z^{r+1} = \bftheta_z^r - \frac{\beta_1^r}{B} \sum_{t \in \batch^r} \nabla_1 \metaloss_t (\bftheta^r)$, \\ \hspace{.45cm} and $\bftheta_h^{r+1} = \bftheta_h^r - \frac{\beta_2^r}{B} \sum_{t \in \batch^r} \nabla_2 \metaloss_t (\bftheta^r)$. 
\ENDFOR
\ENSURE $\bftheta^R = \{ \bftheta_z^R, \bftheta_h^R \}$
\end{algorithmic}
\end{algorithm}

Next, we analyze the convergence of Algorithm~\ref{alg:MetaMiDA}. While existing convergence results in meta-learning focus exclusively on GD-based approaches~\cite{converge-onestep-MAML,converge-multistep-MAML,converge-partial-param}, our approach here pertains to the broader family of MiDA-based methods, which also includes all the PGD-based ones. Comparing to~\eqref{eq:MAML-obj}, our objective~\eqref{eq:MetaMDA-obj} contains an extra parameter $\bftheta_h$, which thus complicates the analysis. As the optimization of~\eqref{eq:MetaMDA-obj-meta} is based on SGD, the analysis will apply Theorem~\ref{thm:cvx-Lip-NN} with $\| \cdot \|_2$ norm. For notational simplicity, let $\loss{set} := \expect_t [\loss{set}_t]$ denote the expected NLL with $\mathrm{set} \in \{ \mathrm{trn}, \mathrm{val} \}$, and meta-loss $\metaloss := \expect_t \metaloss_t$. Define $\bftheta := \{ \bftheta_z, \bftheta_h \}$, $h^*(\bftheta) := h^*(\bftheta_z; \bftheta_h)$, $\nabla_1 h^* (\bftheta) := \nabla_1 h^*(\bftheta_z; \bftheta_h)$, and $\nabla_2 h^* (\bftheta) := \nabla_2 h^*(\bftheta_z; \bftheta_h)$. Likewise, define $\metaloss (\bftheta) := \metaloss(\bftheta_z, \bftheta_h)$, and $\nabla_1 \metaloss (\bftheta), \nabla_2 \metaloss (\bftheta)$ respectively the partial derivatives wrt $\bftheta_z, \bftheta_h$. Our analysis is built on top of the following three technical assumptions. 

\begin{assumption}[Loss functions]
\label{as:loss-fn}
	For each $t = 1,\ldots,T$ and $\forall \bfphi, \tilde{\bfphi} \in \real^d$, loss function $\loss{set}_t, \, \mathrm{set} \in \{ \mathrm{trn}, \mathrm{val} \}$ has \\
	i) finite lower bound $\loss{set}_t (\bfphi) > -\infty$; \\
	ii) Lipschitz-continuous gradient $\| \nabla \loss{set}_t (\bfphi) - \nabla \loss{set}_t (\tilde{\bfphi}) \|_2 \allowbreak \le G_\loss{} \| \bfphi - \tilde{\bfphi} \|_2$; \\
	iii) Lipschitz-continuous Hessian $\| \nabla^2 \loss{set}_t (\bfphi) - \nabla^2 \loss{set}_t (\tilde{\bfphi}) \|_2 \le H_\loss{} \| \bfphi - \tilde{\bfphi} \|_2$; and\\
	iv) Lipschitz-smooth composition $\| \nabla (\loss{val}_t \circ \nabla_1 h^*) (\bftheta) - \nabla (\loss{val}_t \circ \nabla_1 h^*) (\tilde{\bftheta}) \|_2 \le G_{\loss{}h} \| \bftheta - \tilde{\bftheta} \|_2, \;\forall \bftheta, \tilde{\bftheta} \in \real^D$. 
\end{assumption}

\begin{assumption}[Bounded variance]
\label{as:stat-property}
For each $t=1,\ldots,T$ and $\forall \bfphi \in \real^d$, it holds that $\expect_t \| \nabla \loss{val}_t (\bfphi) - \nabla \loss{val} (\bfphi) \|_2^2 \le \sigma^2$. 
\end{assumption}

\begin{assumption}[Inverse mirror map]
\label{as:inv-mirror-map}
For $\forall \bftheta, \tilde{\bftheta} \in \real^D$, $h^*$ has Lipschitz-continuous \\
i) partial gradient $\| \nabla_1 h^* (\bftheta) - \nabla_1 h^* (\tilde{\bftheta}) \|_2 \le G_{h} \| \bftheta - \tilde{\bftheta} \|_2$; and \\
ii) mixed Jacobian $\| \nabla \nabla_1 h^* (\bftheta) - \nabla \nabla_1 h^* (\tilde{\bftheta}) \|_2 \le H_{h} \| \bftheta - \tilde{\bftheta} \|_2$.
\end{assumption}

\begin{remark}[Mild assumptions]
Assumptions~\ref{as:loss-fn}-\ref{as:inv-mirror-map} are mild and common in not only meta-learning~\cite{converge-onestep-MAML,converge-multistep-MAML,converge-partial-param} but also generic bilevel optimization~\cite{bilevel-converge-design,bilevel-tighter,TTSA}. Assumptions~\ref{as:loss-fn} and~\ref{as:inv-mirror-map} merely assume Lipschitz-continuity of the gradients and Hessians of $\loss{set}_t$ and $h^*$, without any premise on their own Lipschitz-continuity or the convexity of $\loss{set}_t$. It is also noteworthy that the Lipschitz-continuity of $\nabla_1 h^* (\bftheta)$ wrt $\bftheta_z$ has already been proved in Corollary~\ref{cor:univ-Lip}. As $\| \nabla_1 h^* (\bftheta) - \nabla_1 h^* (\tilde{\bftheta}) \|_2 \le \| \nabla_1 h^* (\bftheta) - \nabla_1 h^* (\tilde{\bftheta}_z; \bftheta_h) \|_2 + \| \nabla_1 h^* (\tilde{\bftheta}_z; \bftheta_h) - \nabla_1 h^* (\tilde{\bftheta}) \|_2$, a sufficient condition for Assumption~\ref{as:inv-mirror-map}.i) to hold is the Lipschitz-continuity of $\nabla_1 h^* (\bftheta)$ wrt $\bftheta_h$, which is also mild. Assumption~\ref{as:stat-property} is standard in stochastic optimization for analyzing convergence of SGD. 
\end{remark}

The key challenge in the convergence analysis is the unbounded smoothness of $\metaloss$. This property is characterized in the next proposition. 

\begin{proposition}[Unbounded smoothness of meta-loss]
\label{prop:meta-smooth}
Suppose Assumptions~\ref{as:loss-fn} and~\ref{as:inv-mirror-map} hold. Define constants $\gamma := 1 + \alpha G_h G_\loss{}$, $C_{G,1} := \gamma^K (\gamma^K - 1) (G_h\frac{H_\loss{}}{G_\loss{}} + \frac{H_h}{G_h})$, $C_{G, 2} := \gamma^{K-1} [(\gamma - 1) (G_h \frac{H_\loss{}}{G_\loss{}} + \frac{H_h}{G_h}) K + (\gamma^K - 1 - \alpha K G_h G_\loss{}) (G_h\frac{H_\loss{}}{G_\loss{}} + \frac{H_h}{G_h})]$, and vector $\bfg_t^K := \nabla_1 (\loss{val}_t \circ \nabla_1 h^*) (\bfz_t^K; \bftheta_h)$. It holds that
\begin{subequations}
\label{eq:meta-smooth}
\begin{align}
\label{eq:meta-smooth-z}
	\| \nabla_1 \metaloss (\bftheta) - \nabla_1 \metaloss (\tilde{\bftheta}) \|_2 &\le G_{\metaloss,1} \| \bftheta - \tilde{\bftheta} \|_2, \\
\label{eq:meta-smooth-h}
	\| \nabla_2 \metaloss (\bftheta) - \nabla_2 \metaloss (\tilde{\bftheta}) \|_2 &\le G_{\metaloss,2} \| \bftheta - \tilde{\bftheta} \|_2
\end{align}
\end{subequations}
where $G_{\metaloss,j} := C_{G,j} \expect_t \| \bfg_t^K \|_2 + \gamma^{2K} G_{\loss{}h}, \; j = 1,2$. 
\end{proposition}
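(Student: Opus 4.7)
The plan is to unroll the $K$ dual updates, write both partial gradients of $\metaloss_t$ as explicit products (or sums of products) of one-step Jacobians, and then control each factor's Lipschitz continuity separately. Concretely, define $A_k := \bfI - \alpha \nabla^2 \loss{trn}_t(\nabla_1 h^*(\bfz_t^k; \bftheta_h))\,\nabla_1\nabla_1 h^*(\bfz_t^k; \bftheta_h)$ and $B_k := -\alpha \nabla^2 \loss{trn}_t(\nabla_1 h^*(\bfz_t^k; \bftheta_h))\,\nabla_2\nabla_1 h^*(\bfz_t^k; \bftheta_h)$, so that the recursive chain rule applied to~\eqref{eq:MetaMDA-obj-task} yields $\nabla_1 \metaloss_t(\bftheta) = \big(\prod_{k=0}^{K-1} A_k\big)^{\!\top} \bfg_t^K$ and $\nabla_2 \metaloss_t(\bftheta) = \nabla_2(\loss{val}_t \circ \nabla_1 h^*)(\bfz_t^K, \bftheta_h) + \big(\sum_{k=0}^{K-1} \prod_{j=k+1}^{K-1} A_j\, B_k\big)^{\!\top} \bfg_t^K$. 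Operator-norm bounds follow immediately from Assumptions~\ref{as:loss-fn}.ii and~\ref{as:inv-mirror-map}.i: $\|A_k\|_2 \le \gamma$ and $\|B_k\|_2 \le \gamma - 1$.

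I would then establish three auxiliary estimates that power the entire argument. First, perturbing a single dual step and iterating by a discrete Gronwall argument gives $\|\bfz_t^k(\bftheta) - \bfz_t^k(\tilde{\bftheta})\|_2 \le \gamma^k \|\bftheta_z - \tilde{\bftheta}_z\|_2 + (\gamma^k - 1)\|\bftheta_h - \tilde{\bftheta}_h\|_2 \le \gamma^k \|\bftheta - \tilde{\bftheta}\|_2$. Second, combining the Hessian-Lipschitz hypotheses (Assumptions~\ref{as:loss-fn}.iii and~\ref{as:inv-mirror-map}.ii) with the gradient bounds (Assumptions~\ref{as:loss-fn}.ii and~\ref{as:inv-mirror-map}.i) through the product rule shows $\|A_k - \tilde{A}_k\|_2, \|B_k - \tilde{B}_k\|_2 \le (\gamma - 1)\,L\,\|(\bfz_t^k, \bftheta_h) - (\tilde{\bfz}_t^k, \tilde{\bftheta}_h)\|_2 \le (\gamma - 1)\,L\,\gamma^k \|\bftheta - \tilde{\bftheta}\|_2$, where $L := G_h H_\loss{}/G_\loss{} + H_h/G_h$ is exactly the bracketed factor in $C_{G,1}$ and $C_{G,2}$. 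Third, Assumption~\ref{as:loss-fn}.iv chained with the previous Gronwall estimate yields $\|\bfg_t^K(\bftheta) - \bfg_t^K(\tilde{\bftheta})\|_2 \le G_{\loss{}h}\,\gamma^K \|\bftheta - \tilde{\bftheta}\|_2$.

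With these in hand,~\eqref{eq:meta-smooth-z} follows from the add-and-subtract decomposition $\nabla_1 \metaloss_t(\bftheta) - \nabla_1 \metaloss_t(\tilde{\bftheta}) = \big[\prod A_k - \prod \tilde{A}_k\big]^{\!\top} \bfg_t^K(\bftheta) + \big(\prod \tilde{A}_k\big)^{\!\top}\big[\bfg_t^K(\bftheta) - \bfg_t^K(\tilde{\bftheta})\big]$. The product-difference identity $\prod A - \prod \tilde{A} = \sum_{k=0}^{K-1} A_0\cdots A_{k-1}\,(A_k - \tilde{A}_k)\,\tilde{A}_{k+1}\cdots \tilde{A}_{K-1}$ combined with the operator-norm bound $\gamma$ and the single-step Lipschitz estimate produces, after summing a geometric series in $\gamma$, the coefficient $C_{G,1}$ multiplying $\|\bfg_t^K\|_2\|\bftheta - \tilde{\bftheta}\|_2$; the second piece is immediately bounded by $\gamma^K \cdot G_{\loss{}h}\gamma^K = \gamma^{2K} G_{\loss{}h}$. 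Taking $\expect_t$ and invoking $\|\expect_t X\|_2 \le \expect_t \|X\|_2$ then delivers~\eqref{eq:meta-smooth-z}.

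The inequality~\eqref{eq:meta-smooth-h} is the most delicate step and is where I expect the main obstacle to lie. The direct term $\nabla_2(\loss{val}_t \circ \nabla_1 h^*)$ is absorbed into the $\gamma^{2K} G_{\loss{}h}$ summand via Assumption~\ref{as:loss-fn}.iv. The cross-Jacobian sum $\sum_{k=0}^{K-1} (\prod_{j=k+1}^{K-1} A_j)\, B_k$, however, requires an extra add-and-subtract \emph{inside} each summand, producing (a) a ``product-difference'' piece in which telescoping over the inner product of length $K-1-k$ and then summing over $k$ gives, after collecting two nested geometric series, exactly the $(\gamma^K - 1 - \alpha K G_h G_\loss{})\,L$ contribution in $C_{G,2}$; and (b) a ``$B_k$-difference'' piece whose prefactor $\prod_{j>k} \tilde{A}_j$ is bounded by $\gamma^{K-1-k}$, yielding after summation the linear-in-$K$ term $(\gamma-1)\,L\,K$. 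The main obstacle is orchestrating this double summation so that the two geometric-series cancellations combine cleanly into the closed form stated for $C_{G,2}$; careful accounting of the powers of $\gamma$ across the inner and outer indices is what makes the constants assemble as advertised.
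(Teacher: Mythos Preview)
Your proposal is correct and follows essentially the same route as the paper's proof: the paper likewise unrolls the $K$ dual updates via the chain rule, introduces the one-step Jacobians $\bfI_d - \alpha\bfG_t^k$ and $\bfH_t^k$ (your $A_k$ and the $B_k$-weighted products), proves the same Gronwall-type bound $\|\bfz_t^k - \tilde{\bfz}_t^k\|_2 \le \gamma^k\|\bftheta - \tilde{\bftheta}\|_2$ as a separate lemma, and then combines the same add-and-subtract telescoping with the Hessian-Lipschitz estimates to assemble $C_{G,1}$ and $C_{G,2}$. One small point of bookkeeping: in the $\nabla_2\metaloss$ bound, the $\gamma^{2K}G_{\loss{}h}$ term does not come solely from the direct piece $\nabla_2(\loss{val}_t\circ\nabla_1 h^*)$; that piece alone contributes $\gamma^K G_{\loss{}h}$, and the remaining $\gamma^K(\gamma^K-1)G_{\loss{}h}$ arises from the third add-and-subtract term $\|\sum_k \prod_{j>k}\tilde{A}_j\,\tilde{B}_k\|_2\,\|\bfg_t^K - \tilde{\bfg}_t^K\|_2$, which you did not list explicitly alongside (a) and (b) but which falls out of the same decomposition.
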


Proposition~\ref{prop:meta-smooth} points out that the smoothness constant $G_{\metaloss,j}$ grows linearly with $\expect_t \| \bfg_t^K \|_2$, and thus it can be unbounded. The standard remedy in SGD is to estimate $G_{\metaloss,j}$, and adjust $(\beta_1^r, \beta_2^r)$ on the fly. This estimation relies on a mini-batch $\hat{\batch}^r \subset \{ 1,2,\ldots,T \}$ of cardinality $\hat{B}$ that is sampled independently of $\batch^r$, to yield the estimator 
\begin{equation}
\label{eq:def-Ghat}
\hat{G}_{\metaloss,j}^r := \frac{C_{G,j}}{\hat{B}} \sum_{t \in \hat{\batch}^r} \| \bfg_t^K \|_2 + \gamma^{2K} G_{\loss{}h},\; j = 1,2. 
\end{equation}
It can be inferred from the definition~\eqref{eq:def-Ghat} that $\expect_{\hat{\batch}^r} \hat{G}_{\metaloss,j}^r = G_{\metaloss,j}$ is unbiased. Next, we analyze the complexity of this estimator, and compare it with existing approaches. 

\begin{remark}[Complexity of smoothness estimator]
Estimator~\eqref{eq:def-Ghat} requires sampling an additional batch of tasks. To compute $\bfg_t^K$ for each $t \in \hat{\batch}^r$, one needs to acquire $\bfz_t^K$ via $K$ mirror descent steps. Fortunately, this will not increase time complexity since the mirror descent~\eqref{eq:MetaMDA-obj-task} is computed \emph{in parallel} across tasks in $\batch^r \cup \hat{\batch}^r$. Moreover, it is seen from lines 4-12 of Algorithm~\ref{alg:MetaMiDA} that the computation of meta-gradient $\nabla \metaloss = [\nabla_1 \metaloss^\top, \nabla_2 \metaloss^\top]^\top$ relies solely on $\batch^r$. Hence, $\bfz_t^k$ can be updated \emph{in place} for $t \in \hat{\batch}^r$, without creating computation graphs and storing second-order information for backpropagation. Additionally, it will be shown soon in Corollary~\ref{cor:converge-rate} that $\hat{B} \ll B$. As a result, this estimator only contributes to a marginal increase ($< 2\%$ in our numerical tests) of the overall space complexity. 
\end{remark}

\begin{remark}[Comparison with existing methods]
\label{remark:comp-exist}
The smoothness estimator~\eqref{eq:def-Ghat} distinguishes this paper from current works by relying on \emph{fewer assumptions} and providing \emph{tighter bounds}. 
Existing smoothness estimators~\cite{converge-onestep-MAML,converge-multistep-MAML} are linear functions of $\| \nabla \loss{val}_t (\bfphi_t^0) \|_2$ instead of $\| \bfg_t^K \|_2$. Although these estimators can be computed at the initialization $\bfphi_t^0$, they rely on the extra assumption that $\data{trn}_t$ is similar enough to $\data{val}_t$ so $\| \nabla \loss{trn}_t (\bfphi) - \nabla \loss{val}_t (\bfphi) \|_2$ is bounded for $\forall t$ and $\forall \bfphi \in \real^d$. 
To confirm the tightness of the proposed estimator, consider $\alpha = 0$ or $K = 0$ so that $\metaloss = \loss{val} \circ h$ has constant smoothness $G_{\loss{}h}$. In this case, taking $\alpha = 0$ or $K = 0$ in~\eqref{eq:def-Ghat} yields coefficient $C_{G,j} = 0$, and thus our estimator $\hat{G}_{\metaloss, j} = G_{\loss{}h}$ is tight. With the same setup, the coefficient of $\| \nabla \loss{val}_t (\bfz_t^0) \|_2$ are yet greater than $0$; see e.g.,~\cite[Proposition 7]{converge-multistep-MAML}, hence resulting in a loose and even unbounded smoothness estimator. 
Similar to~\cite{converge-onestep-MAML,converge-multistep-MAML}, calculation of~\eqref{eq:def-Ghat} requires Lipschitz constants in Assumptions~\ref{as:loss-fn} and~\ref{as:inv-mirror-map}, which are hard to estimate especially for large NNs. In practice, these two scalars can be viewed as hyperparameters and are obtained through grid search. As an alternative, if $\loss{trn}_t$ is further assumed Lipschitz-continuous, $\hat{G}_{\metaloss,j}^r$ will boil down to a constant, thus enabling a constant learning rate~\cite{converge-partial-param}. 
\end{remark}

An ideal choice is $\beta_j^r = 1 / G_{\metaloss,j}^r$, which is unfortunately infeasible to compute. 
Intuitively, if $\hat{B}$ is moderately large,~\eqref{eq:def-Ghat} could provide an accurate estimation of $G_{\metaloss,j}^r$, and one can thus set $\beta_j^r \propto 1 / \hat{G}_{\metaloss,j}^r$. The following proposition bounds the first- and second-order moments of this choice. 

\begin{proposition}[Moments of learning rate estimator]
\label{prop:Ghat-moment}
Suppose Assumptions~\ref{as:loss-fn}-\ref{as:inv-mirror-map} hold, and define $\zeta := 2\alpha + \frac{(\gamma^K - \gamma)(\sqrt{T} + 1)}{G_h G_{\loss{}}}$. If $\hat{B} \ge \max_{j=1,2} \frac{2 C_{G,j}^2 [ G_{\loss{}h} \zeta + G_h (1+\sqrt{T})]^2}{\gamma^{4K} G_{\loss{}h}^2 \sqrt{T}} \sigma^2$, then
\begin{equation*}
	\expect_{\hat{\batch}^r} \bigg[ \frac{1}{\hat{G}_{\metaloss,j}^r} \bigg] \ge \frac{1}{G_{\metaloss,j}},~ \expect_{\hat{\batch}^r} \bigg[ \frac{1}{(\hat{G}_{\metaloss,j}^r)^2} \bigg] \le \frac{\sqrt{T}+1}{G_{\metaloss,j}^2}.
\end{equation*}
\end{proposition}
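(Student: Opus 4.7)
Let $X := \hat{G}_{\metaloss,j}^r$, $\mu := G_{\metaloss,j}$, and $B := \gamma^{2K}G_{\loss{}h}$. By the very definition~\eqref{eq:def-Ghat}, $X \ge B > 0$ deterministically and $\expect_{\hat{\batch}^r} X = \mu$. The first inequality then falls out of Jensen applied to the convex function $x \mapsto 1/x$ on $(0,\infty)$:
$$\expect_{\hat{\batch}^r}[1/X] \;\ge\; 1/\expect_{\hat{\batch}^r}[X] \;=\; 1/\mu .$$

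For the second inequality I plan a truncation-plus-Chebyshev argument. Fixing a threshold $c > 1$ to be chosen, split
$$\expect[1/X^2] \;\le\; \frac{c^2}{\mu^2}\,\Pr(X \ge \mu/c) + \frac{1}{B^2}\,\Pr(X < \mu/c) \;\le\; \frac{c^2}{\mu^2} + \frac{\var(X)}{B^2\,\mu^2\,(1-1/c)^{2}},$$
where the second step uses $\{X<\mu/c\}\subseteq\{|X-\mu|\ge \mu(1-1/c)\}$ together with Chebyshev's inequality. Because $X$ is an affine function of an average of $\hat{B}$ i.i.d.\ draws from $\{1,\ldots,T\}$, the variance reduces to $\var(X) = (C_{G,j}^2/\hat{B})\,V$ with $V := \var_t(\|\bfg_t^K\|_2)$.

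The crux, and the main obstacle, is bounding $V$ by $[G_{\loss{}h}\zeta + G_h(1+\sqrt{T})]^2\sigma^2$ (up to harmless constants). I would expand $\bfg_t^K$ through the chain rule, $\bfg_t^K = [\nabla\nabla_1 h^*(\bfz_t^K;\bftheta_h)]^\top\nabla\loss{val}_t(\nabla_1 h^*(\bfz_t^K;\bftheta_h))$, and separate its fluctuation across tasks into two pieces: (i) variability of $\nabla\loss{val}_t$ at a common reference iterate, which Assumption~\ref{as:stat-property} controls by $\sigma^2$; and (ii) variability of the dual iterate $\bfz_t^K$ across tasks, which I would expand recursively using the MiDA update~\eqref{eq:MDA-dual} together with the Lipschitz estimates from Assumptions~\ref{as:loss-fn} and~\ref{as:inv-mirror-map}. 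Each of the $K$ inner iterations contributes a multiplicative factor $\gamma = 1+\alpha G_h G_\loss{}$, so that the resulting geometric sum produces the $(\gamma^K-\gamma)/(G_h G_\loss{})$ term in $\zeta$, while the base case yields the additive $2\alpha$; the Jensen-style conversion between $\expect_t \|\cdot\|_2$ and $\sqrt{\expect_t\|\cdot\|_2^2}$ then injects the $\sqrt{T}+1$ multiplier, producing both the $(\sqrt{T}+1)$ inside $\zeta$ and the additive $G_h(1+\sqrt{T})$.

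To close the argument, I would choose $c$ so that $c^2$ absorbs (roughly) $\sqrt{T}$; the hypothesized lower bound on $\hat{B}$ is then exactly what is required so that the Chebyshev contribution $\var(X)/[B^2(1-1/c)^2]$ does not exceed $1/\mu^2$, giving the stated $\expect[1/X^2] \le (\sqrt{T}+1)/\mu^2$. The principal technical difficulty lies in step (ii) above: controlling how $\bfz_t^K$ varies across tasks through $K$ dual mirror descent steps while simultaneously tracking Lipschitz constants of $h^*$, $\nabla_1 h^*$, $\nabla\nabla_1 h^*$, and $\nabla \loss{val}_t$, so that the final variance bound lands on the precise constant $[G_{\loss{}h}\zeta + G_h(1+\sqrt{T})]^2$ dictated by the threshold on $\hat{B}$.
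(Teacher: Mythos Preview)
Your treatment of the first inequality (Jensen on $x\mapsto 1/x$) and your plan for bounding the variance $\var(X)=C_{G,j}^2\,V/\hat B$ via a recursion on the dual iterates are both in line with the paper; the paper carries out precisely that recursion (its Lemmas~\ref{lemma:var-grad-phitk} and~\ref{lemma:dist-ztk}) and arrives at
\[
\sigma_j^2 \;\le\; \frac{C_{G,j}^2}{\hat B}\bigl[G_{\loss{}h}\zeta + G_h(1+\sqrt T)\bigr]^2\sigma^2
\;\le\; \tfrac12\,\gamma^{4K}G_{\loss{}h}^2\sqrt T \;=\; \tfrac12 B^2\sqrt T,
\]
exactly the content of your ``crux'' step.

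The gap is in the final leg, where you convert this variance bound into a bound on $\expect[1/X^2]$. Your truncation+Chebyshev estimate gives
\[
\mu^2\,\expect[1/X^2]\;\le\; c^2 + \frac{\var(X)}{B^2(1-1/c)^2}
\;\le\; c^2 + \frac{\sqrt T}{2(1-1/c)^2}.
\]
The $\hat B$ hypothesis does \emph{not} make the Chebyshev term $\le 1$ as you assert: since $\var(X)/B^2$ is only bounded by $\sqrt T/2$, that term is of order $\sqrt T$, not $1$. Optimizing over $c>1$ yields $\mu^2\expect[1/X^2]\le [1+(\sqrt T/2)^{1/3}]^3$, which exceeds $\sqrt T+1$ unless $T$ is very large (roughly $T\gtrsim 1.2\times 10^4$). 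So your route proves a bound of the right order but not the constant in the proposition.

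The paper avoids this loss by invoking a sharp reciprocal-moment inequality (it cites \cite[Theorem~1]{bounds-reciprocal-moments}) for a random variable bounded below by $B=\gamma^{2K}G_{\loss{}h}$: with $\mu_j:=\mu-B$ one gets
\[
\expect\!\Big[\frac{1}{X^2}\Big]
\;\le\;
\frac{\sigma_j^2/B^2 + \mu_j^2/(\mu_j+B)^2}{\mu_j^2+\sigma_j^2},
\]
and after multiplying by $\mu^2=(\mu_j+B)^2$ and using $2\sigma_j^2/B^2\le\sqrt T$ this collapses cleanly to $(\sqrt T+1)$. That single sharper inequality is what your truncation argument is missing; with it, the rest of your outline goes through unchanged.
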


It is worth noting that while $T$ can be large (e.g., $T = 8,000$~\cite{MetaOptNet}), its square root is relatively small ($\sqrt{8000} \approx 89$). Next, we account for the stochasticity of SGD by bounding the second-order moment of the meta-gradient norm. 

\begin{proposition}[Second-order moment of meta-gradient]
\label{prop:metagrad-2nd-moment}
Suppose Assumptions~\ref{as:loss-fn}-\ref{as:inv-mirror-map} hold, and let $C_{\metaloss,1} := \frac{\gamma^K}{2 - \gamma^K}$, $C_{\metaloss,2} := G_{\loss{}h} \zeta + G_h \sqrt{T}$. If $\alpha < \frac{2^{1/K} - 1}{G_h G_\loss{}}$, then it holds for any given $\bftheta \in \real^D$ that
\begin{subequations}
\begin{align}
\label{eq:metagrad-2nd-moment-z}
\expect_t^\frac{1}{2} \| \nabla_1 \metaloss_t (\bftheta) \|_2 
&\le C_{\metaloss,1} \| \nabla_1 \metaloss (\bftheta) \|_2 + C_{\metaloss,1} C_{\metaloss,2} \sigma, \\
\label{eq:metagrad-2nd-moment-h}
\expect_t^\frac{1}{2} \| \nabla_2 \metaloss_t (\bftheta) \|_2 
&\le (C_{\metaloss,1} - 1) \| \nabla_1 \metaloss (\bftheta) \|_2 + \nonumber \\
	&\qquad C_{\metaloss,1} C_{\metaloss,2} \sigma + \| \nabla_2 \metaloss (\bftheta) \|_2. 
\end{align}
\end{subequations}
\end{proposition}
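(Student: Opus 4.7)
Plan. I would unroll the $K$-step mirror descent via the chain rule, bound the resulting per-iterate Jacobians using Assumptions~\ref{as:loss-fn} and~\ref{as:inv-mirror-map}, exploit the discreteness of the task distribution to convert the pointwise variance bound of Assumption~\ref{as:stat-property} into an $L^2$ bound at task-dependent arguments, and finally invert an implicit self-referencing inequality that is well-posed precisely when $\gamma^K<2$.

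\emph{Chain rule and Jacobian bounds.} Let $\mathbf{J}_{z,t}^k:=\partial\bfz_t^k/\partial\bftheta_z$ and $\mathbf{J}_{h,t}^k:=\partial\bfz_t^k/\partial\bftheta_h$. Differentiating~\eqref{eq:MetaMDA-obj-task} yields the homogeneous recursion $\mathbf{J}_{z,t}^{k+1}=[\mathbf{I}-\alpha\nabla^2\loss{trn}_t(\bfphi_t^k)\nabla_{11}^2 h^*(\bfz_t^k;\bftheta_h)]\mathbf{J}_{z,t}^k$ with $\mathbf{J}_{z,t}^0=\mathbf{I}$, and an analogous inhomogeneous recursion for $\mathbf{J}_{h,t}^k$ (starting from $\mathbf{0}$, forced by $-\alpha\nabla^2\loss{trn}_t(\bfphi_t^k)\nabla_{12}^2 h^*(\bfz_t^k;\bftheta_h)$). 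The chain rule then produces
\begin{equation*}
\nabla_1\metaloss_t=(\mathbf{J}_{z,t}^K)^\top\bfg_t^K,\quad\nabla_2\metaloss_t=(\mathbf{J}_{h,t}^K)^\top\bfg_t^K+\nabla_2(\loss{val}_t\circ\nabla_1 h^*)(\bfz_t^K;\bftheta_h).
\end{equation*}
Since $\|\nabla^2\loss{trn}_t\|_2\le G_\loss{}$ (Assumption~\ref{as:loss-fn}.ii) and $\|\nabla_{11}^2 h^*\|_2,\|\nabla_{12}^2 h^*\|_2\le G_h$ (partial Hessian blocks are dominated by the joint Lipschitz bound in Assumption~\ref{as:inv-mirror-map}.i), every spectral factor has operator norm at most $\gamma$, and unrolling yields $\|\mathbf{J}_{z,t}^K\|_2\le\gamma^K$ and $\|\mathbf{J}_{h,t}^K\|_2\le(\gamma-1)\sum_{j=0}^{K-1}\gamma^j=\gamma^K-1$.

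\emph{Variance at task-dependent arguments.} Because $\bfphi_t^K$ depends on $t$ whereas Assumption~\ref{as:stat-property} is pointwise, I would decompose $\nabla\loss{val}_t(\bfphi_t^K)=\nabla\loss{val}(\bfphi_t^K)+[\nabla\loss{val}_t(\bfphi_t^K)-\nabla\loss{val}(\bfphi_t^K)]$. For the noise summand, observe that for each fixed realization of $t$, the single quantity $\|\nabla\loss{val}_t(\bfphi_t^K)-\nabla\loss{val}(\bfphi_t^K)\|_2^2$ is at most $T$ times the average $\frac{1}{T}\sum_{t'}\|\nabla\loss{val}_{t'}(\bfphi_t^K)-\nabla\loss{val}(\bfphi_t^K)\|_2^2\le\sigma^2$ (applying Assumption~\ref{as:stat-property} pointwise at $\bfphi=\bfphi_t^K$), so $\expect_t\|\cdot\|_2^2\le T\sigma^2$, and multiplying by the Jacobian and Hessian bounds yields the $\gamma^K G_h\sqrt{T}\sigma$ contribution. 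The remaining residual $\nabla\loss{val}(\bfphi_t^K)$ still varies across $t$ through the trajectory alone; its deviation is controlled by the Lipschitz-smoothness of $\loss{val}_t\circ\nabla_1 h^*$ (Assumption~\ref{as:loss-fn}.iv) propagated through the $K$-step recursion with geometric weights $\sum_{j=0}^{K-1}\gamma^{K-1-j}=(\gamma^K-1)/(\alpha G_h G_\loss{})$, which reproduces the $G_{\loss{}h}\zeta$ component of $C_{\metaloss,2}$ once the boundary constants $2\alpha$ and the $+1$ in $\sqrt{T}+1$ are folded in.

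\emph{Assembly and inversion.} Combining the above with $\nabla_1\metaloss=\expect_t\nabla_1\metaloss_t$ leads to an implicit bound of the form
\begin{equation*}
\expect_t^{1/2}\|\nabla_1\metaloss_t\|_2\le\gamma^K\|\nabla_1\metaloss\|_2+(\gamma^K-1)\expect_t^{1/2}\|\nabla_1\metaloss_t\|_2+\gamma^K(G_{\loss{}h}\zeta+G_h\sqrt{T})\sigma,
\end{equation*}
in which the self-referencing middle term encodes the covariance between the task-varying Jacobian $\mathbf{J}_{z,t}^K$ and gradient $\bfg_t^K$ that cannot be factorized through $\expect_t$. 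The stepsize condition $\alpha<(2^{1/K}-1)/(G_h G_\loss{})$ is equivalent to $\gamma^K<2$, so moving this term to the left and dividing through yields the coefficient $\gamma^K/(2-\gamma^K)=C_{\metaloss,1}$, establishing~\eqref{eq:metagrad-2nd-moment-z}. The bound~\eqref{eq:metagrad-2nd-moment-h} follows by the same argument with two modifications: the tighter Jacobian bound $\|\mathbf{J}_{h,t}^K\|_2\le\gamma^K-1$ turns the leading coefficient into $C_{\metaloss,1}-1$, and the extra chain-rule summand $\nabla_2(\loss{val}_t\circ\nabla_1 h^*)$ contributes the unscaled $\|\nabla_2\metaloss\|_2$ after Jensen's inequality. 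The principal obstacle is this inversion: a naive triangle-inequality treatment would yield only $\gamma^K$ as prefactor (which grows unboundedly in $K$ and breaks the downstream SGD convergence proof); securing the tight $\gamma^K/(2-\gamma^K)$ requires preserving the joint $t$-dependence of $\mathbf{J}_{z,t}^K$ and $\bfg_t^K$ until the variance bound is invoked, which is precisely what mandates the stepsize restriction.
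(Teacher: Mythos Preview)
Your outline is essentially correct and mirrors the paper's strategy: chain-rule unrolling to $\nabla_1\metaloss_t=\prod_k[\bfI-\alpha\bfG_t^k]\,\bfg_t^K$, the Jacobian bounds $\gamma^K$ and $\gamma^K-1$, the discreteness trick $\|\nabla\loss{val}_t(\bfphi_t^K)-\nabla\loss{val}(\bfphi_t^K)\|_2^2\le T\sigma^2$, a trajectory-Lipschitz term producing the $G_{\loss{}h}\zeta$ factor, and a self-referencing inequality that is inverted under $\gamma^K<2$.

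The one place where your sketch is looser than the paper is the ``Assembly'' step. The paper does \emph{not} derive an implicit inequality in $\expect_t^{1/2}\|\nabla_1\metaloss_t\|_2^2$ directly; instead the self-reference lives on the auxiliary quantity $X:=\expect_t^{1/2}\|\nabla_1(\loss{val}\circ\nabla_1 h^*)(\bfz_t^K;\bftheta_h)\|_2^2$. The $\|\nabla_1\metaloss\|_2$ anchor is produced by an \emph{independent-copy} device: writing $\nabla_1(\loss{val}\circ\nabla_1 h^*)(\bfz_t^K)=\expect_{\tilde t}\nabla_1(\loss{val}_{\tilde t}\circ\nabla_1 h^*)(\bfz_t^K)$ and comparing each summand to $\nabla_1\metaloss_{\tilde t}(\bftheta)$. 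The two residuals are then bounded separately, one by $G_{\loss{}h}\expect_{t,\tilde t}^{1/2}\|\bfz_t^K-\bfz_{\tilde t}^K\|_2^2\le G_{\loss{}h}\zeta\sigma$ (whence $\zeta$), the other by $\|\bfI-\prod_k[\bfI-\alpha\bfG_{\tilde t}^k]\|_2\,\|\bfg_{\tilde t}^K\|_2\le(\gamma^K-1)\|\bfg_{\tilde t}^K\|_2$, and the latter is what generates the $(\gamma^K-1)X$ self-term after re-expanding $\|\bfg_{\tilde t}^K\|_2$. Your phrase ``covariance between $\mathbf J_{z,t}^K$ and $\bfg_t^K$'' points in the right direction, but without this two-index construction it is not clear how one simultaneously extracts the $\|\nabla_1\metaloss\|_2$ term and the $(\gamma^K-1)$ feedback in your displayed implicit bound. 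Once this device is supplied, your argument and the paper's coincide, and the second inequality~\eqref{eq:metagrad-2nd-moment-h} follows exactly as you describe.
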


Building upon Propositions~\ref{prop:meta-smooth}-\ref{prop:metagrad-2nd-moment}, we now establish the convergence of our MetaMiDA approach. 

\begin{theorem}[Convergence of MetaMiDA]
\label{thm:converge}
Consider Algorithm~\ref{alg:MetaMiDA} with meta-learning rate $\beta_j^r = \frac{1}{C_\beta \hat{G}_{\metaloss,j}^r},\;j=1,2$, where $C_\beta > \frac{\sqrt{T}+1}{2}$. Suppose that Assumptions~\ref{as:loss-fn}-\ref{as:inv-mirror-map}, and the conditions on $\hat{B}$ and $\alpha$ in Propositions~\ref{prop:Ghat-moment} and~\ref{prop:metagrad-2nd-moment} hold. Define constants $C_{B,1} := \frac{2C_\beta}{\sqrt{T}+1}$ and $C_{B,2} := 2 C_{\metaloss,1}^2 -1 + 3 (C_{\metaloss,1} - 1)^2 \max\{ \frac{C_{G,1}}{C_{G,2}}, 1 \}$. If $B \ge \frac{\max \{ C_{B,2}, 2 \}}{C_{B,1} - 1}$, it holds that
\begin{align*}
\expect \| \nabla_1 \metaloss (\bftheta^\rho) \|_2 
&\le \frac{1}{2\eta_1} \Big( \frac{\Delta}{R} + \frac{\eta_5}{B} \Big) +  \\
&\qquad \sqrt{ \frac{1}{4\eta_1^2} \Big(\frac{\Delta}{R} + \frac{\eta_5}{B} \Big)^2 + \frac{\eta_2}{\eta_1} \Big( \frac{\Delta}{R} + \frac{\eta_5}{B} \Big)}, \nonumber \\
\expect \| \nabla_2 \metaloss (\bftheta^\rho) \|_2 
&\le \sqrt{\frac{1}{\eta_3} \big( \expect \| \nabla_1 \metaloss (\bftheta^\rho) \|_2 + \eta_4 \big) \Big( \frac{\Delta}{R} + \frac{\eta_5}{B} \Big)}.
\end{align*}
where $\rho \in \{0,\ldots,R-1\}$ is a discrete uniform random variable, $\Delta := \expect [\metaloss(\bftheta^0) - \inf_{\bftheta} \metaloss(\bftheta)]$, and $\{ \eta_i \}_{i=1}^5$ are defined in~\eqref{eq:app-def-eta}. 
\end{theorem}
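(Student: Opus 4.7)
The overall strategy is a standard stochastic descent argument built on top of Propositions~\ref{prop:meta-smooth}--\ref{prop:metagrad-2nd-moment}, with the two twists that (i) the smoothness of $\metaloss$ is unbounded and must be controlled by the random estimator $\hat{G}_{\metaloss,j}^r$, and (ii) the two blocks $\bftheta_z,\bftheta_h$ are updated simultaneously with different rates, so the descent inequality must track both at once. First I would write a joint descent lemma: applying Proposition~\ref{prop:meta-smooth} separately to each block and summing yields, for each $r$,
\begin{equation*}
\metaloss(\bftheta^{r+1}) - \metaloss(\bftheta^r)
 \le -\sum_{j=1}^2 \bigl\langle \nabla_j \metaloss(\bftheta^r),\,\Delta_j^r\bigr\rangle
 +\sum_{j=1}^2 \frac{G_{\metaloss,j}}{2}\,\|\Delta_j^r\|_2^2,
\end{equation*}
where $\Delta_j^r := \bftheta_j^{r+1}-\bftheta_j^r = -\beta_j^r \hat{\bfg}_j^r$ with $\hat{\bfg}_j^r$ the mini-batch meta-gradient over $\batch^r$.

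Next I would take a conditional expectation given $\bftheta^r$, using that $\batch^r$ and $\hat\batch^r$ are sampled independently so $\beta_j^r=1/(C_\beta \hat G_{\metaloss,j}^r)$ is independent of $\hat{\bfg}_j^r$. Unbiasedness of mini-batch gradients gives $\expect[\hat{\bfg}_j^r\mid\bftheta^r]=\nabla_j\metaloss(\bftheta^r)$. Proposition~\ref{prop:Ghat-moment} then supplies $\expect[1/\hat G_{\metaloss,j}^r]\ge 1/G_{\metaloss,j}$ for the linear term and $\expect[1/(\hat G_{\metaloss,j}^r)^2]\le (\sqrt{T}+1)/G_{\metaloss,j}^2$ for the quadratic term, so the descent inequality becomes
\begin{equation*}
\expect[\metaloss(\bftheta^{r+1})-\metaloss(\bftheta^r)\mid\bftheta^r]
 \le -\sum_{j=1}^2 \frac{\|\nabla_j\metaloss(\bftheta^r)\|_2^2}{C_\beta G_{\metaloss,j}}
 +\sum_{j=1}^2 \frac{\sqrt{T}+1}{2C_\beta^2 G_{\metaloss,j}}\,\expect\|\hat{\bfg}_j^r\|_2^2.
\end{equation*}
The mini-batch second moment $\expect\|\hat{\bfg}_j^r\|_2^2$ is handled by Proposition~\ref{prop:metagrad-2nd-moment} together with the $1/B$ variance reduction: this produces a term proportional to $\|\nabla_1\metaloss(\bftheta^r)\|_2^2$ (and, for $j=2$, an extra $\|\nabla_2\metaloss(\bftheta^r)\|_2^2$) plus a $\sigma^2/B$ residual. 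The assumption $B\ge \max\{C_{B,2},2\}/(C_{B,1}-1)$ is exactly what is needed so that the variance-driven coefficient $C_{\metaloss,1}^2(\sqrt{T}+1)/(C_\beta B)$ is absorbed into the descent coefficient $1/(C_\beta G_{\metaloss,j})$, leaving a strictly negative multiple of $\|\nabla_1\metaloss\|_2^2$ on the right-hand side.

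From here I would telescope over $r=0,\ldots,R-1$, divide by $R$, and interpret the average via the uniform random index $\rho$, obtaining
\begin{equation*}
\eta_1\expect\|\nabla_1\metaloss(\bftheta^\rho)\|_2^2 - \eta_2\expect\|\nabla_1\metaloss(\bftheta^\rho)\|_2
 \le \frac{\Delta}{R} + \frac{\eta_5}{B},
\end{equation*}
where the constants $\eta_1,\eta_2,\eta_5$ in~(\ref{eq:app-def-eta}) encode the residual coefficients above; the linear term in $\|\nabla_1\metaloss\|_2$ comes from the cross-term $\|\nabla_1\metaloss\|_2\cdot\sigma$ inherited from the square of the bound in Proposition~\ref{prop:metagrad-2nd-moment}, after Jensen converts $\expect\|\cdot\|_2^2$ lower bounds into $(\expect\|\cdot\|_2)^2$. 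Solving this quadratic inequality in $\expect\|\nabla_1\metaloss(\bftheta^\rho)\|_2$ yields the first claim. For the second bound, I would isolate the $\bftheta_h$ block of the telescoped inequality; because~(\ref{eq:metagrad-2nd-moment-h}) couples $\|\nabla_2\metaloss\|_2$ to $\|\nabla_1\metaloss\|_2$, the resulting inequality has the form $\eta_3\,(\expect\|\nabla_2\metaloss(\bftheta^\rho)\|_2)^2\le(\expect\|\nabla_1\metaloss(\bftheta^\rho)\|_2+\eta_4)(\Delta/R+\eta_5/B)$, and substituting the first bound gives the stated estimate.

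The main obstacle is the bookkeeping around the unbounded smoothness constant: the bound $G_{\metaloss,j}$ itself depends on $\bftheta^r$ through $\bfg_t^K$, so the descent lemma cannot directly use $G_{\metaloss,j}$ as a fixed scalar. The resolution is that the adaptive step $\beta_j^r\propto 1/\hat G_{\metaloss,j}^r$ is built from the same random quantity that controls the smoothness, and Proposition~\ref{prop:Ghat-moment} allows its randomness to be integrated out after conditioning on $\bftheta^r$. Getting the constants in~\eqref{eq:app-def-eta} to combine correctly -- especially the condition $C_\beta>(\sqrt{T}+1)/2$ which guarantees $C_{B,1}>1$ so the net coefficient of $\|\nabla_1\metaloss\|_2^2$ stays positive -- is where the delicate algebra lives.
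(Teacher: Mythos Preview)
Your high-level scaffold (descent lemma per block via Proposition~\ref{prop:meta-smooth}, condition on $\bftheta^r$, integrate out $\hat\batch^r$ using Proposition~\ref{prop:Ghat-moment}, bound mini-batch second moments with Proposition~\ref{prop:metagrad-2nd-moment}, telescope) matches the paper's argument exactly. The gap is in how the quadratic-in-$\|\nabla_1\metaloss\|_2$ structure actually arises.

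After you take the conditional expectation and apply Proposition~\ref{prop:Ghat-moment}, the descent inequality still contains $G_{\metaloss,j}^r$ in every denominator, because $G_{\metaloss,j}^r$ is a deterministic function of $\bftheta^r$, not of $\hat\batch^r$; Proposition~\ref{prop:Ghat-moment} only removes the hat. So you have terms of the form $-\|\nabla_1\metaloss(\bftheta^r)\|_2^2/(C_\beta G_{\metaloss,1}^r)$, and since $G_{\metaloss,j}^r$ is unbounded this is not yet a usable descent. The paper's key step, which your plan omits, is to \emph{upper-bound} $G_{\metaloss,j}^r$ by an affine function of $\|\nabla_1\metaloss(\bftheta^r)\|_2$: from the definition $G_{\metaloss,j}^r=C_{G,j}\expect_t\|\bfg_t^K\|_2+\gamma^{2K}G_{\loss{}h}$ together with the bound on $\expect_t^{1/2}\|\bfg_t^K\|_2^2$ (same machinery as Proposition~\ref{prop:metagrad-2nd-moment}) one gets $G_{\metaloss,j}^r\le c_j\|\nabla_1\metaloss(\bftheta^r)\|_2+c_j'$. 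Plugging this in yields the rational form
\[
\expect[\metaloss(\bftheta^{r+1})\mid\bftheta^r]\;\le\;\metaloss(\bftheta^r)-\frac{\eta_1\|\nabla_1\metaloss(\bftheta^r)\|_2^2}{\|\nabla_1\metaloss(\bftheta^r)\|_2+\eta_2}-\frac{\eta_3\|\nabla_2\metaloss(\bftheta^r)\|_2^2}{\|\nabla_1\metaloss(\bftheta^r)\|_2+\eta_4}+\frac{\eta_5}{B},
\]
which is how the constants in~\eqref{eq:app-def-eta} are actually defined.

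Your proposed post-telescoping inequality $\eta_1\expect\|\nabla_1\metaloss\|_2^2-\eta_2\expect\|\nabla_1\metaloss\|_2\le\Delta/R+\eta_5/B$ is therefore the wrong shape; its quadratic solution has the constant term $\eta_2/(2\eta_1)$ that does not vanish as $R,B\to\infty$, contradicting the theorem. Correspondingly, the Jensen step is not the standard $\expect X^2\ge(\expect X)^2$ you describe: the paper applies Jensen to the convex map $x\mapsto \eta_1 x^2/(x+\eta_2)$ for the first block, and to the jointly convex map $(x_1,x_2)\mapsto x_1^2/(x_2+\eta_4)$ for the second, to pull the expectation inside the rational expressions. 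That is what produces exactly the stated bounds after solving the resulting scalar quadratic.
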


By specifying the values of $\alpha$ and $C_\beta$, the following corollary provides a simplified version of Theorem~\ref{thm:converge}. 

\begin{corollary}[Convergence rate]
\label{cor:converge-rate}
Consider Theorem~\ref{thm:converge} with  $\alpha = \frac{(1 + T^{-1/2})^{1/K} - 1}{G_h G_{\loss{}}}$ and $C_\beta = \sqrt{T}$. If batch sizes satisfy $B = \Omega (\sigma^2 \epsilon^{-2})$ and $\hat{B} = \Omega (\sigma^2)$, it takes $R = \mathcal{O} (\epsilon^{-2})$ iterations to reach an $\epsilon$-stationary point; that is, $\expect \| \nabla \metaloss (\bftheta^\rho) \|_2 \le \epsilon$.
\end{corollary}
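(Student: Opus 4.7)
The plan is to derive Corollary~\ref{cor:converge-rate} by substituting the prescribed choices of $\alpha$ and $C_\beta$ directly into the two-part bound of Theorem~\ref{thm:converge}, so that the result reduces to elementary algebra on constants. The key observation driving the substitution is that $\alpha G_h G_{\loss{}} = (1+T^{-1/2})^{1/K} - 1$, which forces $\gamma^K = 1 + T^{-1/2} \in (1,2)$. This instantly satisfies the premise $\alpha < (2^{1/K}-1)/(G_h G_{\loss{}})$ required by Proposition~\ref{prop:metagrad-2nd-moment}, and yields $C_{\metaloss,1} = (1+T^{-1/2})/(1-T^{-1/2})$, which is uniformly bounded for $T \ge 4$. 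Likewise, $C_\beta = \sqrt{T}$ exceeds $(\sqrt{T}+1)/2$ for all $T > 1$, so the learning-rate precondition of Theorem~\ref{thm:converge} holds; and $C_{B,1} = 2\sqrt{T}/(\sqrt{T}+1) \in (1,2)$ together with the now $T$-bounded $C_{B,2}$ turns $B \ge \max\{C_{B,2},2\}/(C_{B,1}-1)$ into a constant-order requirement absorbed into the prescribed $B = \Omega(\sigma^2 \epsilon^{-2})$.

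Next, I would verify that $\hat{B} = \Omega(\sigma^2)$ meets the threshold in Proposition~\ref{prop:Ghat-moment}. Under the chosen $\alpha$, the quantity multiplying $\sigma^2$ in that threshold is a bounded function of $T, K$ and the Lipschitz constants of Assumptions~\ref{as:loss-fn}--\ref{as:inv-mirror-map}, so $\hat{B} = \Omega(\sigma^2)$ with a sufficiently large hidden constant suffices. With every prerequisite of Theorem~\ref{thm:converge} discharged, both displayed bounds can be invoked. Since $\gamma^K$, $C_{\metaloss,1}$, $C_{G,j}$, $G_{\metaloss,j}$, $C_{B,1}$, $C_{B,2}$ are all $\mathcal{O}(1)$ in the problem parameters after the $\sqrt{T}$ factors have been absorbed into $C_\beta$, each of $\eta_1,\ldots,\eta_5$ from~\eqref{eq:app-def-eta} is a problem-dependent constant independent of $\epsilon$, $R$, and $B$; in particular, $\eta_5$ inherits an $\mathcal{O}(\sigma^2)$ scaling from the variance term in Proposition~\ref{prop:metagrad-2nd-moment}.

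To extract the rate, I would balance the two additive terms $\Delta/R$ and $\eta_5/B$ inside the bound. Choosing $R = \Theta(\eta_2 \Delta \epsilon^{-2}/\eta_1)$ forces $\Delta/R = \mathcal{O}(\epsilon^2)$, while the prescribed $B = \Omega(\sigma^2 \epsilon^{-2})$ forces $\eta_5/B = \mathcal{O}(\epsilon^2)$. Feeding these into the bound on $\expect\|\nabla_1\metaloss(\bftheta^\rho)\|_2$ makes the leading square-root term $\mathcal{O}(\epsilon)$ and the linear term $\mathcal{O}(\epsilon^2)$, yielding $\expect\|\nabla_1\metaloss(\bftheta^\rho)\|_2 = \mathcal{O}(\epsilon)$. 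Plugging the same scalings into the second bound, using $\expect\|\nabla_1\metaloss\|_2 = \mathcal{O}(\epsilon)$ together with the $\mathcal{O}(1)$ constant $\eta_4$, gives $\expect\|\nabla_2\metaloss(\bftheta^\rho)\|_2 = \mathcal{O}(\sqrt{(\epsilon + \eta_4)\cdot\epsilon^2}) = \mathcal{O}(\epsilon)$. The triangle inequality $\|\nabla\metaloss\|_2 \le \|\nabla_1\metaloss\|_2 + \|\nabla_2\metaloss\|_2$ then delivers the $\epsilon$-stationary point in $R = \mathcal{O}(\epsilon^{-2})$ iterations.

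The main obstacle will be the bookkeeping: tracking how the polynomially growing constants $\gamma^K$, $C_{G,j}$, and in turn $\eta_1,\ldots,\eta_5$ depend on $T$ under the specific scaling of $\alpha$, and confirming that none of them secretly hide a $\sqrt{T}$ factor that would spoil the claimed rate. The choice $\gamma^K = 1 + T^{-1/2}$ is deliberately tuned so that these constants remain bounded and the $\sqrt{T}$ from Propositions~\ref{prop:Ghat-moment} and~\ref{prop:metagrad-2nd-moment} merges cleanly into $C_\beta = \sqrt{T}$, but verifying this cancellation explicitly for every constant is the tedious step that must be executed carefully.
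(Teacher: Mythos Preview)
Your proposal is correct and matches the paper's intended derivation: the paper does not give a separate proof of Corollary~\ref{cor:converge-rate}, treating it as an immediate specialization of Theorem~\ref{thm:converge} under the stated choices of $\alpha$ and $C_\beta$, which is exactly the substitution-and-balancing argument you outline. The key observation you identify---that $\gamma^K = 1 + T^{-1/2} \in (1,2)$ tames $C_{\metaloss,1}$ and keeps all $\eta_i$ at constant order while $\eta_5 = \mathcal{O}(\sigma^2)$---is precisely what makes the bookkeeping work, and your plan to then balance $\Delta/R$ and $\eta_5/B$ at order $\epsilon^2$ is the standard route.
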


Although MetaMiDA relies on more complicated task-level updates~\eqref{eq:MetaMDA-obj-task} and fewer assumptions, our analysis still guarantees the same convergence rate as for GD-based meta-learning ~\cite{converge-onestep-MAML, converge-multistep-MAML, converge-partial-param}. 



\begin{table*}[t]
\caption{Performance comparison of MetaMiDA against PGD-based meta-learning approaches. The highest accuracy for each column is marked with dark gray, and mean accuracies within its $95\%$ confidence intervals are marked with light gray. GAP~\cite{GAP} that relies on SVD has been excluded from comparison due to its significantly increased complexity.}
 \vskip -0.2in
\label{tab:comp-PGD}
\begin{center}
\begin{tabular}{lcccccc}
\toprule
\multirow{2}*{Method} & Task-level & \multirow{2}*{Loss geometry prior} & \multicolumn{2}{c}{MiniImageNet, $5$-class} & \multicolumn{2}{c}{TieredImageNet, $5$-class} \\
 & optimizer & & $1$-shot ($\%$) & $5$-shot ($\%$) & $1$-shot ($\%$) & $5$-shot ($\%$) \\
\midrule
MAML~\cite{MAML} & GD & Quadratic \& isotropic & $48.70_{\pm 1.84}$ & $63.11_{\pm 0.92}$ & $51.67_{\pm 1.81}$ & $70.30_{\pm 1.75}$ \\
\hline
MetaSGD~\cite{MetaSGD} & PGD & Quadratic \& diagonal & $50.47_{\pm 1.87}$ & $64.03_{\pm 0.94}$ & $50.92_{\pm 0.93}$ & $69.28_{\pm 0.80}$ \\
MC~\cite{MetaCurvature} & PGD & Quadratic \& block-diagonal & $54.08_{\pm 0.88}$ & $67.99_{\pm 0.73}$ & N/A & N/A \\
WarpGrad~\cite{WarpGrad} & PGD & Quadratic \& full & $52.3_{\pm 1.6}$ & $68.4_{\pm 1.2}$ & $57.2_{\pm 1.8}$ & $74.1_{\pm 1.4}$ \\
ModGrad~\cite{ModGrad} & PGD & Quadratic \& diagonal & $53.20_{\pm 0.86}$ & $69.17_{\pm 0.69}$ & N/A & N/A \\
PAMELA~\cite{PAMELA} & PGD & Quadratic \& diagonal & $53.50_{\pm 0.89}$ & $70.51_{\pm 0.67}$ & $54.81_{\pm 0.88}$ & \lightcell $74.39_{\pm 0.71}$ \\
Sparse-MAML+~\cite{sparse-MAML} & PGD & Quadratic \& sparse diagonal & $51.04_{\pm 1.16}$ & $68.05_{\pm 1.65}$ & N/A & N/A \\
Approximate GAP~\cite{GAP} & PGD & Quadratic \& block-diagonal & $53.52_{\pm 0.88}$ & $70.75_{\pm 0.67}$ & $56.86_{\pm 0.91}$ & \lightcell $74.41_{\pm 0.72}$ \\
\hline
GAP ($1.42\times$ slower)~\cite{GAP} & PGD w/ SVD & Riemannian metric & $54.86_{\pm 0.85}$ & $71.55_{\pm 0.61}$ & $57.60_{\pm 0.93}$ & $74.90_{\pm 0.68}$ \\
\hline
MetaMiDA (ours) & MiDA & Convex \& Lipschitz-smooth & \deepcell $56.04_{\pm 1.41}$ & \deepcell $72.06_{\pm 0.67}$ & \deepcell $58.52_{\pm 1.45}$ & \deepcell $75.07_{\pm 0.70}$ \\
\bottomrule
\end{tabular}
\end{center}
\end{table*}

\begin{table*}[t]
\vskip -0.15in
\caption{Performance comparison of MetaMiDA against meta-learning approaches using a WRN-28-10. ``Center'' stands for features from the central crop, while ``multiview'' means features averaged over four corners, central crops, and horizontally mirrored images. $^\dagger$ indicates that both meta-training and meta-validation tasks are used in the meta-training phase.}
\vskip -0.2in
\label{tab:LEO-embed}
\begin{center}
\begin{tabular}{lcccccc}
\toprule
\multirow{2}*{Method} & \multicolumn{2}{c}{MiniImageNet, center, $5$-class} & \multicolumn{2}{c}{MiniImageNet, multiview, $5$-class} & \multicolumn{2}{c}{TieredImageNet, center, $5$-class} \\
 & $1$-shot ($\%$) & $5$-shot ($\%$) & $1$-shot ($\%$) & $5$-shot ($\%$) & $1$-shot ($\%$) & $5$-shot ($\%$) \\
\midrule
MetaSGD~\cite{MetaSGD} & $56.58_{\pm 0.21}$ & $68.84_{\pm 0.19}$ & N/A & N/A & $59.75_{\pm 0.25}$ & $69.04_{\pm 0.22}$ \\
LEO$^\dagger$~\cite{WarpGrad} & $61.76_{\pm 0.08}$ & $77.58_{\pm 0.12}$ & $63.97_{\pm 0.20}$ & $79.49_{\pm 0.70}$ & \lightcell $66.33_{\pm 0.05}$ & $81.44_{\pm 0.09}$ \\
MC~\cite{MetaCurvature} & $61.22_{\pm 0.10}$ & $75.92_{\pm 0.17}$ & N/A & N/A & $66.20_{\pm 0.10}$ & $82.21_{\pm 0.08}$ \\
MC$^\dagger$~\cite{MetaCurvature} & $61.85_{\pm 0.10}$ & $77.02_{\pm 0.11}$ & $64.40_{\pm 0.10}$ & $80.21_{\pm 0.10}$ & \lightcell $67.21_{\pm 0.10}$ & $82.61_{\pm 0.08}$ \\
\hline
MetaMiDA (ours) & \deepcell $63.64_{\pm 1.35}$ & \deepcell $78.58_{\pm 0.58}$ & \deepcell $66.40_{\pm 1.32}$ & \deepcell $84.95_{\pm 0.51}$ & \deepcell $67.56_{\pm 1.28}$ & \deepcell $84.04_{\pm 0.55}$ \\
\bottomrule
\end{tabular}
\end{center}
\vskip -0.1in
\end{table*}

\section{Numerical tests}
In this section, we evaluate experimentally the performance of MetaMiDA, and gain insights into the underlying reason for its effectiveness. All the codes are performed on a desktop with NVIDIA RTX A5000 GPUs, and a server group with NVIDIA A100 GPUs. Implementation details including hyperparameters and $h^*$ are deferred to Appendix~\ref{app:hyperparams}. 

\subsection{Benchmark datasets}
Four popular few-shot classification datasets are considered for performance assessment. The term ``shot'' signifies the number of per-class training data for each $t$. 

\textbf{MiniImageNet}~\cite{MatchingNets} is a subset of the full ImageNet (ILSVRC-12) dataset~\cite{ImageNet} consisting of $60,000$ labeled images. These images are sampled from $100$ classes, each containing $600$ instances. The dataset is split into $64$, $16$, and $20$ disjoint classes according to~\cite{Meta-LSTM}, which can be respectively accessed during the training, validation, and testing phases of meta-learning (aka, meta-training/-validation/-testing) to form few-shot classification tasks. Following the standard preprocessing setup~\cite{Meta-LSTM,MAML}, all images are cropped and resized to $84 \times 84$ pixels. 

\textbf{TieredImageNet}~\cite{tieredImageNet} is a larger subset of the full ImageNet~\cite{ImageNet}, which is composed of $779,165$ labeled images sampled from $608$ distinct classes. These classes are partitioned into 34 categories according to the hierarchy of ImageNet dataset, each category containing $10$ to $30$ classes. The categories are further grouped into $3$ disjoint subsets: $20$ for meta-training, $6$ for meta-validation, and $8$ for meta-testing~\cite{tieredImageNet}. Similar to miniImageNet, the preprocessing also alters the images to $84 \times 84$ pixels. 

\textbf{Caltech-UCSD Birds-200-2011 (CUB)}~\cite{CUB} dataset comprises $11,788$ images of $200$ bird species. Different from ImageNet, which focuses on nature objects of various shapes and colors, the data in CUB are fine-grained images with rich textures and patterns. Following the dataset split from~\cite{baseline++}, $100$ species are used for meta-training, $50$ for meta-validation, and $50$ for meta-testing. Likewise, all the images are preprocessed to the standard $84 \times 84$ size. 

\textbf{Cars}~\cite{Cars} is another fine-grained dataset focusing exclusively on cars of various brands and models. Similar to the CUB dataset, the model is expected to capture and distinguish the subtle difference in appearance of cars. The dataset is formed by $16,185$ images belonging to $196$ classes. The data is partitioned into $8,144$ training images and $8,041$ testing ones, where each class has been split roughly half-and-half. The class split is from~\cite{cross-domain}, with $98$, $49$, and $49$ classes allocated for meta-training, meta-validation, and meta-testing, respectively.

\subsection{Comparison with PGD-based meta-learning}
To showcase the benefit of the advocated generic loss geometries, the first test compares MetaMiDA with existing PGD-based meta-learning approaches. The task-specific model is a standard $4$-block convolutional NN (CNN)~\cite{MatchingNets, MAML}. Each block comprises a $3 \times 3$ convolutional layer, a batch normalization layer, a ReLU activation, and a $2 \times 2$ max pooling layer. After these four blocks, a linear regressor with softmax activation is appended to perform classification. For a fair comparison, MetaMiDA is implemented with a constant meta-learning rate $(\beta_1, \beta_2)$ instead of its per-step estimation via an additional $\hat{\batch}^r$. This choice essentially assumes $\loss{trn}_t$ is Lipschitz-continuous, as discussed in Remark~\ref{remark:comp-exist}. Additionally, we set $K = 5$, which is consistent with all competing alternatives. 

The test is conducted on the 5-class 1-shot and 5-shot classification tasks randomly generated from miniImageNet and tieredImageNet. The metric is the accuracy averaged on $1,000$ random test tasks with its $95\%$ confidence interval. It is observed from Table~\ref{tab:comp-PGD} that our MetaMiDA not only consistently outperforms all the PGD-based methods, but also slightly surpasses the GAP algorithm~\cite{GAP} that requires burdensome SVD computations with a $40\%$ increased time complexity. This demonstrates the effectiveness of learning generalized geometries beyond quadratic ones. 

Similar to~\cite{MetaSGD,MetaCurvature}, MetaMiDA can be also leveraged to fine-tune pre-trained large-scale models. The next test evaluates MetaMiDA on miniImageNet and tieredImageNet using a pre-trained Wide ResNet (WRN)-28-10 model~\cite{LEO}. Following the setups in~\cite{LEO,MetaCurvature}, the task-level optimization~\eqref{eq:MetaMDA-obj-task} fine-tunes only the last block of the WRN, while the feature extractor is frozen to accelerate the meta-training procedure. The results are summarized in Table~\ref{tab:LEO-embed}, where MetaMiDA again presents a remarkable performance gain over alternatives. This test confirms that a generic and expressive loss geometry model can enhance the empirical performance of meta-learning.

\begin{table*}[t]
\vskip -0.1in
\caption{Results of cross-domain adaptation. Models are meta-trained on miniImageNet, and then meta-tested on tieredImageNet, CUB, and Cars. The highest accuracy and mean accuracies within its $95\%$ confidence intervals are marked with dark and light gray, respectively. GAP~\cite{GAP} that relies on SVD has been excluded from comparison due to its significantly increased complexity.}
\vskip -0.2in
\label{tab:cross-domain}
\begin{center}
\begin{tabular}{lccccccc}
\toprule
\multirow{2}*{Method} & Task-level & \multicolumn{2}{c}{TieredImageNet, $5$-class} & \multicolumn{2}{c}{CUB, $5$-class} & \multicolumn{2}{c}{Cars, $5$-class} \\
 & optimizer & $1$-shot ($\%$) & $5$-shot ($\%$) & $1$-shot ($\%$) & $5$-shot ($\%$) & $1$-shot ($\%$) & $5$-shot ($\%$) \\
\midrule
MAML~\cite{MAML} & GD & $51.61_{\pm 0.20}$ & $65.76_{\pm 0.27}$ & $40.51_{\pm 0.08}$ & $53.09_{\pm 0.16}$ & $33.57_{\pm 0.14}$ & $44.56_{\pm 0.21}$ \\
ANIL~\cite{ANIL} & GD & $52.82_{\pm 0.29}$ & $66.52_{\pm 0.28}$ & $41.12_{\pm 0.15}$ & $55.82_{\pm 0.21}$ & $34.77_{\pm 0.31}$ & $46.55_{\pm 0.29}$ \\
BOIL~\cite{BOIL} & GD & $53.23_{\pm 0.41}$ & $69.37_{\pm 0.23}$ & $44.20_{\pm 0.15}$ & $60.92_{\pm 0.11}$ & $36.12_{\pm 0.29}$ & $50.64_{\pm 0.22}$ \\
\hline
Sparse-MAML+~\cite{sparse-MAML} & PGD & $53.91_{\pm 0.67}$ & $69.92_{\pm 0.21}$ & $43.43_{\pm 1.04}$ & $62.02_{\pm 0.78}$ & \lightcell $37.14_{\pm 0.77}$ & $53.18_{\pm 0.44}$ \\
Approximate GAP~\cite{GAP} & PGD & \lightcell $57.47_{\pm 0.99}$ & $71.66_{\pm 0.76}$ & $43.77_{\pm 0.79}$ & $62.92_{\pm 0.73}$ & \lightcell $37.00_{\pm 0.75}$ & $53.28_{\pm 0.76}$ \\
\hline
GAP ($1.42\times$ slower)~\cite{GAP} & PGD w/ SVD & $58.56_{\pm 0.93}$ & $72.82_{\pm 0.77}$ & $44.74_{\pm 0.75}$ & $64.88_{\pm 0.72}$ & $38.44_{\pm 0.77}$ & $55.04_{\pm 0.77}$ \\
\hline
MetaMiDA (ours) & MiDA & \deepcell $58.42_{\pm 1.42}$ & \deepcell $72.50_{\pm 0.74}$ & \deepcell $45.54_{\pm 1.45}$ & \deepcell $64.66_{\pm 0.73}$ & \deepcell $38.36_{\pm 1.41}$ & \deepcell $54.19_{\pm 0.75}$ \\
\bottomrule
\end{tabular}
\end{center}
\vskip -0.1in
\end{table*}

\subsection{Cross-domain generalization}
We next assess MetaMiDA's performance in a more challenging and practical scenario, dubbed cross-domain few-shot learning, in which meta-training and meta-testing tasks are from two different yet related domains. By deliberately increasing the domain gap, this test measures the overfitting of the learned prior to a specific domain (aka, meta-overfitting). In practice, such a behavior can be harmful in rapidly changing environments, preventing generalization of the learned prior to unseen fields. 

The test setup follows from~\cite{BOIL}. Specifically, the prior is meta-trained on miniImageNet with the $4$-block CNN, and meta-tested on tieredImageNet, CUB, and Cars datasets. In addition, $K = 5$ during the meta-training phase, and $K = 10$ in meta-testing to improve adaptation to new tasks from unseen domains. Table~\ref{tab:cross-domain} lists the performance of GD-based, PGD-based, and our MiDA-based approaches. It is seen that MetaMiDA shows superior accuracies on all three datasets. The performance of MetaMiDA is also comparable to the computationally intensive GAP~\cite{GAP} that relies on SVD. This highlights MetaMiDA's remarkable generalization ability despite the large domain gap. Essentially, the loss geometries are related to not only the task data distributions, but also the model structure shared across tasks. Although the domain gap could cause a significant data distribution shift, the learning model remains unchanged across different domains. As a result, domain-generic knowledge can be still captured by the learned loss geometries prior, thus facilitating the learning of cross-domain tasks.

\begin{figure}[t]
\vskip -0.1in
\begin{center}
	\includegraphics[width=1\columnwidth]{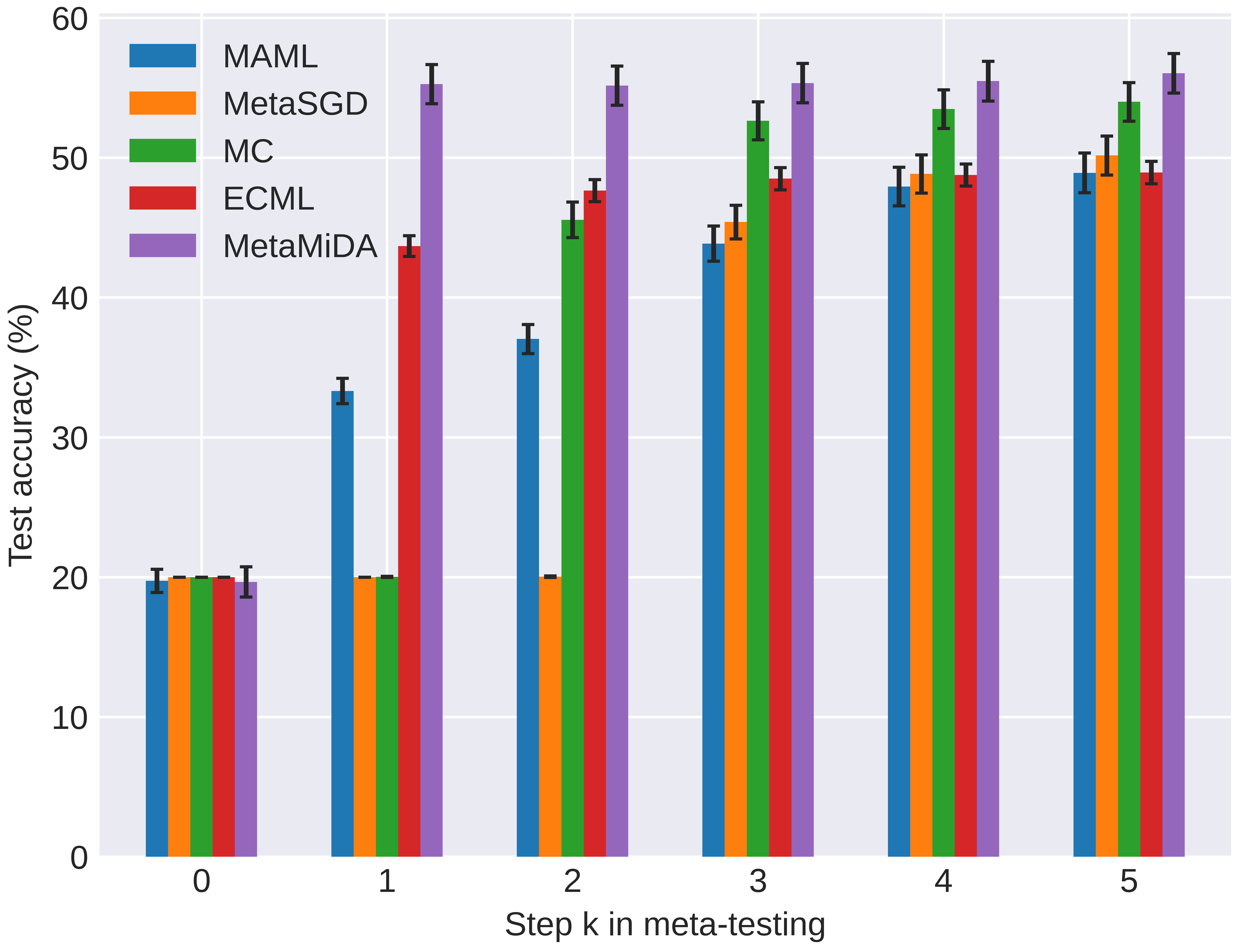}
\end{center}
\vskip -0.2in
\caption{Change of meta-testing accuracy on $5$-class $1$-shot miniImageNet with step $k$. All models are meta-trained and meta-tested with $K = 5$.}
\label{fig:acc-k}
\end{figure}

\subsection{Enhanced performance via accelerated adaptation}
Having confirmed the empirical performance gain of MetaMiDA on popular few-shot classification datasets, the next test analyzes the underlying reason for its superiority. The primary goal of this paper is to learn a more generic loss geometry prior, with which the model can rapidly adapt to new tasks. To illustrate the desired acceleration of task adaptation, Figure~\ref{fig:acc-k} plots the averaged meta-testing accuracy and its $95\%$ confidence interval on $5$-class $1$-shot miniImageNet as a function of $k = 0,1,\ldots,K$. All the methods are meta-trained and meta-tested with $K = 5$, where ECML~\cite{ECML} is a variant of MAML that targets enhanced adaptation convergence by improving the condition number. It is seen that all five meta-learning approaches have comparable accuracies ($\sim 20\%$) at initialization $k=0$. After merely one step, the accuracy of our MetaMiDA increases to over $54\%$, while PGD-based approaches converge more slowly. This is possibly because the quadratic model in PGD-based approaches fails to accurately capture the loss geometries when $k$ is small due to its limited expressivity. In contrast, the generic convex and Lipschitz-smooth model in our MetaMiDA can easily learn this geometric prior, leading to accelerated adaptation. 

Since the computational overhead of meta-learning scales linearly with $K$~\cite{iMAML}, the accelerated adaptation of MetaMiDA implies the possibility of improved scalability via a minimal $K$. In Table~\ref{tab:MetaMiDA-K}, $K$ is reduced from $5$ to $1$ for both meta-training and meta-testing, while the performance of MetaMiDA only deteriorates marginally, yet it remains competitive to state-of-the-art approaches in Table~\ref{tab:comp-PGD}.

\subsection{Complexity and scalability}
This subsection tests the time and space complexities of MetaMiDA in practice. The test is carried out with $5$-class $5$-shot miniImageNet and the $4$-block CNN model. For fairness, the convolutional filters are set to $64$ per block for all methods. 
The results are gathered in Table~\ref{tab:complexity}, where time complexity is gauged relative to MAML, and space complexity is calculated based on the occupied GPU memory. 

Compared to MAML, MetaMiDA with $K=5$ exhibits $14\%$ increased time and $5\%$ increased space complexities. This is slightly inferior to PGD-based methods but superior to the SVD-based approach. The increased complexity stems from the extra gradient computation $\nabla_1 h^*$, and Hessian-vector products involved in backpropagation; cf.~\eqref{eq:app-metagrad-z} and~\eqref{eq:app-metagrad-h}. Fortunately, decreasing $K$ to $1$ can remarkably reduce the computational overhead, thereby enhancing MetaMiDA's scalability. It is noteworthy that the practical time and space complexities are affine rather than linear functions of $K$, due to the constant complexity associated with data reading and storage.

\begin{table}[t]
\vskip -0.15in
\caption{Performance of MetaMiDA with decreased $K$.}
\vskip -0.3in
\label{tab:MetaMiDA-K}
\begin{center}
\begin{tabular}{lcccc}
\toprule
\multirow{2}*{MetaMiDA} & \multicolumn{2}{c}{MiniImageNet, $5$-class} & \multicolumn{2}{c}{TieredImageNet, $5$-class} \\
 & $1$-shot ($\%$) & $5$-shot ($\%$) & $1$-shot ($\%$) & $5$-shot ($\%$) \\
\midrule
$K = 1$ & $55.54_{\pm 1.46}$ & $69.93_{\pm 0.72}$ & $58.96_{\pm 1.43}$ & $74.26_{\pm 0.70}$ \\
$K = 5$ & $56.04_{\pm 1.41}$ & $72.06_{\pm 0.67}$ & $58.52_{\pm 1.45}$ & $75.07_{\pm 0.70}$ \\
\bottomrule
\end{tabular}
\end{center}
\vskip -0.15in
\end{table}

\begin{table}[t]
\caption{Running complexity comparison on $5$-class $5$-shot miniImageNet.}
\vskip -0.2in
\label{tab:complexity}
\begin{center}
\begin{tabular}{lcc}
\toprule
Method & Time (relative) & Space (MB) \\
\midrule
MAML~\cite{MAML} & baseline & $4758$ \\
\hline
MetaSGD~\cite{MetaSGD} & $1.01 \times$ & $4762$ \\
MC~\cite{MetaCurvature} & $1.03 \times$ & $4776$ \\
Approximate GAP~\cite{GAP} & $1.02 \times$ & $4773$ \\
GAP~\cite{GAP} & $1.42 \times$ & $4802$ \\
\hline
MetaMiDA, $K=5$ & $1.14\times$ & $4996$ \\
MetaMiDA, $K=1$ & $0.31\times$ & $3494$ \\
\bottomrule
\end{tabular}
\end{center}
\vskip -0.1in
\end{table}

\section{Concluding remarks}
This contribution has established that the task-level optimization of meta-learning can be enhanced through a versatile loss geometry prior, which captures and optimizes a wide spectrum of loss functions. The proposed MetaMiDA approach generalizes existing PGD-based meta-learning approaches, and offers provable convergence guarantees. Extensive numerical tests on various benchmark datasets illustrate MetaMiDA's superiority in few-shot learning for data-limited applications, cross-domain generalization for rapidly changing environments, and accelerated adaptation with enhanced scalability. 

\ifCLASSOPTIONcompsoc
  \section*{Acknowledgments}
\else
  \section*{Acknowledgment}
\fi
This work was supported by NSF grants 2126052, 2128593, 2212318, 2220292, 2312547, and 2332173. 

\bibliographystyle{IEEEtran}
\bibliography{MetaMiDA_ref}

\newpage

\section*{Biography Section}
\vspace{-1.5in}
\begin{IEEEbiography}[{\includegraphics[width=1in,height=1.25in,clip,keepaspectratio]{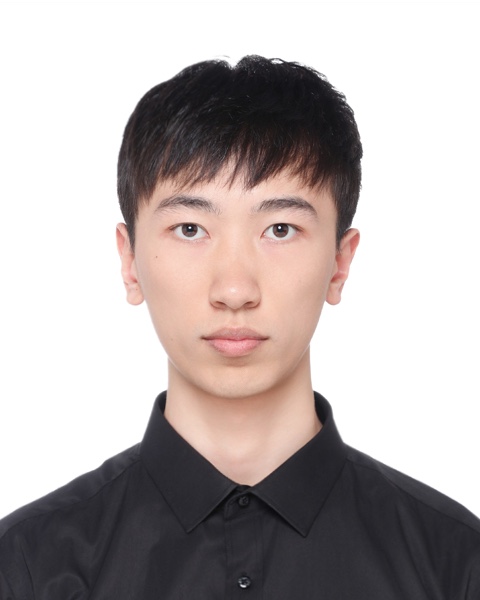}}]{Yilang Zhang} received the B.S. degree in electrical engineering (data science and technology track) from Fudan University, Shanghai, China, in 2020. He is currently working toward the Ph.D. degree in the Department of Electrical and Computer Engineering, University of Minnesota, Minneapolis, USA. His research interests include meta-learning, optimization, and large language models. He is a Member of the SPiNCOM Research Group under the supervision of Prof. Georgios B. Giannakis. He received the National Scholarship from China in 2019, and UMN ADC Graduate Fellowship in 2021.
\end{IEEEbiography}
\vspace{-1.5in}
\begin{IEEEbiography}[{\includegraphics[width=1in,height=1.25in,clip,keepaspectratio]{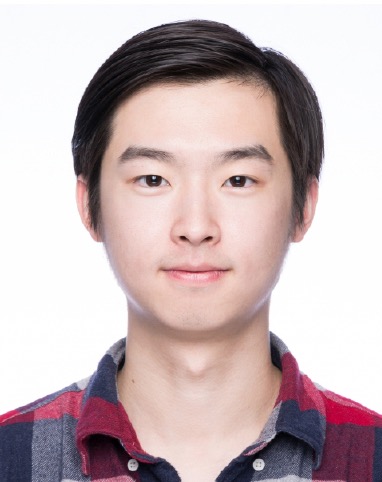}}]{Bingcong Li} received the B.Eng degree (with highest honors) in Information Science and Engineering from Fudan University in 2017, and the Ph.D. degree in Electrical and Computer Engineering from the University of Minnesota in 2022. He is now a Post-Doctoral Research Associate with ETH Zurich, Switzerland. His research interests lie in machine learning, optimization, with application to generalization, robustness and trustworthiness in deep learning and language models. He received the National Scholarship twice from China in 2014 and 2015, and UMN ECE Department Fellowship in 2017. 
\end{IEEEbiography}
\vspace{-1.5in}
\begin{IEEEbiography}[{\includegraphics[width=1in,height=1.25in,clip,keepaspectratio]{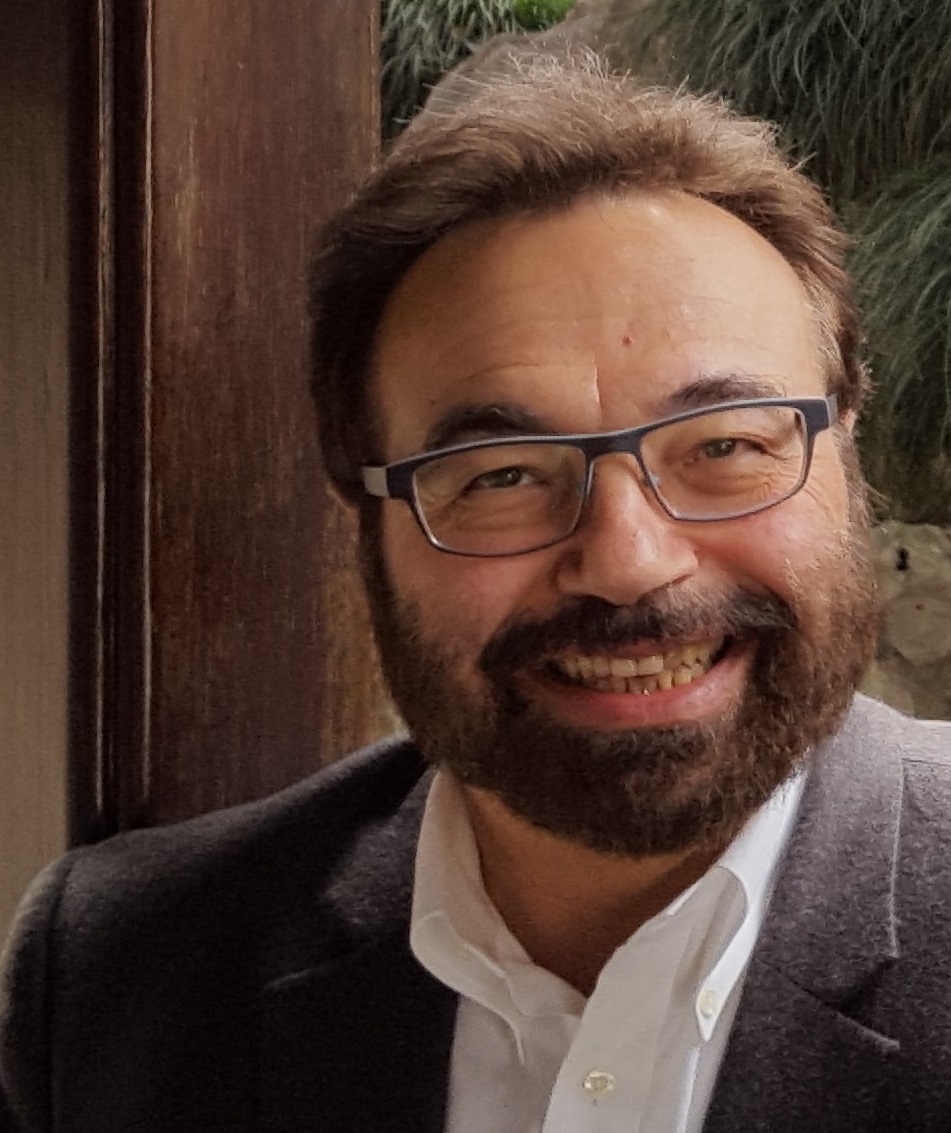}}]{Georgios~B.~Giannakis (F'97)} received his Diploma in Electrical Engr. from the Ntl. Tech. Univ. of Athens, Greece, 1981. From 1982 to 1986 he was with the Univ. of Southern California (USC), where he received his MSc. in Electrical Engineering, 1983, MSc. in Mathematics, 1986, and Ph.D. in Electrical Engr., 1986. He was a faculty member with the University of Virginia from 1987 to 1998, and since 1999 he has been a professor with the Univ. of Minnesota, where he holds an ADC Endowed Chair, a University of Minnesota McKnight Presidential Chair in ECE, and serves as director of the Digital Technology Center. His general interests span the areas of statistical learning, signal processing, communications, and networking - subjects on which he has published more than 480 journal papers, 780 conference papers, 25 book chapters, two edited books and two research monographs. Current research focuses on Data Science, and Network Science with applications to the Internet of Things, and power networks with renewables. He is the (co-) inventor of 34 issued patents, and the (co-) recipient of 10 best journal paper awards from the IEEE Signal Processing (SP) and Communications Societies, including the G. Marconi Prize Paper Award in Wireless Communications. He also received the IEEE-SPS Norbert Wiener Society Award (2019); EURASIP's A. Papoulis Society Award (2020); Technical Achievement Awards from the IEEE-SPS (2000) and from EURASIP (2005); the IEEE ComSoc Education Award (2019); and the IEEE Fourier Technical Field Award (2015). He is a member of the Academia Europaea, and Fellow of the National Academy of Inventors, the European Academy of Sciences, IEEE and EURASIP.
He has served the IEEE in a number of posts, including that of a Distinguished Lecturer for the IEEE-SPS.
\end{IEEEbiography}

%

\clearpage
\appendices
\section{Proof of Theorem~\ref{thm:cvx-Lip-NN} and Corollary~\ref{cor:univ-Lip}}
\label{app:proof-prop-corollary}
\begin{proof}
The proof of Theorem~\ref{thm:cvx-Lip-NN} is inspired by~\cite{ICNN} which studies the convexity of a multi-layer NN, while our analysis further establishes the Lipschitz-smoothness by introducing several additional mild constraints. For notational compactness, denote by $L_\sigma$ and $G_\sigma$ respectively the Lipschitz-continuity and smoothness constants of $\sigma$, and $\| \bfW_i \|_* \le W, \;\forall i$ and $\| \bfM_i \|_* \le M, \;\forall i$ the bounds of NN weights across all layers. 

Theorem~\ref{thm:cvx-Lip-NN} and Corollary~\ref{cor:univ-Lip} can be established via mathematical induction on the total number $I$ of NN layers. First, consider the base case where $I=1$. According to~\eqref{eq:NN-h*}, the NN simplifies to $h^*(\bfz;\bftheta_h) = \bfa_1 = \sigma \big((\bfW_1 + \bfM_1)^\top \bfz + \bfb_1\big)$, where $\bfW_1, \bfM_1 \in \real^d$ reduces to column vectors. Since $\sigma$ is convex and non-decreasing, and pre-activation $f_1(\bfz) := (\bfW_1 + \bfM_1)^\top \bfz + \bfb_1$ is linear thus convex, their composition $h^* = \sigma \circ f_1$ is thereby convex. Furthermore, it follows that
\begin{align*}
	&\| \nabla_1 h^*(\bfz;\bftheta_h) - \nabla_1 h^* (\tilde{\bfz}; \bftheta_h) \|_* \\ 
	&= |\sigma'\big(f_1(\bfz)\big) - \sigma'\big(f_1(\tilde{\bfz})\big)| \| \bfW_1 + \bfM_1 \|_* \\
	&\le G_\sigma | f_1(\bfz) - f_1(\tilde{\bfz}) | \| \bfW_1 + \bfM_1 \|_*  \\
	&= G_\sigma |(\bfW_1 + \bfM_1) (\bfz - \tilde{\bfz})| \| \bfW_1 + \bfM_1 \|_*  \\
	&\overset{(a)}{\le} G_\sigma \| \bfW_1 + \bfM_1 \|_*^2 \| \bfz - \tilde{\bfz} \|  \\
	&\le G_\sigma(W+M)^2 \| \bfz - \tilde{\bfz} \|
\end{align*}
where $(a)$ uses Cauchy-Schwarz inequality. In addition to Lipschitz-smoothness, $h^*$ is also Lipschitz-continuous via
\begin{align*}
	| h^*(\bfz;\bftheta_h) - h^*(\tilde{\bfz};\bftheta_h) | 
	&= |\sigma\big(f_1(\bfz)\big) - \sigma\big(f_1(\tilde{\bfz})\big)| \\
	&\le L_\sigma |f_1(\bfz) - f_1(\tilde{\bfz})| \\
	&= L_\sigma |(\bfW_1 + \bfM_1) (\bfz - \tilde{\bfz})| \\
	&\le L_\sigma \| \bfW_1 + \bfM_1 \|_* \| \bfz - \tilde{\bfz} \| \\
	&\le L_\sigma (W+M) \| \bfz - \tilde{\bfz} \|.
\end{align*}
It is worth noting that the continuity and smoothness constants $L_\sigma (W+M)$ and $G_\sigma (W+M)^2$ neither rely on $\bftheta_h$. 

Now assume with any $\bftheta_h$ satisfying Theorem~\ref{thm:cvx-Lip-NN}, $h^*$ is convex, Lipschitz-continuous, and Lipschitz-smooth wrt $\bfz$ for $I=1,\ldots,I_0$. We next show that the same properties also hold for $I=I_0+1$. For the case $I=I_0+1$,~\eqref{eq:NN-h*} implies $h^*(\bfz; \bftheta_h) = \bfa_{I_0+1} = \sigma(\bfW_{I_0+1}^\top \bfa_{I_0} + \bfM_{I_0+1}^\top \bfz + \bfb_{I_0})$, where $\bfW_{I_0+1}, \bfM_{I_0+1} \in \real^d$ boils down to vectors. As each entry $[\bfa_{I_0}]_j, \forall j$ is an $I_0$-layer NN defined by~\eqref{eq:NN-h*}, the inductive hypothesis implies $[\bfa_{I_0}]_j$ is convex, Lipschitz-continuous, and Lipschitz-smooth wrt $\bfz$ regardless of $\bftheta_h$. For later use, let $L_{h,I_0}$ and $G_{h,I_0}$ be the maximum Lipschitz-continuity and smoothness constants for $[\bfa_{I_0}]_j$ across the index $j$. Notice that since $\bfW_{I_0+1}$ is element-wise non-negative, $\bfW_{I_0+1}^\top \bfa_{I_0}$ is a conical combination of functions convex in $\bfz$, and is thus also convex in $\bfz$. Then, it is straightforward to see that the pre-activation $f_{I_0+1} (\bfz) := \bfW_{I_0+1}^\top \bfa_{I_0} + \bfM_{I_0+1}^\top \bfz + \bfb_{I_0}$ is convex. Hence, the composition $h^* = \sigma \circ f_{I_0+1}$ is convex. 

Next, we conclude the induction by proving that $h^*$ is Lipschitz-continuous and smooth with $I=I_0+1$. For $\forall \bfz, \tilde{\bfz} \in \real^d$, let $\tilde{\bfa}_i$ be the hidden neurons of the $i$-th layer when $\tilde{\bfz}$ serves as the input of $h^*$. On one hand,
\begin{align}
\label{eq:app-lemma-Lip}
	&\| \nabla_1 h^*(\bfz;\bftheta_h) - \nabla_1 h^* (\tilde{\bfz}; \bftheta_h) \|_* \nonumber \\ 
	&= \| \sigma'(f_{I_0+1}(\bfz))  \big( \nabla_\bfz \bfa_{I_0} \bfW_{I_0+1}  + \bfM_{I_0+1} \big) - \nonumber \\
	&\qquad \sigma'(f_{I_0+1}(\tilde{\bfz})) \big( \nabla_\bfz \tilde{\bfa}_{I_0} \bfW_{I_0+1}  + \bfM_{I_0+1} \big) \|_* \nonumber \\
	&\le \|\sigma'(f_{I_0+1}(\bfz)) \nabla_\bfz \bfa_{I_0}  - \sigma'\big(f_{I_0+1}(\tilde{\bfz})\big) \nabla_\bfz \tilde{\bfa}_{I_0} \|_*  \| \bfW_{I_0+1} \|_* + \nonumber \\
	&\quad\;\, \| \sigma'(f_{I_0+1}(\bfz)) - \sigma'\big(f_{I_0+1}(\tilde{\bfz})\big) \|_* \| \bfM_{I_0+1} \|_* \nonumber \\
	&\le W |\sigma'(f_{I_0+1}(\bfz))| \times \| \nabla_\bfz \bfa_{I_0} - \nabla_\bfz \tilde{\bfa}_{I_0} \|_* + \nonumber \\
	&\quad\;\, W | \sigma'(f_{I_0+1}(\bfz)) - \sigma'(f_{I_0+1}(\tilde{\bfz})) | \times \| \nabla_\bfz \tilde{\bfa}_{I_0} \| + \nonumber \\
	&\quad\;\, MG_\sigma | f_{I_0+1}(\bfz) - f_{I_0+1}(\tilde{\bfz}) | \nonumber \\
	&\le W L_\sigma G_{h,I_0} \| \bfz - \tilde{\bfz} \| + W L_{h,I_0} G_\sigma | f_{I_0+1}(\bfz) - f_{I_0+1}(\tilde{\bfz}) | + \nonumber \\
	&\quad\;\, MG_\sigma | f_{I_0+1}(\bfz) - f_{I_0+1}(\tilde{\bfz}) |.
\end{align}
Using Cauchy-Schwarz inequality yields
\begin{align}
\label{eq:app-lemma-preact}
	&| f_{I_0+1}(\bfz) - f_{I_0+1}(\tilde{\bfz}) | \nonumber \\
	&= | \bfW_{I_0+1}^\top (\bfa_{I_0} - \tilde{\bfa}_{I_0}) + \bfM_{I_0+1}^\top (\bfz - \tilde{\bfz}) | \nonumber \\
	&\le \| \bfW_{I_0+1} \|_* \| \bfa_{I_0} - \tilde{\bfa}_{I_0} \| + \| \bfM_{I_0+1} \|_* \| \bfz - \tilde{\bfz}\|_2 \nonumber \\
	&\le ( WL_{h,I_0} + M ) \| \bfz - \tilde{\bfz} \|. 
\end{align}
Combining~\eqref{eq:app-lemma-Lip} with~\eqref{eq:app-lemma-preact} renders the Lipschitz-smoothness
\begin{equation*}
	\| \nabla_1 h^*(\bfz;\bftheta_h) - \nabla_1 h^* (\tilde{\bfz}; \bftheta_h) \|_* \le G_{h,I_0+1} \| \bfz - \tilde{\bfz} \|
\end{equation*}
where $G_{h,I_0+1} := W L_\sigma G_{h,I_0} + G_\sigma (W L_{h,I_0} + M ) (W L_{h,I_0} + M)$. 
On the other hand, the Lipschitz-continuity holds via
\begin{align*}
	| h^*(\bfz;\bftheta_h) - h^*(\tilde{\bfz};\bftheta_h) | 
	&= | \sigma(f_{I_0+1}(\bfz)) - \sigma(f_{I_0+1}(\tilde{\bfz})) | \\
	&\le L_\sigma |f_{I_0+1}(\bfz) - f_{I_0+1}(\tilde{\bfz})| \\
	&\overset{(a)}{\le} L_\sigma (WL_{h,I_0} + M) \| \bfz - \tilde{\bfz} \| \\
	&:= L_{h,I_0+1} \| \bfz - \tilde{\bfz} \|
\end{align*}
where $(a)$ follows from~\eqref{eq:app-lemma-preact}. 

Finally, it is easy to see that $L_{h,I_0+1}$ and $G_{h,I_0+1}$ are independent of $\bftheta_h$ because the inductive hypothesis implies that all the constants in their definitions are universal wrt $\bftheta_h$. This concludes the induction. 
\end{proof}

\section{Proof of Proposition~\ref{prop:meta-smooth}}
We first prove~\eqref{eq:meta-smooth-z}. Defining primal variable $\bfphi_t^k := \nabla_1 h^*(\bfz_t^k;\bftheta_h)$, it follows from the chain rule that
\begin{align}
\label{eq:app-metagrad-z}
	\nabla_1 \metaloss (\bftheta) 
	&= \expect_t \nabla_1 \metaloss_t (\bftheta_z,\bftheta_h) \nonumber \\
	&= \expect_t \bigg\{ \prod_{k=0}^{K-1} \big[  \nabla_{\bfz_t^k} \bfz_t^{k+1} \big]  \nabla_1 (\loss{val}_t \circ \nabla_1 h^*) (\bfz_t^K;\bftheta_h) \bigg\} \nonumber \\
	&= \expect_t \bigg\{ \prod_{k=0}^{K-1} \big[ \bfI_d - \alpha \nabla_1 (\nabla \loss{trn}_t \circ \nabla_1 h^*) (\bfz_t^k; \bftheta_h) \big] \times \nonumber \\
	&\qquad\quad\; \nabla_1^2 h^*(\bfz_t^K;\bftheta_h) \nabla \loss{val}_t (\bfphi_t^K) \bigg\} \nonumber \\
	&:= \expect_t \bigg\{ \prod_{k=0}^{K-1} \big[ \bfI_d - \alpha \bfG_t^k \big] \bfg_t^K \bigg\} 
\end{align}
where $\bfG_t^k := \nabla_1 (\nabla \loss{trn}_t \circ \nabla_1 h^*) (\bfz_t^k; \bftheta_h) = \nabla_1^2 h^*(\bfz_t^k;\bftheta_h) \allowbreak \nabla^2\loss{trn}_t(\bfphi_t^k)$. 

Let $\tilde{\bfz}_t^k$ and $\tilde{\bfphi}_t^k$ be the dual and primal variables of the $k$-th iteration incurred by meta-parameter $\tilde{\bftheta} \in \real^D$, $\tilde{\bfG}_t^k := \nabla_1^2 h^*(\tilde{\bfz}_t^k;\tilde{\bftheta}_h) \nabla^2\loss{trn}_t(\tilde{\bfphi}_t^k)$, and $\tilde{\bfg}_t^k := \nabla_1^2 h^*(\tilde{\bfz}_t^K;\tilde{\bftheta}_h) \allowbreak \nabla \loss{val}_t (\tilde{\bfphi}_t^K)$. The per-task smoothness wrt $\bftheta_z$ is 
\begin{align}
\label{eq:app-meta-smooth-z}
	&\| \nabla_1 \metaloss_t(\bftheta) - \nabla_1 \metaloss_t(\tilde{\bftheta}) \|_2 \nonumber \\
	&\le \bigg\| \prod_{k=0}^{K-1} \big[ \bfI_d - \alpha \bfG_t^k \big] \bfg_t^K - \prod_{k=0}^{K-1} \big[ \bfI_d - \alpha \tilde{\bfG}_t^k \big] \bfg_t^K \bigg\|_2 + \nonumber \\
	&\quad\;\, \bigg\| \prod_{k=0}^{K-1} \big[ \bfI_d - \alpha \tilde{\bfG}_t^k \big] \bfg_t^K - \prod_{k=0}^{K-1} \big[ \bfI_d - \alpha \tilde{\bfG}_t^k \big] \tilde{\bfg}_t^K \bigg\|_2 \nonumber \\
	&\le \bigg\| \prod_{k=0}^{K-1} \big[ \bfI_d - \alpha \bfG_t^k \big] - \prod_{k=0}^{K-1} \big[ \bfI_d - \alpha \tilde{\bfG}_t^k \big] \bigg\|_2 \| \bfg_t^K \|_2 + \nonumber \\
	&\quad\;\, \prod_{k=0}^{K-1} \| \bfI_d - \alpha \tilde{\bfG}_t^k \|_2  \| \bfg_t^K - \tilde{\bfg}_t^K \|_2. 
\end{align}

From Assumptions~\ref{as:loss-fn}.ii) and~\ref{as:inv-mirror-map}.i), it holds that
\begin{align}
\label{eq:app-gamma}
	\| \bfI_d - \alpha \tilde{\bfG}_t^k \|_2 &\le 1 + \alpha \| \nabla_1^2 h^*(\tilde{\bfz}_t^k;\tilde{\bftheta}_h) \|_2 \| \nabla^2\loss{trn}_t(\tilde{\bfphi}_t^k) \|_2 \nonumber \\
	&\le 1 + \alpha G_h G_\loss{} = \gamma. 
\end{align}
The same bound holds for $\| \bfI_d - \alpha \bfG_t^k \|_2,\forall k$, hence we get
\begin{align}
\label{eq:app-prodIaG-Lip}
	&\bigg\| \prod_{k=0}^{K-1} \big[ \bfI_d - \alpha \bfG_t^k \big] - \prod_{k=0}^{K-1} \big[ \bfI_d - \alpha \tilde{\bfG}_t^k \big] \bigg\|_2 \nonumber \\
	&\le \bigg\| \prod_{k=0}^{K-1} \big[ \bfI_d - \alpha \bfG_t^k \big] - \prod_{k=0}^{K-2} \big[ \bfI_d - \alpha \tilde{\bfG}_t^k \big] (\bfI_d - \alpha \bfG_t^{K-1}) \bigg\|_2 + \nonumber \\
	&\quad\;\, \bigg\| \prod_{k=0}^{K-2} \big[ \bfI_d - \alpha \tilde{\bfG}_t^k \big] (\bfI_d - \alpha \bfG_t^{K-1}) - \prod_{k=0}^{K-1} \big[ \bfI_d - \alpha \tilde{\bfG}_t^k \big] \bigg\|_2 \nonumber \\
	&\le \bigg\| \prod_{k=0}^{K-2} \big[ \bfI_d - \alpha \bfG_t^k \big] - \prod_{k=0}^{K-2} \big[ \bfI_d - \alpha \tilde{\bfG}_t^k \big] \bigg\|_2 \| \bfI_d - \alpha \bfG_t^{K-1} \|_2 + \nonumber \\
	&\quad\;\, \prod_{k=0}^{K-2} \big[ \| \bfI_d - \alpha \tilde{\bfG}_t^k \|_2 \big] \| \alpha ( \bfG_t^{K-1} - \tilde{\bfG}_t^{K-1} ) \|_2 \nonumber \\
	&\overset{(a)}{\le} \gamma \bigg\| \prod_{k=0}^{K-2} \big[ \bfI_d - \alpha \bfG_t^k \big] - \prod_{k=0}^{K-2} \big[ \bfI_d - \alpha \tilde{\bfG}_t^k \big] \bigg\|_2 + \nonumber \\
	&\quad\;\, \alpha \gamma^{K-1} \| \bfG_t^{K-1} - \tilde{\bfG}_t^{K-1} \|_2 \nonumber \\
	&\overset{(b)}{\le} \alpha \gamma^{K-1} \sum_{k=0}^{K-1} \| \bfG_t^k - \tilde{\bfG}_t^k \|_2
\end{align}
where $(a)$ leverages~\eqref{eq:app-gamma}, and $(b)$ is by telescoping. 

Then applying Assumptions~\ref{as:loss-fn} and~\ref{as:inv-mirror-map}, we can bound
\begin{align}
\label{eq:app-Gtk-Lip}
	&\| \bfG_t^k - \tilde{\bfG}_t^k \|_2 \nonumber \\
	&\le \| \bfG_t^k - \nabla_1^2 h^*(\bfz_t^k;\bftheta_h) \nabla^2\loss{trn}_t(\tilde{\bfphi}_t^k) \|_2 + \nonumber \\
	&\quad\;\, \| \nabla_1^2 h^*(\bfz_t^k;\bftheta_h) \nabla^2\loss{trn}_t(\tilde{\bfphi}_t^k) - \tilde{\bfG}_t^k \|_2 \nonumber \\
	&\le \| \nabla_1^2 h^*(\bfz_t^k;\bftheta_h) \|_2 \| \nabla^2\loss{trn}_t(\bfphi_t^k) - \nabla^2\loss{trn}_t(\tilde{\bfphi}_t^k) \|_2 + \nonumber \\
	&\quad\;\, \| \nabla_1^2 h^*(\bfz_t^k;\bftheta_h) - \nabla_1^2 h^*(\tilde{\bfz}_t^k; \tilde{\bftheta}_h) \|_2 \| \nabla^2\loss{trn}_t(\tilde{\bfphi}_t^k) \|_2 \nonumber \\
	&\le G_h H_\loss{} \| \bfphi_t^k - \tilde{\bfphi}_t^k \|_2 + G_\loss{}h_h (\| \bfz_t^k - \tilde{\bfz}_t^k \|_2 + \| \bftheta_h - \tilde{\bftheta}_h \|_2) \nonumber \\
	&= G_h H_\loss{} \| \nabla_1 h^*(\bfz_t^k; \bftheta) - \nabla_1 h^*(\tilde{\bfz}_t^k; \tilde{\bftheta}) \|_2 + \nonumber \\
	&\quad\;\, G_\loss{}h_h (\| \bfz_t^k - \tilde{\bfz}_t^k \|_2 + \| \bftheta_h - \tilde{\bftheta}_h \|_2) \nonumber \\
	&\le (G_h^2 H_\loss{} + G_\loss{}h_h) (\| \bfz_t^k - \tilde{\bfz}_t^k \|_2 + \| \bftheta_h - \tilde{\bftheta}_h \|_2) \nonumber \\
	&\le \gamma^k (G_h^2 H_\loss{} + G_\loss{}h_h) \| \bftheta - \tilde{\bftheta} \|_2
\end{align}
where the last inequality leverages Lemma~\ref{lemma:ztk-Lip}. 

Relating~\eqref{eq:app-prodIaG-Lip} to~\eqref{eq:app-Gtk-Lip} leads to
\begin{align}
\label{eq:app-prodIaG-Lip-bound}
	&\bigg\| \prod_{k=0}^{K-1} \big[ \bfI_d - \alpha \bfG_t^k \big] - \prod_{k=0}^{K-1} \big[ \bfI_d - \alpha \tilde{\bfG}_t^k \big] \bigg\|_2 \nonumber \\
	&\le \gamma^{K-1} (\gamma^K - 1) (G_h\frac{H_\loss{}}{G_\loss{}} + \frac{H_h}{G_h}).
\end{align}

Next, we bound the term 
\begin{align}
\label{eq:app-gtK-Lip}
	&\| \bfg_t^K - \tilde{\bfg}_t^K \|_2 \nonumber\\
	&= \| \nabla_1 (\loss{val}_t \circ \nabla_1 h^*) (\bfz_t^K;\bftheta_h) - \nabla_1 (\loss{val}_t \circ \nabla_1 h^*) (\tilde{\bfz}_t^K; \tilde{\bftheta}_h) \|_2 \nonumber \\ 
	&\overset{(a)}{\le} G_{\loss{}h} (\| \bfz_t^k - \tilde{\bfz}_t^K \|_2 + \| \bftheta_h - \tilde{\bftheta}_h \|_2 ) \nonumber \\
	&\overset{(b)}{\le} \gamma^K G_{\loss{}h} \| \bftheta - \tilde{\bftheta} \|_2
\end{align}
where $(a)$ and $(b)$ are respectively due to Assumption~\ref{as:loss-fn}.iv) and Lemma~\ref{lemma:ztk-Lip}. 

Now, combing~\eqref{eq:app-meta-smooth-z} with~\eqref{eq:app-gamma},~\eqref{eq:app-prodIaG-Lip-bound},~\eqref{eq:app-gtK-Lip}, and the definition of $C_{G, 1}$ results in
\begin{equation*}
	\| \nabla_1 \metaloss_t(\bftheta) - \nabla_1 \metaloss_t(\tilde{\bftheta}) \|_2 \le \big( C_{G,1} \| \bfg_t^K \|_2 + \gamma^{2K} G_{\loss{}h} \big) \| \bftheta - \tilde{\bftheta} \|_2. 
\end{equation*}
Taking $\expect_t$ on both side and using Jensen's inequality to exchange $\expect_t$ with the left $\| \cdot \|_2$ finish the proof of~\eqref{eq:meta-smooth-z}. 

To wrap up the proof, we next show~\eqref{eq:meta-smooth-h}. Similarly, from the chain rule we can obtain
\begin{align}
\label{eq:app-metagrad-h}
	&\nabla_2 \metaloss (\bftheta) = \expect_t \nabla_2 \metaloss_t (\bftheta_z, \bftheta_h) \nonumber\\
    &= \expect_t \big\{ \nabla_{\bftheta_h} \bfz_t^K \nabla_1 (\loss{val}_t \circ \nabla_1 h^*) (\bfz_t^K; \bftheta_h) + \nonumber \\
	&\quad\;\,\quad \nabla_2 (\loss{val}_t \circ \nabla_1 h^*) (\bfz_t^K; \bftheta_h) \big\} \nonumber\\
	&= \expect_t \big\{ \nabla_{\bftheta_h} \bfz_t^{K-1} \nabla_{\bfz_t^{K-1}} \bfz_t^K \bfg_t^K +  \nabla_2 (\loss{val}_t \circ \nabla_1 h^*) (\bfz_t^K; \bftheta_h) \big\} \nonumber \\
	&\overset{(a)}{=} \expect_t \bigg\{ \sum_{k=0}^{K-1}  \Big[ \nabla_{\bftheta_h} \bfz_t^{k+1} \prod_{l=k+1}^{K-1} \nabla_{\bfz_t^l} \bfz_t^{l+1} \Big] \bfg_t^K + \nonumber \\
	&\quad\;\,\quad \nabla_2 (\loss{val}_t \circ \nabla_1 h^*) (\bfz_t^K; \bftheta_h) \big\} \nonumber \\
	&\overset{(b)}{=} \expect_t \bigg\{ \sum_{k=0}^{K-1} \Big[ -\alpha \nabla_2 (\nabla \loss{trn}_t \circ \nabla_1 h^*) (\bfz_t^k; \bftheta_h) \times \nonumber \\
	&\quad\;\,\quad \prod_{l=k+1}^{K-1} (\bfI_d - \alpha \bfG_t^l) \Big] \bfg_t^K + \nabla_2 (\loss{val}_t \circ \nabla_1 h^*) (\bfz_t^K; \bftheta_h) \bigg\} \nonumber \\
	&:= \expect_t \bigg\{ -\alpha \sum_{k=0}^{K-1} \bfH_t^k \bfg_t^K + \bfh_t^K \bigg\}
\end{align}
where $(a)$ is by recursively applying the third equation for $K-1$ times, $(b)$ utilizes~\eqref{eq:MetaMDA-obj-task}, $\bfh_t^K := \nabla_2 (\loss{val}_t \circ \nabla_1 h^*) (\bfz_t^K; \bftheta_h)$, and $\bfH_t^k := \nabla_2 (\nabla \loss{trn}_t \circ \nabla_1 h^*) (\bfz_t^k; \bftheta_h) \prod_{l=k+1}^{K-1} (\bfI_d - \alpha \bfG_t^l)$. 

Likewise, define $\tilde{\bfH}_t^k$ and $\tilde{\bfh}_t^K$ the corresponding matrix and vector rendered by initialization $\tilde{\bftheta}_z$. The per-task smoothness wrt $\bftheta_h$ is 
\begin{align}
\label{eq:app-meta-smooth-h}
	&\| \nabla_1 \metaloss_t(\bftheta) - \nabla_1 \metaloss_t(\tilde{\bftheta}) \|_2 \nonumber \\
	&\le \alpha \sum_{k=0}^{K-1} \big[ \| \bfH_t^k \bfg_t^K - \tilde{\bfH}_t^k \tilde{\bfg}_t^K \|_2 \big] + \| \bfh_t^K - \tilde{\bfh}_t^K \|_2 \nonumber \\
	&\le \alpha \sum_{k=0}^{K-1} \big[ \| \bfH_t^k \bfg_t^K - \tilde{\bfH}_t^k \bfg_t^K \|_2 + \| \tilde{\bfH}_t^k \bfg_t^K - \tilde{\bfH}_t^k \tilde{\bfg}_t^K \|_2 \big] + \nonumber \\
	&\quad\;\, \| \bfh_t^K - \tilde{\bfh}_t^K \|_2 \nonumber \\
	&\le \alpha \sum_{k=0}^{K-1} \big[ \| \bfH_t^k - \tilde{\bfH}_t^k \|_2 \| \bfg_t^K \|_2 + \| \tilde{\bfH}_t^k \|_2 \| \bfg_t^K - \tilde{\bfg}_t^K \|_2 \big] + \nonumber \\
	&\quad\;\, \| \bfh_t^K - \tilde{\bfh}_t^K \|_2. 
\end{align}

Notice that
\begin{align}
\label{eq:app-Htk-smooth}
	&\| \bfH_t^k - \tilde{\bfH}_t^k \|_2 \nonumber \\
	&\le \bigg\| \bfH_t^k - \nabla_2 (\nabla \loss{trn}_t \circ \nabla_1 h^*) (\tilde{\bfz}_t^k; \tilde{\bftheta}_h) \prod_{l=k+1}^{K-1} (\bfI_d - \alpha \bfG_t^l) \bigg\|_2 + \nonumber \\
	&\quad\;\, \bigg\| \nabla_2 (\nabla \loss{trn}_t \circ \nabla_1 h^*) (\tilde{\bfz}_t^k; \tilde{\bftheta}_h) \prod_{l=k+1}^{K-1} (\bfI_d - \alpha \bfG_t^l) - \tilde{\bfH}_t^k \bigg\|_2 \nonumber \\
	&\le \| \nabla_2 (\nabla  \loss{trn}_t \circ \nabla_1 h^*) (\bfz_t^k; \bftheta_h) - \nabla_2 (\nabla \loss{trn}_t \circ \nabla_1 h^*) (\tilde{\bfz}_t^k; \tilde{\bftheta}_h) \|_2 \times \nonumber \\
	&\quad\;\, \prod_{l=k+1}^{K-1} \big\| \bfI_d - \alpha \bfG_t^l \big\|_2 + \| \nabla_2 (\nabla \loss{trn}_t \circ \nabla_1 h^*) (\tilde{\bfz}_t^k; \tilde{\bftheta}_h) \|_2 \times \nonumber \\
	&\quad\;\, \bigg\| \prod_{l=k+1}^{K-1} (\bfI_d - \alpha \bfG_t^l) - \prod_{l=k+1}^{K-1} (\bfI_d - \alpha \tilde{\bfG}_t^l) \bigg\|_2. 
\end{align}

On one hand, we have
\begin{align}
\label{eq:app-mixJac-Lip}
	&\| \nabla_2 (\nabla \loss{trn}_t \circ \nabla_1 h^*) (\bfz_t^k; \bftheta_h) - \nabla_2 (\nabla \loss{trn}_t \circ \nabla_1 h^*) (\tilde{\bfz}_t^k; \tilde{\bftheta}_h) \|_2 \nonumber \\
	&= \| \nabla_2 \nabla_1 h^*(\bfz_t^k; \bftheta_h) \nabla^2 \loss{trn}_t (\bfphi_t^k) - \nonumber
	 \\
	&\quad\;\, \nabla_2 \nabla_1 h^*(\tilde{\bfz}_t^k; \tilde{\bftheta}_h) \nabla^2 \loss{trn}_t (\tilde{\bfphi}_t^k) \|_2 \nonumber \\
	&\le \| \nabla_2 \nabla_1 h^*(\bfz_t^k; \bftheta_h) - \nabla_2 \nabla_1 h^*(\tilde{\bfz}_t^k; \tilde{\bftheta}_h) \|_2 \| \nabla^2 \loss{trn}_t (\bfphi_t^k) \|_2 + \nonumber
	 \\
	&\quad\;\, \| \nabla_2 \nabla_1 h^*(\tilde{\bfz}_t^k; \tilde{\bftheta}_h) \|_2 \| \nabla^2 \loss{trn}_t (\bfphi_t^k) - \nabla^2 \loss{trn}_t (\tilde{\bfphi}_t^k) \|_2 \nonumber \\
	&\overset{(a)}{\le} H_h G_\loss{} (\| \bfz_t^k - \tilde{\bfz}_t^k \|_2 + \| \bftheta_h - \tilde{\bftheta}_t \|_2) + \nonumber \\
	&\quad\;\, G_h H_\loss{} \| \nabla_1 h^* (\bfz_t^k; \bftheta_h) - \nabla_1 h^* (\tilde{\bfz}_t^k; \tilde{\bftheta}_h) \| _2 \nonumber \\
	&\overset{(b)}{\le} (G_\loss{}h_h + G_h^2 H_\loss{}) (\| \bfz_t^k - \tilde{\bfz}_t^k \|_2 + \| \bftheta_h - \tilde{\bftheta}_t \|_2) \nonumber \\
	&\overset{(c)}{\le} \gamma^k (G_\loss{}h_h + G_h^2 H_\loss{}) \| \bftheta - \tilde{\bftheta} \|_2
\end{align}
where $(a)$ and $(b)$ use Assumptions~\ref{as:loss-fn} and~\ref{as:inv-mirror-map}, and $(b)$ relies on Theorem~\ref{thm:cvx-Lip-NN}. 

On the other hand, using Assumptions~\ref{as:loss-fn} and~\ref{as:inv-mirror-map} gives 
\begin{align}
\label{eq:app-mixJac-norm}
	&\| \nabla_2 (\nabla \loss{trn}_t \circ \nabla_1 h^*) (\tilde{\bfz}_t^k; \tilde{\bftheta}_h) \|_2 \nonumber \\
	&= \| \nabla_2 \nabla_1 h^*(\tilde{\bfz}_t^k; \tilde{\bftheta}_h) \nabla^2 \loss{trn}_t (\tilde{\bfphi}_t^k) \|_2 \nonumber \\
	&\le \| \nabla_2 \nabla_1 h^*(\tilde{\bfz}_t^k; \tilde{\bftheta}_h) \|_2 \| \nabla^2 \loss{trn}_t (\tilde{\bfphi}_t^k) \|_2 \le G_h G_\loss{}
\end{align}
and
\begin{align}
\label{eq:app-k+1prodIaG-Lip}
	&\bigg\| \prod_{l=k+1}^{K-1} (\bfI_d - \alpha \bfG_t^l) - \prod_{l=k+1}^{K-1} (\bfI_d - \alpha \tilde{\bfG}_t^l) \bigg\|_2 \nonumber \\
	&\le \bigg\| \prod_{l=k+1}^{K-1} (\bfI_d - \alpha \bfG_t^l) - (\bfI_d - \alpha \tilde{\bfG}_t^{k+1}) \prod_{l=k+2}^{K-1} (\bfI_d - \alpha \bfG_t^l) \bigg\|_2 + \nonumber \\
	&\quad\;\, \bigg\| (\bfI_d - \alpha \tilde{\bfG}_t^{k+1}) \prod_{l=k+2}^{K-1} (\bfI_d - \alpha \bfG_t^l) - \prod_{l=k+1}^{K-1} (\bfI_d - \alpha \tilde{\bfG}_t^l) \bigg\|_2 \nonumber \\
	&\le \big\| \alpha (\bfG_t^{k+1} - \tilde{\bfG}_t^{k+1}) \big\|_2 \bigg\| \prod_{l=k+2}^{K-1} (\bfI_d - \alpha \bfG_t^l) \bigg\|_2 + \nonumber \\
	&\quad\;\, \big\| \bfI_d - \alpha \tilde{\bfG}_t^{k+1} \big\|_2 \bigg\| \prod_{l=k+2}^{K-1} (\bfI_d - \alpha \bfG_t^l) - \prod_{l=k+2}^{K-1} (\bfI_d - \alpha \tilde{\bfG}_t^l) \bigg\|_2 \nonumber \\
	&\overset{(a)}{\le} \alpha \gamma^{k+1} (G_h^2 H_\loss{} + G_\loss{}h_h) \| \bftheta - \tilde{\bftheta} \|_2 \times \gamma^{K-k-2} + \nonumber \\
	&\quad\;\, \gamma \bigg\| \prod_{l=k+2}^{K-1} (\bfI_d - \alpha \bfG_t^l) - \prod_{l=k+2}^{K-1} (\bfI_d - \alpha \tilde{\bfG}_t^l) \bigg\|_2 \nonumber \\
	&\overset{(b)}{\le} \alpha \gamma^{K-1} (G_h^2 H_\loss{} + G_\loss{}h_h) \| \bftheta - \tilde{\bftheta} \|_2 \sum_{l=0}^{K-k-2} \gamma^l \nonumber \\
	&= \gamma^{K-1} (\gamma^{K-k-1} - 1) (G_h \frac{H_\loss{}}{G_\loss{}} + \frac{H_h}{G_h}) \| \bftheta - \tilde{\bftheta} \|_2
\end{align}
where $(a)$ is from~\eqref{eq:app-gamma} and~\eqref{eq:app-Gtk-Lip}, and $(b)$ telescopes $(a)$. 

Now, plug~\eqref{eq:app-gamma},~\eqref{eq:app-mixJac-Lip},~\eqref{eq:app-mixJac-norm}, and~\eqref{eq:app-k+1prodIaG-Lip} into~\eqref{eq:app-Htk-smooth} to yield
\begin{align}
\label{eq:app-Htk-smooth-bound}
	\| \bfH_t^k - \tilde{\bfH}_t^k \|_2 \le L_{H,k} \| \bftheta - \tilde{\bftheta} \|_2
\end{align}
where $L_{H,k} := \gamma^{K-1} [ G_h^2 H_\loss{} + G_\loss{}h_h + (\gamma^{K-k-1} - 1) (G_h^2 H_\loss{} + G_\loss{}h_h) ]$. 

Utilizing~\eqref{eq:app-mixJac-norm} and~\eqref{eq:app-gamma} renders
\begin{align}
\label{eq:app-Htk-norm}
	\| \tilde{\bfH}_t^k \|_2 &= \bigg\| \nabla_2 (\nabla \loss{trn}_t \circ \nabla_1 h^*) (\tilde{\bfz}_t^k; \tilde{\bftheta}_h) \prod_{l=k+1}^{K-1} (\bfI_d - \alpha \tilde{\bfG}_t^l) \bigg\|_2 \nonumber \\
	&\le \| \nabla_2 (\nabla \loss{trn}_t \circ \nabla_1 h^*) (\tilde{\bfz}_t^k; \tilde{\bftheta}_h) \|_2 \prod_{l=k+1}^{K-1} \| \bfI_d - \alpha \tilde{\bfG}_t^l \|_2 \nonumber \\
	&\le \gamma^{K-k-1} G_h G_\loss{}. 
\end{align}

Moreover, Assumption~\ref{as:loss-fn}.iv) suggests
\begin{align}
\label{eq:app-htK-smooth}
	&\| \bfh_t^K - \tilde{\bfh}_t^K \|_2 \nonumber \\
	&= \| \nabla_2 (\loss{val}_t \circ \nabla_1 h^*) (\bfz_t^K; \bftheta_h) - \nabla_2 (\loss{val}_t \circ \nabla_1 h^*) (\tilde{\bfz}_t^K; \tilde{\bftheta}_h) \|_2 \nonumber \\
	&\le G_{\loss{}h} (\| \bfz_t^K - \tilde{\bfz}_t^K \|_2 + \| \bftheta_h - \tilde{\bftheta}_h \|_2) \nonumber \\
	&\le \gamma^K G_{\loss{}h} \| \bftheta - \tilde{\bftheta} \|_2
\end{align}
with the last inequality following from Lemma~\ref{lemma:ztk-Lip}. 

Plugging~\eqref{eq:app-gtK-Lip},~\eqref{eq:app-Htk-smooth-bound},~\eqref{eq:app-Htk-norm}, and~\eqref{eq:app-htK-smooth} into~\eqref{eq:app-meta-smooth-h} gives
\begin{align*}
	&\| \nabla_1 \metaloss_t(\bftheta) - \nabla_1 \metaloss_t(\tilde{\bftheta}) \|_2 \nonumber \\
	&\le \bigg\{ \alpha \sum_{k=0}^{K-1} \big[ L_{H,k} \| \bfg_t^K \|_2 + \gamma^{2K-k-1} G_h G_\loss{} G_{\loss{}h} \big] +  \nonumber \\
	&\quad\;\, \gamma^K G_{\loss{}h} \big\} \| \bftheta - \tilde{\bftheta} \|_2 \nonumber \\
	&= \bigg\{ \alpha \gamma^{K-1} \Big[ K (G_h^2 H_\loss{} + G_\loss{}h_h) + \big( \frac{\gamma^K - 1}{\gamma - 1} - K \big) \times \nonumber \\
	&\quad\;\, (G_h^2 H_\loss{} + G_\loss{}h_h) \| \bfg_t^K \|_2 + \gamma\frac{\gamma^K - 1}{\gamma - 1} G_h G_\loss{} G_{\loss{}h} \Big] + \nonumber \\
	&\quad\;\, \gamma^K G_{\loss{}h} \big\} \| \bftheta - \tilde{\bftheta} \|_2 \nonumber \\
	&= \big( C_{G, 2} \| \bfg_t^K \|_2 + \gamma^{2K} G_{\loss{}h} \big) \| \bftheta - \tilde{\bftheta} \|_2
\end{align*}
where the last equation utilizes the definition of $C_{G, h}$ and that $\gamma = 1+\alpha G_h G_\loss{}$. 

Similarly, taking $\expect_t$ on both side and using Jensen's inequality to exchange $\expect_t$ with $\| \cdot \|_2$ finish the proof of~\eqref{eq:meta-smooth-h}. 

\section{Proof of Proposition~\ref{prop:Ghat-moment}}
First, it directly follows from Jensen's inequality and the unbiasedness of $\hat{G}_{\metaloss,j}^r$ that
\begin{equation*}
	\expect \bigg[ \frac{1}{\hat{G}_{\metaloss,j}^r} \bigg] \ge \frac{1}{\expect \hat{G}_{\metaloss,j}^r} = \frac{1}{G_{\metaloss,j}},\; j=1,2. 
\end{equation*}

Next, we bound the second moment of the smoothness estimator. Applying~\cite[Theorem 1]{bounds-reciprocal-moments} with $k = 2$ and $c = \gamma^{2K} G_{\loss{}h}$ renders
\begin{align}
\label{eq:app-bound-2nd-moment}
	\expect \bigg[ \frac{1}{(\hat{G}_{\metaloss,j}^r)^2} \bigg] 
	&\le \frac{\frac{\sigma_j^2}{(\gamma^{2K} G_{\loss{}h})^2} + \frac{\mu_j^4}{[\sigma_j^2 + \mu_j (\mu_j + \gamma^{2K} G_{\loss{}h})]^2}}{\mu_j^2 + \sigma_j^2} \nonumber \\
	&\overset{(a)}{\le} \frac{\frac{\sigma_j^2}{\gamma^{4K} G_{\loss{}h}^2} + \frac{\mu_j^2}{(\mu_j + \gamma^{2K} G_{\loss{}h})^2}}{\mu_j^2 + \sigma_j^2}
\end{align}
where $\mu_j = C_{G,j} \expect_t \| \bfg_t^K \|_2$ and $\sigma_j^2$ are the mean and variance of $\hat{G}_{\metaloss,j}^r - \gamma^{2K} G_{\loss{}h} = \frac{C_{G,j}}{|\hat{\batch}^r|} \sum_{t \in \hat{\batch}^r} \| \bfg_t^K \|_2$, and $(a)$ is due to $\sigma_j^2 \ge 0$. 

By the definition $G_{\metaloss,j} := C_{G,j} \expect_t \| \bfg_t^K \|_2 + \gamma^{2K} G_{\loss{}h} = \mu_j + \gamma^{2K} G_{\loss{}h}$, it follows from~\eqref{eq:app-bound-2nd-moment} that
\begin{align}
\label{eq:app-Ghat-2ndmom}
	G_{\metaloss,j}^2 \expect \bigg[ \frac{1}{(\hat{G}_{\metaloss,j}^r)^2} \bigg] 
	&\le \frac{\frac{\sigma_j^2 (\mu_j + \gamma^{2K} G_{\loss{}h})^2}{\gamma^{4K} G_{\loss{}h}^2} + \mu_j^2}{\mu_j^2 + \sigma_j^2} \nonumber \\
	&\overset{(a)}{\le} \frac{\frac{2 \sigma_j^2 \mu_j^2}{\gamma^{4K} G_{\loss{}h}^2} + 2 \sigma_j^2 + \mu_j^2}{\mu_j^2 + \sigma_j^2}
\end{align}
where $(a)$ uses the inequality $(a+b)^2 \le 2a^2 + 2b^2$. 

To complete the proof, an upper bound for $\sigma_j^2$ can be established via
\begin{align}
\label{eq:app-sigmaj}
	\sigma_j^2 
	&= \frac{C_{G,j}^2}{|\hat{\batch}^r|^2} \sum_{t \in \hat{\batch}^r} \var_t \| \bfg_t^K \|_2 = \frac{C_{G,j}^2}{|\hat{\batch}^r|} \var_t \| \bfg_t^K \|_2 \nonumber \\
	&= \frac{C_{G,j}^2}{|\hat{\batch}^r|} \Big[ \expect_t \| \bfg_t^K \|_2^2 - \expect_t^2 \| \bfg_t^K \|_2 \Big] \nonumber \\
	&\overset{(a)}{\le} \frac{C_{G,j}^2}{|\hat{\batch}^r|} \Big[ \expect_t \| \bfg_t^K \|_2^2 - \| \expect_t \bfg_t^K \|_2^2 \Big] \nonumber \\
	&\overset{(b)}{=} \frac{C_{G,j}^2}{|\hat{\batch}^r|} \Big[ \expect_t \| \bfg_t^K \|_2^2 - 2 \langle \expect_t \bfg_t^K , \expect_{\tilde{t}} \bfg_{\tilde{t}}^K \rangle + \| \expect_{\tilde{t}} \bfg_{\tilde{t}}^K \|_2^2 \Big] \nonumber \\
	&= \frac{C_{G,j}^2}{|\hat{\batch}^r|} \expect_t \| \bfg_t^K - \expect_{\tilde{t}} \bfg_{\tilde{t}}^K \|_2^2 \nonumber \\
	&\le \frac{C_{G,j}^2}{|\hat{\batch}^r|} \expect_t \Big[ \| \bfg_t^K - \expect_{\tilde{t}} \nabla_1 (\loss{val} \circ \nabla_1 h^*) (\bfz_{\tilde{t}}^K; \bftheta_h) \|_2 + \nonumber \\
	&\qquad \| \expect_{\tilde{t}} [ \nabla_1 (\loss{val} \circ \nabla_1 h^*) (\bfz_{\tilde{t}}^K; \bftheta_h) - \bfg_{\tilde{t}}^K ] \|_2 \Big]^2 \nonumber \\
	&\overset{(c)}{\le} \frac{C_{G,j}^2}{|\hat{\batch}^r|} \Big[ \expect_t^\frac{1}{2} \| \bfg_t^K - \expect_{\tilde{t}} \nabla_1 (\loss{val} \circ \nabla_1 h^*) (\bfz_{\tilde{t}}^K; \bftheta_h) \|_2^2 + \nonumber \\
	&\qquad \expect_t^\frac{1}{2} \| \expect_{\tilde{t}} [ \nabla_1 (\loss{val} \circ \nabla_1 h^*) (\bfz_{\tilde{t}}^K; \bftheta_h) - \bfg_{\tilde{t}}^K ] \|_2^2 \Big]^2
\end{align}
where $(a)$ follows from Jensen's inequality, $(b)$ is because $\expect_t \bfg_t^K = \expect_{\tilde{t}} \bfg_{\tilde{t}}^K$, and $(c)$ utilizes Lemma~\ref{lemma:sum-randvar}. 

On one hand, the first term of~\eqref{eq:app-sigmaj} is bounded via
\begin{align*}
&\expect_t^\frac{1}{2} \| \bfg_t^K - \expect_{\tilde{t}} \nabla_1 (\loss{val} \circ \nabla_1 h^*) (\bfz_{\tilde{t}}^K; \bftheta_h) \|_2^2 \nonumber \\
&\overset{(a)}{\le} \expect_{t,\tilde{t}}^\frac{1}{2} \| \bfg_t^K - \nabla_1 (\loss{val}_t \circ \nabla_1 h^*) (\bfz_{\tilde{t}}^K; \bftheta_h) \|_2^2 + \nonumber \\
&\quad \expect_{t,\tilde{t}}^\frac{1}{2} \| \nabla_1 (\loss{val}_t \circ \nabla_1 h^*) (\bfz_{\tilde{t}}^K; \bftheta_h) - \nabla_1 (\loss{val} \circ \nabla_1 h^*) (\bfz_{\tilde{t}}^K; \bftheta_h) \|_2^2 \nonumber \\
&\overset{(b)}{\le} G_{\loss{}h} \expect_{t,\tilde{t}}^\frac{1}{2} \| \bfz_t^K - \bfz_{\tilde{t}}^K \|_2^2 + \expect_{t,\tilde{t}}^\frac{1}{2} \big[ \| \nabla_1^2 h^* (\bfz_{\tilde{t}}^K; \bftheta_h) \|_2^2 \times \nonumber \\
&\qquad \| \nabla \loss{val}_t (\bfphi_{\tilde{t}}^K) - \nabla \loss{val} (\bfphi_{\tilde{t}}^K) \|_2^2 \big] \\
&\overset{(c)}{\le}  G_{\loss{}h} \zeta \sigma + G_h \sigma
\end{align*}
where $(a)$ follows from Jensen's inequality and Lemma~\ref{lemma:sum-randvar}, $(b)$ is due to Assumption~\ref{as:loss-fn}.iv), $(c)$ leverages Lemma~\ref{lemma:dist-ztk}, and Assumptions~\ref{as:inv-mirror-map}.i) and~\ref{as:stat-property}. 

On the other hand, the second term in~\eqref{eq:app-sigmaj} has bound
\begin{align*}
&\expect_t^\frac{1}{2} \| \expect_{\tilde{t}} [ \nabla_1 (\loss{val} \circ \nabla_1 h^*) (\bfz_{\tilde{t}}^K; \bftheta_h) - \bfg_{\tilde{t}}^K ] \|_2^2 \nonumber \\
&\overset{(a)}\le \expect_{t,\tilde{t}}^\frac{1}{2} \| \nabla_1 (\loss{val} \circ \nabla_1 h^*) (\bfz_{\tilde{t}}^K; \bftheta_h) - \bfg_{\tilde{t}}^K \|_2^2 \nonumber \\
&\le \expect_{t,\tilde{t}}^\frac{1}{2} [\| \nabla_1^2 h^* (\bfz_{\tilde{t}}^K; \bftheta_h) \|_2^2 \| \nabla \loss{val} (\bfphi_{\tilde{t}}^K) - \nabla \loss{val}_{\tilde{t}} (\bfphi_{\tilde{t}}^K) \|_2^2] \nonumber \\
&\overset{(b)}{\le} G_h \sqrt{T} \sigma
\end{align*}
where $(a)$ comes from Jensen's inequality, $(b)$ utilizes Assumption~\ref{as:inv-mirror-map}.i) and Lemma~\ref{lemma:var-grad-phitk}. 

Now, combining these two bounds with~\eqref{eq:app-sigmaj} leads to
\begin{equation}
\label{eq:app-sigmaj-bound}
\sigma_j^2 \le 
\frac{C_{G,j}^2}{|\hat{\batch}^r|} \big[ G_{\loss{}h} \zeta + G_h (1+\sqrt{T}) \big]^2 \sigma^2 \overset{(a)}{\le} \frac{\gamma^{4K} G_{\loss{}h}^2}{2} \sqrt{T}
\end{equation}
where $(a)$ uses that $|\hat{\batch}^r| \ge \frac{2 C_{G,j}^2 [ G_{\loss{}h} \zeta + G_h (1+\sqrt{T})]^2}{\gamma^{4K} G_{\loss{}h}^2 \sqrt{T}} \sigma^2$. 

Next, plugging~\eqref{eq:app-sigmaj-bound} back to~\eqref{eq:app-Ghat-2ndmom} renders
\begin{equation*}
	G_{\metaloss,j}^2 \expect \bigg[ \frac{1}{(\hat{G}_{\metaloss,j}^r)^2} \bigg] \le \frac{(\sqrt{T} + 1) \mu_j^2 + 2\sigma_j^2}{\mu_j^2 + \sigma_j^2} \le \sqrt{T} + 1
\end{equation*}
where the last inequality is because of $T \ge 1$.

\section{Proof of Proposition~\ref{prop:metagrad-2nd-moment}}

On one hand, applying the chain rule~\eqref{eq:app-metagrad-z} yields
\begin{align}
\label{eq:app-gtK-bound-metagrad}
&\expect_t^\frac{1}{2} \| \nabla_1 \metaloss_t (\bftheta) \|_2^2 = \expect_t \bigg\| \prod_{k=0}^{K-1} \big[ \bfI_d - \alpha \bfG_t^k \big] \bfg_t^K \bigg\|_2^2 \nonumber \\
&\overset{(a)}{\le} \gamma^K \expect_t^\frac{1}{2} \| \bfg_t^K \|_2^2 \nonumber \\
&\le \gamma^K \big[ \expect_t^\frac{1}{2} \| \nabla_1 (\loss{val} \circ \nabla_1 h^*) (\bfz_t^K; \bftheta_h) \|_2^2 + \nonumber \\
&\qquad \expect_t^\frac{1}{2} \| \nabla_1 (\loss{val} \circ \nabla_1 h^*) (\bfz_t^K; \bftheta_h) - \bfg_t^K \|_2^2 \big] \nonumber \\
&\overset{(b)}{\le} \gamma^K \big[ \expect_t^\frac{1}{2} \| \nabla_1 (\loss{val} \circ \nabla_1 h^*) (\bfz_t^K; \bftheta_h) \|_2^2 + \nonumber \\
&\qquad \expect_t^\frac{1}{2} ( \| \nabla_1^2 h^* (\bfz_t^K; \bftheta_h) \|_2^2 \| \nabla \loss{val} (\bfphi_t^K) - \nabla \loss{val}_t (\bfphi_t^K) \|_2^2 ) \big] \nonumber \\
&\overset{(c)}{\le} \gamma^K \big[ \expect_t^\frac{1}{2} \| \nabla_1 (\loss{val} \circ \nabla_1 h^*) (\bfz_t^K; \bftheta_h) \|_2^2 + G_h \sqrt{T} \sigma \big] \nonumber \\
&\overset{(d)}{\le} \frac{\gamma^K}{2 - \gamma^K} \Big[ \| \nabla_1 \metaloss (\bftheta) \|_2 + (G_{\loss{}h} \zeta + G_h \sqrt{T}) \sigma \Big]
\end{align}
where $(a)$ leverages~\eqref{eq:app-gamma}, $(b)$ is due to $(a+b)^2 \le 2(a^2+b^2)$, $(c)$ comes from Assumption~\ref{as:inv-mirror-map}.i) and Lemma~\ref{lemma:var-grad-phitk}, and $(d)$ is by utilizing Lemma~\ref{lemma:metaloss-2ndmom-z}. 

On the other hand,~\eqref{eq:app-metagrad-h} suggests
\begin{align}
\label{eq:app-metagrad-h-bound}
&\expect_t^\frac{1}{2} \| \nabla_2 \metaloss_t (\bftheta) \|_2^2 = \expect_t^\frac{1}{2} \bigg\| -\alpha \sum_{k=0}^{K-1} \bfH_t^k \bfg_t^K + \bfh_t^K \bigg\|_2^2 \nonumber \\
&\overset{(a)}{\le} \alpha \sum_{k=0}^K \expect_t^\frac{1}{2} \| \bfH_t^k \bfg_t^K \|_2^2 + \expect_t^\frac{1}{2} \| \bfh_t^K \|_2^2 \nonumber \\
&\overset{(b)}{\le} \alpha \sum_{k=0}^K \gamma^{K-k-1} G_h G_{\loss{}} \expect_t^\frac{1}{2} \| \bfg_t^K \|_2^2 + \expect_t^\frac{1}{2}  \| \bfh_t^K \|_2^2 \nonumber \\
&= (\gamma^K - 1) \expect_t^\frac{1}{2} \| \bfg_t^K \|_2^2 + \expect_t^\frac{1}{2} \| \bfh_t^K \|_2^2
\end{align}
where $(a)$ relies on Lemma~\ref{lemma:sum-randvar}, and $(b)$ is from~\eqref{eq:app-Htk-norm}. 

Notice that
\begin{align}
\label{eq:app-htK-bound-metagrad}
&\expect_t^\frac{1}{2} \| \bfh_t^k \|_2^2 \nonumber\\
&\le \expect_t^\frac{1}{2} \| \nabla_2 (\loss{val} \circ \nabla_1 h^*) (\bfz_t^K; \bftheta_h) \|_2^2 + \nonumber \\
&\qquad \expect_t^\frac{1}{2} \| \nabla_2 (\loss{val} \circ \nabla_1 h^*) (\bfz_t^K; \bftheta_h)- \bfh_t^K \|_2^2 \nonumber \\
&\le \expect_t^\frac{1}{2} \| \nabla_2 (\loss{val} \circ \nabla_1 h^*) (\bfz_t^K; \bftheta_h) \|_2^2 + \nonumber \\
&\qquad \expect_t^\frac{1}{2} \big( \| \nabla_2 \nabla_1 h^* (\bfz_t^K; \bftheta_h) \|_2^2 \| \nabla \loss{val} (\bfphi_t^K) - \nabla \loss{val}_t (\bfphi_t^K) \|_2^2 \big) \nonumber \\
&\overset{(a)}{\le} \expect_t^\frac{1}{2} \| \nabla_2 (\loss{val} \circ \nabla_1 h^*) (\bfz_t^K; \bftheta_h) \|_2^2 + G_h \sqrt{T} \sigma \\
&\overset{(b)}{\le} \frac{(\gamma^K-1) \| \nabla_1 \metaloss(\bftheta) \|_2 + (G_{\loss{}h} \zeta + G_h \sqrt{T}) \sigma}{2-\gamma^K} + \| \nabla_2 \metaloss(\bftheta) \|_2 \nonumber
\end{align}
where $(a)$ is by Assumption~\ref{as:inv-mirror-map}.i) and Lemma~\ref{lemma:var-grad-phitk}, and $(b)$ uses Lemma~\ref{lemma:metaloss-2ndmom-h}. 

Combining~\eqref{eq:app-htK-bound-metagrad} and the upper bound of $\expect_t^\frac{1}{2} \| \bfg_t^k \|_2^2$ in~\eqref{eq:app-gtK-bound-metagrad} with~\eqref{eq:app-metagrad-h-bound} leads to the second inequality in Proposition~\ref{prop:metagrad-2nd-moment}.

\section{Proof of Theorem~\ref{thm:converge}}
Denoting by $G_{\metaloss,j}^r$ the Lipschitz-smoothness defined in Proposition~\ref{prop:meta-smooth} computed at $\bftheta^r$, through Lemma~\ref{lemma:Lip-smooth-2var} we obtain
\begin{align*}
\metaloss (\bftheta^{r+1}) 
&\le \metaloss (\bftheta^r) - (\bftheta^{r+1} - \bftheta^r)^\top \nabla \metaloss (\bftheta^r) + \\
&\qquad\frac{G_{\metaloss,1}^r}{2} \| \bftheta_z^{r+1} - \bftheta_z^r \|_2^2 + \frac{G_{\metaloss,2}^r}{2} \| \bftheta_h^{r+1} - \bftheta_h^r \|_2^2 \\
&= \metaloss (\bftheta^r) - \frac{\beta_1^r}{B} \Big[ \sum_{t \in \batch^r} \nabla_1 \metaloss_t (\bftheta^r) \Big]^\top \nabla_1 \metaloss (\bftheta^r) - \\
&\qquad \frac{\beta_2^r}{B} \Big[ \sum_{t \in \batch^r} \nabla_1 \metaloss_t (\bftheta^r) \Big]^\top \nabla_2 \metaloss (\bftheta^r) + \\
&\qquad \frac{G_{\metaloss,1}^r (\beta_1^r)^2}{2 B^2} \Big\| \sum_{t \in \batch^r} \nabla_1 \metaloss_t (\bftheta^r) \Big\|_2^2 + \\
&\qquad \frac{G_{\metaloss,2}^r (\beta_2^r)^2}{2 B^2} \Big\| \sum_{t \in \batch^r} \nabla_2 \metaloss_t (\bftheta^r) \Big\|_2^2. 
\end{align*}

Taking conditional expectation on both sides leads to
\begin{align}
\label{eq:app-condexp}
&\expect_{\batch^r, \hat{\batch}^r} [\metaloss (\bftheta^{r+1}) | \bftheta^r] \nonumber \\
&\overset{(a)}{\le} \metaloss (\bftheta^r) - \frac{1}{C_\beta} \expect_{\hat{\batch}^r} \Big[ \frac{1}{\hat{G}_{\metaloss,1}} \Big| \bftheta^r \Big] \| \nabla_1 \metaloss (\bftheta^r) \|_2^2 - \nonumber \\
&\qquad \frac{1}{C_\beta} \expect_{\hat{\batch}^r} \Big[ \frac{1}{\hat{G}_{\metaloss,2}} \Big| \bftheta^r \Big] \| \nabla_2 \metaloss (\bftheta^r) \|_2^2 + \nonumber \\
&\qquad \frac{G_{\metaloss,1}^r}{2 C_\beta^2} \expect_{\hat{\batch}^r} \Big[ \frac{1}{\hat{G}_{\metaloss,1}^2} \Big| \bftheta^r \Big] \expect_{\batch^r} \Big\| \frac{1}{B} \sum_{t \in \batch^r} \nabla_1 \metaloss_t (\bftheta^r) \Big\|_2^2 + \nonumber \\
&\qquad \frac{G_{\metaloss,2}^r}{2 C_\beta^2} \expect_{\hat{\batch}^r} \Big[ \frac{1}{\hat{G}_{\metaloss,2}^2} \Big| \bftheta^r \Big] \expect_{\batch^r} \Big\| \frac{1}{B} \sum_{t \in \batch^r} \nabla_2 \metaloss_t (\bftheta^r) \Big\|_2^2
\end{align}
where $(a)$ relies on the definition of $\beta_j^r$, and that $\batch^r$ is independent of $\hat{\batch}^r$. 

Note that
\begin{align}
\label{eq:app-condexp-z}
&\expect_{\batch^r} \Big\| \frac{1}{B} \sum_{t \in \batch^r} \nabla_1 \metaloss_t (\bftheta^r) \Big\|_2^2 \nonumber \\
&= \frac{1}{B^2} \sum_{t \in \batch^r} \expect_t \| \nabla_1 \metaloss_t (\bftheta^r) \|_2^2 + \nonumber \\
&\qquad \frac{1}{B^2} \sum_{\substack{t, \tilde{t} \in \batch^r\\ t \ne \tilde{t}}} \expect_t \nabla_1 \metaloss_t (\bftheta^r)^\top \expect_{\tilde{t}} \nabla_1 \metaloss_{\tilde{t}} (\bftheta^r) \nonumber \\
&\overset{(a)}{\le} \frac{C_{\metaloss,1}^2}{B} \big( \| \nabla_1 \metaloss (\bftheta^r) \|_2 + C_{\metaloss,2} \sigma \big)^2 + \frac{B-1}{B} \| \nabla_1 \metaloss (\bftheta^r) \|_2^2 \nonumber \\
&\overset{(b)}{\le} \frac{1}{B} \big[ (2C_{\metaloss,1}^2 + B - 1) \| \nabla_1 \metaloss (\bftheta^r) \|_2^2 + 2C_{\metaloss,1}^2 C_{\metaloss,2}^2 \sigma^2 \big]
\end{align}
where $(a)$ follows from~\eqref{eq:metagrad-2nd-moment-z} of Proposition~\ref{prop:metagrad-2nd-moment}, and $(b)$ is because $(a+b)^2 \le 2(a^2 + b^2)$. 

Likewise, utilizing~\eqref{eq:metagrad-2nd-moment-h} in Proposition~\ref{prop:metagrad-2nd-moment} gives
\begin{align}
\label{eq:app-condexp-h}
&\expect_{\batch^r} \Big\| \frac{1}{B} \sum_{t \in \batch^r} \nabla_2 \metaloss_t (\bftheta^r) \Big\|_2^2 \nonumber \\
&\le \frac{1}{B} \Big[ (C_{\metaloss,1} - 1) \| \nabla_1 \metaloss (\bftheta) \|_2 + C_{\metaloss,1} C_{\metaloss,2} \sigma + \| \nabla_2 \metaloss (\bftheta) \|_2 \Big]^2 + \nonumber \\
&\qquad \frac{B-1}{B} \| \nabla_2 \metaloss (\bftheta^r) \|_2^2 \nonumber \\
&\overset{(a)}{\le} \frac{1}{B} \big[ 3(C_{\metaloss,1} - 1)^2 \| \nabla_1 \metaloss (\bftheta) \|_2^2 + (B+2) \| \nabla_2 \metaloss (\bftheta) \|_2^2 + \nonumber \\
&\qquad 3 C_{\metaloss,1}^2 C_{\metaloss,2}^2 \sigma^2 \big]
\end{align}
where $(a)$ is by leveraging $(a+b+c)^2 \le 3(a^2 + b^2 + c^2)$. 

Plug Proposition~\ref{prop:Ghat-moment},~\eqref{eq:app-condexp-z}, and~\eqref{eq:app-condexp-h} into~\eqref{eq:app-condexp} to arrive at
\begin{align}
\label{eq:app-condexp-upper}
&\expect_{\batch^r, \hat{\batch}^r} [\metaloss (\bftheta^{r+1}) | \bftheta^r] 
\le \metaloss (\bftheta^r) - \frac{1}{C_\beta G_{\metaloss,1}^r} \bigg[ 1 - \frac{\sqrt{T} + 1}{2 C_\beta B} \times \nonumber \\
&\qquad \Big( 2 C_{\metaloss,1}^2 + B - 1 + \frac{3 (C_{\metaloss,1} - 1)^2 G_{\metaloss,1}^r}{G_{\metaloss,2}^r} \Big) \bigg] \| \nabla_1 \metaloss (\bftheta^r) \|_2^2 - \nonumber \\
&\qquad \frac{1}{C_\beta G_{\metaloss,2}^r} \bigg[ 1 - \frac{(\sqrt{T} + 1)(B+2)}{2 C_\beta B} \bigg] \| \nabla_2 \metaloss (\bftheta^r) \|_2^2 + \nonumber \\
&\qquad \frac{C_{\metaloss,1}^2 C_{\metaloss,2}^2 (\sqrt{T} + 1)}{2 C_\beta^2 B} \Big( \frac{2}{G_{\metaloss,1}^r} + \frac{3}{G_{\metaloss,2}^r} \Big) \sigma^2. 
\end{align}
From the definition $G_{\metaloss,j}^r := C_{G,j} \expect_t \| \bfg_t^K \|_2 + \gamma^{2K} G_{\loss{}h}$, it holds that $G_{\metaloss,j}^r \ge \gamma^{2K} G_{\loss{}h}$, and $\frac{G_{\metaloss,1}^r}{G_{\metaloss,2}^r} \le \max\{ \frac{C_{G,1}^r}{C_{G,2}^r}, 1 \}$. Thus, it can be verified using the condition on $B$ of Theorem~\ref{thm:converge} that, the coefficients of $\| \nabla_j \metaloss (\bftheta^r) \|_2^2,\,j=1,2$ in~\eqref{eq:app-condexp-upper} are both greater than $0$. 
Again using the definition of $G_{\metaloss,j}^r$, we can also upper bound it via
\begin{equation*}
G_{\metaloss,j}^r \overset{(a)}{\le} \frac{C_{G,j}}{2 - \gamma^K} \big( \| \nabla_1 \metaloss (\bftheta^r) \|_2 + C_{\metaloss,2} \big) + \gamma^{2K} G_{\loss{}h}
\end{equation*}
where $(a)$ follows from~\eqref{eq:app-gtK-bound-metagrad} and the inequality $\expect_t \| \bfg_t^K \|_2 \le \expect_t^\frac{1}{2} \| \bfg_t^K \|_2^2$. Next, applying the upper and lower bounds of $G_{\metaloss,j}^r$ and the lower bound of $\frac{G_{\metaloss,1}^r}{G_{\metaloss,2}^r}$ to~\eqref{eq:app-condexp-upper} render
\begin{align}
\label{eq:app-def-eta}
&\expect_{\batch^r, \hat{\batch}^r} [\metaloss (\bftheta^{r+1}) | \bftheta^r]  \nonumber \\
&\le \metaloss (\bftheta^r) - \frac{ \frac{2-\gamma^K}{C_\beta C_{G,1}} (1 - \frac{C_{B,2} + B}{C_{B,1} B}) \| \nabla_1 \metaloss (\bftheta^r) \|_2^2 }{ \| \nabla_1 \metaloss (\bftheta^r) \|_2 + C_{\metaloss,2} + \gamma^{2K} (2 - \gamma^K) \frac{G_{\loss{}h}}{C_{G,1}} } - \nonumber \\
&\qquad \frac{\frac{2 - \gamma^K}{C_\beta C_{G,2}}(1 - \frac{B+2}{C_{B,1} B}) \| \nabla_2 \metaloss (\bftheta^r) \|_2^2}{\| \nabla_1 \metaloss (\bftheta^r) \|_2 + C_{\metaloss,2} + \gamma^{2K} (2 - \gamma^K) \frac{G_{\loss{}h}}{C_{G,2}} } + \nonumber \\
&\qquad \frac{5C_{\metaloss,1}^2 C_{\metaloss,2}^2}{\gamma^{2K} G_{\loss{}h} C_{B,1} C_\beta B} \sigma^2 \\
&:= \metaloss (\bftheta^r) - \frac{\eta_1 \| \nabla_1 \metaloss (\bftheta^r) \|_2^2}{\| \nabla_1 \metaloss (\bftheta^r) \|_2 + \eta_2} - \frac{\eta_3 \| \nabla_2 \metaloss (\bftheta^r) \|_2^2}{\| \nabla_1 \metaloss (\bftheta^r) \|_2 + \eta_4} + \frac{\eta_5}{B}. \nonumber
\end{align}

Taking expectation wrt $\bftheta^r$ leads to
\begin{align*}
&\expect \frac{\eta_1 \| \nabla_1 \metaloss (\bftheta^r) \|_2^2}{\| \nabla_1 \metaloss (\bftheta^r) \|_2 + \eta_2} + \expect \frac{\eta_3 \| \nabla_2 \metaloss (\bftheta^r) \|_2^2}{\| \nabla_1 \metaloss (\bftheta^r) \|_2 + \eta_4} \le \nonumber \\
&\hspace{5cm} \expect [\metaloss(\bftheta^r) - \metaloss(\bftheta^{r+1})] + \frac{\eta_5}{B}.
\end{align*}

Then, averaging this inequality from $r=0$ to $r=R-1$, and defining discrete random variable $\rho$ which distrbutes uniformly over $\{ 0,\ldots,R-1 \}$, we obtain
\begin{align*}
&\expect \frac{\eta_1 \| \nabla_1 \metaloss (\bftheta^\rho) \|_2^2}{\| \nabla_1 \metaloss (\bftheta^\rho) \|_2 + \eta_2} + \expect \frac{\eta_3 \| \nabla_2 \metaloss (\bftheta^\rho) \|_2^2}{\| \nabla_1 \metaloss (\bftheta^\rho) \|_2 + \eta_4} \nonumber \\
&\le \frac{\expect [\metaloss(\bftheta^0) - \metaloss(\bftheta^R)]}{R} + \frac{\eta_5}{B} \nonumber \\
&\le \frac{\expect [\metaloss(\bftheta^0) - \inf_{\bftheta} \metaloss(\bftheta)]}{R} + \frac{\eta_5}{B} := \frac{\Delta}{R} + \frac{\eta_5}{B}. 
\end{align*}
As the two terms on the left-hand side of the inequality above are both non-negative, it implies that
\begin{subequations}
\begin{align}
\label{eq:app-quadeq-metagrad-z}
\expect \frac{\eta_1 \| \nabla_1 \metaloss (\bftheta^\rho) \|_2^2}{\| \nabla_1 \metaloss (\bftheta^\rho) \|_2 + \eta_2} 
&\le \frac{\Delta}{R} + \frac{\eta_5}{B}, \\
\label{eq:app-quadeq-metagrad-h}
\expect \frac{\eta_3 \| \nabla_2 \metaloss (\bftheta^\rho) \|_2^2}{\| \nabla_1 \metaloss (\bftheta^\rho) \|_2 + \eta_4}
&\le \frac{\Delta}{R} + \frac{\eta_5}{B}.
\end{align}
\end{subequations}

Notice that $\eta_2 > 0$ by its definition, and thereby it can be verified that $\frac{\eta_1 \| \nabla_1 \metaloss (\bftheta^\rho) \|_2^2}{\| \nabla_1 \metaloss (\bftheta^\rho) \|_2 + \eta_2}$ is a convex function of $\| \nabla_1 \metaloss (\bftheta^\rho) \|_2 \ge 0$. Using Jensen's inequality on~\eqref{eq:app-quadeq-metagrad-z} gives
\begin{equation*}
\frac{\eta_1 \expect^2 \| \nabla_1 \metaloss (\bftheta^\rho) \|_2}{\expect \| \nabla_1 \metaloss (\bftheta^\rho) \|_2 + \eta_2} 
\le \frac{\Delta}{R} + \frac{\eta_5}{B}
\end{equation*}
which can be solved to yield the solution
\begin{align*}
\expect \| \nabla_1 \metaloss (\bftheta^\rho) \|_2 
&\le \frac{1}{2\eta_1} \Big( \frac{\Delta}{R} + \frac{\eta_5}{B} \Big) + \nonumber \\
&\qquad \sqrt{ \frac{1}{4\eta_1^2} \Big(\frac{\Delta}{R} + \frac{\eta_5}{B} \Big)^2 + \frac{\eta_2}{\eta_1} \Big( \frac{\Delta}{R} + \frac{\eta_5}{B} \Big)}. 
\end{align*}

Moreover, consider function $f(x_1, x_2) := \frac{x_1^2}{x_2+c}$ where $x_1,x_2 \ge 0$ and $c > 0$. It is easy to observe that
\begin{align*}
	\nabla^2 f(x_1, x_2) &= \frac{2}{x_2+c} 
	\left[\begin{matrix} 
		1 & -\frac{x_1}{x_2+c} \\ 
		-\frac{x_1}{x_2+c} & \frac{x_1^2}{(x_2 + c)^2} 
	\end{matrix}\right] \\
	&= \frac{2}{x_2+c} 
	\left[\begin{matrix}
		1 & -\frac{x_1}{x_2+c}
	\end{matrix} \right] 
	\left[\begin{matrix}
		1 \\ -\frac{x_1}{x_2+c}
	\end{matrix}\right]
\end{align*}
which has eigenvalues $\frac{2}{x_2+c} [1 + \frac{x_1^2}{(x_2+c)^2}] > 0$ and $0$. As a result, $f$ is convex and thus applying Jensen's inequality on~\eqref{eq:app-quadeq-metagrad-h} results in
\begin{equation*}
\frac{\eta_3 \expect^2 \| \nabla_2 \metaloss (\bftheta^\rho) \|_2}{\expect \| \nabla_1 \metaloss (\bftheta^\rho) \|_2 + \eta_4} 
\le \frac{\Delta}{R} + \frac{\eta_5}{B}.
\end{equation*}
This inequality suggests
\begin{equation*}
\expect \| \nabla_2 \metaloss (\bftheta^\rho) \|_2 \le \sqrt{\frac{1}{\eta_3} \big( \expect \| \nabla_1 \metaloss (\bftheta^\rho) \|_2 + \eta_4 \big) \Big( \frac{\Delta}{R} + \frac{\eta_5}{B} \Big)}.
\end{equation*}
The proof is thus completed. 

\section{Useful lemmas}
This section provides several critical lemmas for our proof. 

\begin{lemma}
\label{lemma:sum-randvar}
For random variables $\{ X_n \in \real \}_{n=1}^N$, it holds that
\begin{equation*}
\expect \bigg[ \sum_{n=1}^N X_n \bigg]^2 \le \bigg[ \sum_{n=1}^N \expect^\frac{1}{2} X_n^2 \bigg]^2. 
\end{equation*}

\begin{proof}
It follows that
\begin{align*}
\expect \bigg[ \sum_{n=1}^N X_n \bigg]^2 
&= \sum_{n=1}^N \expect X_n^2 + 2\sum_{1 \le m < n \le N} \expect X_m X_n \\
&\overset{(a)}{\le} \sum_{n=1}^N \expect X_n^2 + 2\sum_{1 \le m < n \le N} \expect^\frac{1}{2} X_m^2 \expect^\frac{1}{2} X_n^2 \\
&= \bigg[ \sum_{n=1}^N \expect^\frac{1}{2} X_n^2 \bigg]^2
\end{align*}
where $(a)$ is because $2 X_m X_n \le C X_m^2 + \frac{1}{C} X_n^2$ with $C = \expect^\frac{1}{2} X_n^2 / \expect^\frac{1}{2} X_m^2 > 0$. 
\end{proof}
\end{lemma}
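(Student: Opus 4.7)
The inequality in Lemma~\ref{lemma:sum-randvar} is essentially the $L^2$ triangle inequality (Minkowski) in disguise, so there are two natural routes.

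My preferred route is to interpret $\expect^{1/2} X_n^2$ as the $L^2(\Omega)$ norm $\| X_n \|_{L^2} := (\expect X_n^2)^{1/2}$. Then Minkowski's inequality for the seminorm $\| \cdot \|_{L^2}$ yields $\| \sum_n X_n \|_{L^2} \le \sum_n \| X_n \|_{L^2}$, and squaring both sides gives the lemma immediately. This is a one-line argument but it requires invoking Minkowski as a black box.

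The more elementary route, which I expect the authors follow, is to expand the square directly and attack the off-diagonal terms with Cauchy--Schwarz. Specifically, I would write $\expect [\sum_n X_n]^2 = \sum_n \expect X_n^2 + 2 \sum_{m < n} \expect [X_m X_n]$, then bound each cross term via $\expect[X_m X_n] \le \expect^{1/2} X_m^2 \cdot \expect^{1/2} X_n^2$. Recognizing the resulting sum as the expansion of $(\sum_n \expect^{1/2} X_n^2)^2$ closes the proof. This avoids any abstract norm machinery and only uses the scalar Cauchy--Schwarz, which itself reduces to the non-negativity of $\expect (c X_m \pm c^{-1} X_n)^2$ for a suitably chosen $c > 0$; indeed, taking $c = (\expect^{1/2} X_n^2 / \expect^{1/2} X_m^2)^{1/2}$ gives $2 X_m X_n \le c^2 X_m^2 + c^{-2} X_n^2$, and taking expectations yields the desired bound on $\expect[X_m X_n]$.

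There is really no hard step here: the only mild subtlety is handling the degenerate case $\expect X_m^2 = 0$ (so that $X_m = 0$ almost surely, making $\expect[X_m X_n] = 0$ and preserving the inequality trivially), which must be addressed before dividing when selecting $c$. Since the bound is sharp (attained when the $X_n$ are almost surely proportional with matching signs), no slack is being wasted. I would present the elementary Cauchy--Schwarz expansion as the main proof, since it matches the level of generality used elsewhere in the appendix and does not rely on any external result.
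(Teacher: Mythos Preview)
Your proposal is correct and your elementary route (expand the square, bound each cross term $\expect[X_m X_n] \le \expect^{1/2} X_m^2 \cdot \expect^{1/2} X_n^2$ via the AM--GM/Cauchy--Schwarz trick with the tuned constant) is exactly what the paper does. You are in fact slightly more careful than the paper, since you explicitly handle the degenerate case $\expect X_m^2 = 0$ before dividing, whereas the paper's choice $C = \expect^{1/2} X_n^2 / \expect^{1/2} X_m^2$ tacitly assumes this quantity is positive.
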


\begin{lemma}
\label{lemma:ztk-Lip}
With Assumptions~\ref{as:loss-fn}.ii) and~\ref{as:inv-mirror-map}.i) in effect, it holds for $ t = 1,\ldots,T$, $k=0,\ldots,K$, and $\forall \bftheta, \tilde{\bftheta} \in \real^D$ that
\begin{equation*}
	\| \bfz_t^k - \tilde{\bfz}_t^k \|_2 \le \gamma^k \| \bftheta_z - \tilde{\bftheta}_z \|_2 + (\gamma^k - 1) \| \bftheta_h - \tilde{\bftheta}_h \|_2. 
\end{equation*}
\end{lemma}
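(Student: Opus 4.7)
The plan is to prove the bound by induction on the iteration index $k$, using the mirror descent recursion~\eqref{eq:MetaMDA-obj-task} together with the two Lipschitz assumptions that enter the definition of $\gamma = 1 + \alpha G_h G_\loss{}$.

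For the base case $k=0$, the initialization $\bfz_t^0 = \bftheta_z$ gives $\| \bfz_t^0 - \tilde{\bfz}_t^0 \|_2 = \| \bftheta_z - \tilde{\bftheta}_z \|_2$, and since $\gamma^0 = 1$ and $\gamma^0 - 1 = 0$, the inequality holds trivially (the $\bftheta_h$ term does not appear because step $0$ has no descent update yet).

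For the inductive step, assume the bound at step $k$. Subtracting the two updates of the form $\bfz_t^{k+1} = \bfz_t^k - \alpha \nabla \loss{trn}_t(\nabla_1 h^*(\bfz_t^k; \bftheta_h))$ (one with $\bftheta$, one with $\tilde{\bftheta}$) and applying the triangle inequality isolates $\| \bfz_t^k - \tilde{\bfz}_t^k \|_2$ plus an $\alpha$-scaled difference of gradients. The latter is bounded by chaining Assumption~\ref{as:loss-fn}.ii) (Lipschitz continuity of $\nabla \loss{trn}_t$ with constant $G_\loss{}$) with Assumption~\ref{as:inv-mirror-map}.i) (joint Lipschitz continuity of $\nabla_1 h^*$ with constant $G_h$ wrt both arguments). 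This yields the one-step recursion
\begin{equation*}
	\| \bfz_t^{k+1} - \tilde{\bfz}_t^{k+1} \|_2 \le \gamma \| \bfz_t^k - \tilde{\bfz}_t^k \|_2 + (\gamma - 1) \| \bftheta_h - \tilde{\bftheta}_h \|_2,
\end{equation*}
where the $(\gamma - 1)\| \bftheta_h - \tilde{\bftheta}_h \|_2$ term arises because the joint Lipschitz bound on $\nabla_1 h^*$ contributes both a $\| \bfz_t^k - \tilde{\bfz}_t^k \|_2$ piece and a $\| \bftheta_h - \tilde{\bftheta}_h \|_2$ piece, with the shared coefficient $\alpha G_h G_\loss{} = \gamma - 1$.

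Substituting the inductive hypothesis into this one-step bound and collecting coefficients yields exactly $\gamma^{k+1} \| \bftheta_z - \tilde{\bftheta}_z \|_2 + (\gamma^{k+1} - 1) \| \bftheta_h - \tilde{\bftheta}_h \|_2$, closing the induction (the telescoping $\gamma(\gamma^k - 1) + (\gamma - 1) = \gamma^{k+1} - 1$ is the only algebraic identity required). There is no serious obstacle here; the only subtlety is to propagate the $\bftheta_h$ dependence correctly through the joint Lipschitz constant of $\nabla_1 h^*$, and to notice that the coefficient of $\| \bftheta_h - \tilde{\bftheta}_h \|_2$ exactly matches the geometric-series form $\gamma^k - 1$ because the base case contributes $0$ in that slot.
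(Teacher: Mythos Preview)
Your proposal is correct and follows essentially the same approach as the paper: both derive the one-step recursion $\| \bfz_t^{k+1} - \tilde{\bfz}_t^{k+1} \|_2 \le \gamma \| \bfz_t^k - \tilde{\bfz}_t^k \|_2 + (\gamma - 1)\| \bftheta_h - \tilde{\bftheta}_h \|_2$ via the same chaining of Assumption~\ref{as:loss-fn}.ii) and Assumption~\ref{as:inv-mirror-map}.i), and then close it---the paper by explicit telescoping with the geometric sum $(\gamma-1)\sum_{l=0}^{k-1}\gamma^l = \gamma^k-1$, you by induction using the equivalent identity $\gamma(\gamma^k-1)+(\gamma-1)=\gamma^{k+1}-1$.
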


\begin{proof}
By~\eqref{eq:MetaMDA-obj-task} we obtain
\begin{align*}
\| \bfz_t^k - \tilde{\bfz}_t^k \|_2 
&= \| \bfz_t^{k-1} - \alpha \nabla \loss{trn}_t(\nabla_1 h^* (\bfz_t^{k-1}; \bftheta_h)) + \nonumber \\
&\qquad \tilde{\bfz}_t^{k-1} - \alpha \nabla \loss{trn}_t(\nabla_1 h^* (\tilde{\bfz}_t^{k-1}; \tilde{\bftheta}_h)) \|_2 \nonumber \\
&\le \| \bfz_t^{k-1} - \tilde{\bfz}_t^{k-1} \|_2 {+} \alpha \| \nabla \loss{trn}_t(\nabla_1 h^* (\bfz_t^{k-1}; \bftheta_h)) - \nonumber \\
&\qquad \nabla \loss{trn}_t(\nabla_1 h^* (\tilde{\bfz}_t^{k-1}; \tilde{\bftheta}_h)) \|_2 \nonumber \\
&\overset{(a)}{\le} \| \bfz_t^{k-1} - \tilde{\bfz}_t^{k-1} \|_2 +  \nonumber \\
&\quad\;\; \alpha G_\loss{} \| \nabla_1 h^* (\bfz_t^{k-1}; \bftheta_h) -\nabla_1 h^* (\tilde{\bfz}_t^{k-1}; \tilde{\bftheta}_h) \|_2 \nonumber \\
&\overset{(b)}{\le} \gamma \| \bfz_t^{k-1} - \tilde{\bfz}_t^{k-1} \|_2 + (\gamma - 1) \| \bftheta_h - \tilde{\bftheta}_h \|_2 \nonumber \\
&\overset{(c)}{\le} \gamma^k \| \bfz_t^0 - \tilde{\bfz}_t^0 \|_2 + (\gamma - 1) \sum_{l=0}^{k-1} \gamma^l \| \bftheta_h - \tilde{\bftheta}_h \|_2 \nonumber \\
&= \gamma^k \| \bftheta_z - \tilde{\bftheta}_z \|_2 + (\gamma^k - 1) \| \bftheta_h - \tilde{\bftheta}_h \|_2
\end{align*}
where $(a)$ and $(b)$ respectively rely on Assumptions~\ref{as:loss-fn}.ii) and~\ref{as:inv-mirror-map}.i), and $(c)$ telescopes the inequality. 
\end{proof}

\begin{lemma}
\label{lemma:var-grad-phitk}
With Assumption~\ref{as:stat-property} in effect, it holds for $t = 1,\ldots,T$, $k = 1,\ldots,K$, and $\forall \bftheta \in \real^D$ that
\begin{equation*}
	\expect_t^\frac{1}{2} \| \nabla \loss{val}_t (\bfphi_t^k) - \nabla \loss{val} (\bfphi_t^k) \|_2^2 \le  \sqrt{T} \sigma. \nonumber \\
\end{equation*}
\end{lemma}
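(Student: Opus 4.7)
The plan is to leverage Assumption~\ref{as:stat-property} at a fixed (non-random) evaluation point $\tilde{\bfphi}$ to derive a \emph{uniform} pointwise bound of the form $\|\nabla\loss{val}_t(\tilde{\bfphi}) - \nabla\loss{val}(\tilde{\bfphi})\|_2 \le \sqrt{T}\sigma$ that holds for every task index $t$, and then substitute $\tilde{\bfphi}=\bfphi_t^k$. The key subtlety (and the reason for the $\sqrt{T}$ factor rather than plain $\sigma$) is that in the lemma statement the point $\bfphi_t^k$ itself depends on $t$, so Assumption~\ref{as:stat-property} cannot be applied directly; that assumption only controls the variance at a \emph{single, $t$-independent} point.

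First I would fix an arbitrary $\tilde{\bfphi}\in\real^d$ and observe that, since the expectation $\expect_{t'}$ is a uniform average over $T$ tasks,
\begin{equation*}
\| \nabla \loss{val}_t (\tilde{\bfphi}) - \nabla \loss{val} (\tilde{\bfphi}) \|_2^2
\le \sum_{t'=1}^T \| \nabla \loss{val}_{t'} (\tilde{\bfphi}) - \nabla \loss{val} (\tilde{\bfphi}) \|_2^2
= T\, \expect_{t'} \| \nabla \loss{val}_{t'} (\tilde{\bfphi}) - \nabla \loss{val} (\tilde{\bfphi}) \|_2^2
\le T\sigma^2,
\end{equation*}
where the last inequality is Assumption~\ref{as:stat-property}. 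This yields the deterministic pointwise bound $\|\nabla\loss{val}_t(\tilde{\bfphi}) - \nabla\loss{val}(\tilde{\bfphi})\|_2\le\sqrt{T}\sigma$ that is valid for every $t$ and every $\tilde{\bfphi}$.

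Then I would specialize to $\tilde{\bfphi}=\bfphi_t^k$, which is legitimate precisely because the bound above is uniform in the evaluation point. This gives $\|\nabla\loss{val}_t(\bfphi_t^k) - \nabla\loss{val}(\bfphi_t^k)\|_2^2\le T\sigma^2$ for each $t$. Taking $\expect_t$ on both sides (the right-hand side is a constant) and then the square root finishes the proof. No additional assumption is needed; the principal conceptual step is the first one above, which is really just the elementary inequality ``each non-negative summand is at most the full sum,'' combined with the definition of $\expect_t$ as a uniform average over $T$ tasks. I expect no serious obstacle in the argument.
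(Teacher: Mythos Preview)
Your proposal is correct and follows essentially the same approach as the paper: both arguments use the ``each non-negative summand is at most the full sum'' inequality to pass from the averaged bound $\expect_{t'}\|\nabla\loss{val}_{t'}(\cdot)-\nabla\loss{val}(\cdot)\|_2^2\le\sigma^2$ of Assumption~\ref{as:stat-property} to the per-task pointwise bound $\|\nabla\loss{val}_t(\cdot)-\nabla\loss{val}(\cdot)\|_2^2\le T\sigma^2$, then specialize the evaluation point and take $\expect_t^{1/2}$. The only cosmetic difference is that the paper applies Assumption~\ref{as:stat-property} directly at $\bfphi_t^k$ (with an auxiliary index $\tilde{t}$) and then sets $\tilde{t}=t$, whereas you first state the bound for a generic $\tilde{\bfphi}$ and then substitute $\tilde{\bfphi}=\bfphi_t^k$.
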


\begin{proof}
From Assumption~\ref{as:stat-property} we have
\begin{align*}
&\expect_{\tilde{t}} \| \nabla \loss{val}_{\tilde{t}} (\bfphi_t^k) - \nabla \loss{val} (\bfphi_t^k) \|_2^2 \nonumber \\
&= \frac{1}{T} \sum_{\tilde{t}=1}^T \| \nabla \loss{val}_{\tilde{t}} (\bfphi_t^k) - \nabla \loss{val} (\bfphi_t^k) \|_2^2 \le \sigma^2
\end{align*}
which suggests 
\begin{equation}
\label{eq:app-var-grad-phitk-bound}
\| \nabla \loss{val}_{\tilde{t}} (\bfphi_t^k) - \nabla \loss{val} (\bfphi_t^k) \|_2^2 \le T \sigma^2,\; \forall \tilde{t}. 
\end{equation}
Assigning $\tilde{t} = t$ and taking $\expect_t^\frac{1}{2}$ on both sides lead to Lemma~\ref{lemma:var-grad-phitk}. 
%
\end{proof}

\begin{lemma}
\label{lemma:dist-ztk}
With Assumptions~\ref{as:loss-fn}.ii), \ref{as:stat-property} and, \ref{as:inv-mirror-map}.i) in effect, it holds for $t = 1,\ldots,T$, and $\forall \bftheta \in \real^D$ that
\begin{equation*}
	\expect_{t,\tilde{t}}^\frac{1}{2} \| \bfz_t^K - \bfz_{\tilde{t}}^K \|_2^2 \le \zeta \sigma
\end{equation*}
where $\zeta := 2\alpha + \frac{(\gamma^K - \gamma)(\sqrt{T} + 1)}{G_h G_{\loss{}}}$. 
\end{lemma}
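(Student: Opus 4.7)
The argument is a Gronwall-type telescoping of the discrepancy between two inner trajectories sharing the initialization $\bfz_t^0 = \bfz_{\tilde{t}}^0 = \bftheta_z$. Subtracting recursion~\eqref{eq:MetaMDA-obj-task} for tasks $t$ and $\tilde{t}$ and splitting $\nabla \loss{trn}_t(\bfphi_t^k) - \nabla \loss{trn}_{\tilde{t}}(\bfphi_{\tilde{t}}^k) = [\nabla \loss{trn}_t(\bfphi_t^k) - \nabla \loss{trn}_t(\bfphi_{\tilde{t}}^k)] + D_k$, with $D_k := \nabla \loss{trn}_t(\bfphi_{\tilde{t}}^k) - \nabla \loss{trn}_{\tilde{t}}(\bfphi_{\tilde{t}}^k)$, Assumptions~\ref{as:loss-fn}.ii) and~\ref{as:inv-mirror-map}.i) yield the one-step bound $\| \bfz_t^{k+1} - \bfz_{\tilde{t}}^{k+1} \|_2 \le \gamma \| \bfz_t^k - \bfz_{\tilde{t}}^k \|_2 + \alpha \| D_k \|_2$. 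Iterating from the vanishing base case yields the closed form $\| \bfz_t^K - \bfz_{\tilde{t}}^K \|_2 \le \alpha \sum_{k=0}^{K-1} \gamma^{K-1-k} \| D_k \|_2$, and Lemma~\ref{lemma:sum-randvar} then pushes $\expect_{t,\tilde{t}}^{1/2}(\cdot)^2$ inside the sum.

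The crux is bounding $\expect_{t,\tilde{t}}^{1/2} \| D_k \|_2^2$ in two regimes. At $k = 0$ the probe point $\bfphi_{\tilde{t}}^0 = \nabla_1 h^*(\bftheta_z; \bftheta_h)$ is \emph{deterministic}, independent of both $t$ and $\tilde{t}$, so Assumption~\ref{as:stat-property} applied to $\loss{trn}$ (as is standard here and implicit in the lemma's statement) combined with the independence of $t, \tilde{t}$ delivers the tight variance-of-difference bound $\expect_{t,\tilde{t}}^{1/2} \| D_0 \|_2^2 \le \sqrt{2}\,\sigma \le 2\sigma$. For $k \ge 1$ the probe point depends on $\tilde{t}$; inserting $\pm \nabla \loss{trn}(\bfphi_{\tilde{t}}^k)$ and conditioning on $\tilde{t}$, Assumption~\ref{as:stat-property} controls the first piece by $\sigma$, whereas the second piece is bounded pointwise by $\sqrt{T}\,\sigma$ exactly as in the proof of Lemma~\ref{lemma:var-grad-phitk}. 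Triangle inequality then gives $\expect_{t,\tilde{t}}^{1/2} \| D_k \|_2^2 \le (1+\sqrt{T})\sigma$ uniformly for $k \ge 1$.

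Assembling these bounds with the geometric sum $\sum_{k=1}^{K-1} \gamma^{K-1-k} = (\gamma^{K-1}-1)/(\gamma-1)$ and the identity $\gamma - 1 = \alpha G_h G_\loss{}$ produces the intermediate estimate $\expect_{t,\tilde{t}}^{1/2} \| \bfz_t^K - \bfz_{\tilde{t}}^K \|_2^2 \le 2\alpha\gamma^{K-1}\sigma + (\gamma^{K-1}-1)(1+\sqrt{T})\sigma/(G_h G_\loss{})$. Writing $2\alpha\gamma^{K-1} = 2\alpha + 2\alpha(\gamma^{K-1}-1)$ and using $T \ge 1$ to absorb the surplus $2\alpha(\gamma^{K-1}-1) = 2(\gamma-1)(\gamma^{K-1}-1)/(G_h G_\loss{})$ into the geometric term then upgrades the $(\gamma^{K-1}-1)$ coefficient to $\gamma(\gamma^{K-1}-1) = \gamma^K - \gamma$ at the cost of a benign slack, matching the advertised $\zeta\sigma$.

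The two non-routine steps will be isolating the $k = 0$ contribution so as to extract the sharp additive $2\alpha$, rather than the inflated $(1+\sqrt{T})\alpha$ that would otherwise arise from applying the cruder $k \ge 1$ bound at the shared initialization, and carrying out the final algebraic repackaging, which exploits the mild inequality $\sqrt{T} \ge 1$, to convert the tighter $(\gamma^{K-1}-1)$ factor into the claimed $(\gamma^K - \gamma)$. The remainder is routine Gronwall-style book-keeping.
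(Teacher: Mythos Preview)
Your proof is correct and follows essentially the same Gronwall-style telescoping as the paper, with the identical gradient splitting $\nabla\loss{trn}_t(\bfphi_{\tilde t}^k)-\nabla\loss{trn}_{\tilde t}(\bfphi_{\tilde t}^k)$ via $\pm\nabla\loss{trn}(\bfphi_{\tilde t}^k)$ and the same appeal to Assumption~\ref{as:stat-property} (for the $k=0$ deterministic probe) and Lemma~\ref{lemma:var-grad-phitk} (for $k\ge 1$). The only cosmetic difference is that the paper records the telescoped weight as $\gamma^{k}$ and lands on $\zeta\sigma$ in one line, whereas your (correct) weight $\gamma^{K-1-k}$ produces the slightly tighter intermediate $2\alpha\gamma^{K-1}\sigma+(\gamma^{K-1}-1)(1+\sqrt{T})\sigma/(G_hG_{\loss{}})$, which you then relax to $\zeta\sigma$ using $T\ge 1$; both routes are valid.
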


\begin{proof}
First, notice that
\begin{align}
\label{eq:app-dist-ztk}
&\| \bfz_t^K - \bfz_{\tilde{t}}^K \|_2  \nonumber \\
&\le \| \bfz_t^{K-1} - \bfz_{\tilde{t}}^{K-1} \|_2 + \alpha \| \nabla \loss{trn}_t (\bfphi_t^{K-1}) - \nabla \loss{trn}_{\tilde{t}} (\bfphi_{\tilde{t}}^{K-1}) \|_2 \nonumber \\
&\le \| \bfz_t^{K-1} - \bfz_{\tilde{t}}^{K-1} \|_2 + \alpha \| \nabla \loss{trn}_t (\bfphi_t^{K-1}) - \nabla \loss{trn}_t (\bfphi_{\tilde{t}}^{K-1}) \|_2 + \nonumber \\
&\qquad \alpha \| \nabla \loss{trn}_t (\bfphi_{\tilde{t}}^{K-1}) - \nabla \loss{trn}_{\tilde{t}} (\bfphi_{\tilde{t}}^{K-1}) \|_2 \nonumber \\
&\overset{(a)}{\le} \gamma \| \bfz_t^{K-1} - \bfz_{\tilde{t}}^{K-1} \|_2 + \alpha ( \| \nabla \loss{trn}_t (\bfphi_{\tilde{t}}^{K-1}) -  \nonumber \\
&\qquad \nabla \loss{trn} (\bfphi_{\tilde{t}}^{K-1}) \|_2 + \| \nabla \loss{trn} (\bfphi_{\tilde{t}}^{K-1}) - \nabla \loss{trn}_{\tilde{t}} (\bfphi_{\tilde{t}}^{K-1}) \|_2 ) \nonumber \\
&\overset{(b)}{\le} \alpha \sum_{k=0}^{K-1} \gamma^k ( \| \nabla \loss{trn}_t (\bfphi_{\tilde{t}}^k) - \nabla \loss{trn} (\bfphi_{\tilde{t}}^k) \|_2 + \nonumber \\
&\qquad \| \nabla \loss{trn}_{\tilde{t}} (\bfphi_{\tilde{t}}^k) - \nabla \loss{trn} (\bfphi_{\tilde{t}}^k) \|_2 )
\end{align}
where $(a)$ comes from Assumptions~\ref{as:loss-fn}.ii) and \ref{as:inv-mirror-map}.i), and $(b)$ leverages telescoping. 
Next, using Lemma~\ref{lemma:sum-randvar} leads to
\begin{align*}
\expect_{t,\tilde{t}}^\frac{1}{2} \| \bfz_t^K - \bfz_{\tilde{t}}^K \|_2^2 
&\le \alpha \Big[ \sum_{k=0}^{K-1} \gamma^{k} \expect_{t,\tilde{t}}^\frac{1}{2} \big( \| \nabla \loss{trn}_t (\bfphi_{\tilde{t}}^k) {-} \nabla \loss{trn} (\bfphi_{\tilde{t}}^k) \|_2 {+} \nonumber \\
&\qquad \| \nabla \loss{trn}_{\tilde{t}} (\bfphi_{\tilde{t}}^k) - \nabla \loss{trn} (\bfphi_{\tilde{t}}^k) \|_2 \big)^2 \Big] \\
&\le \alpha \Big[ \sum_{k=0}^{K-1} \gamma^{k} \big( \expect_{t,\tilde{t}}^\frac{1}{2} \| \nabla \loss{trn}_t (\bfphi_{\tilde{t}}^k) {-} \nabla \loss{trn} (\bfphi_{\tilde{t}}^k) \|_2^2 {+} \\
&\qquad \expect_{t,\tilde{t}}^\frac{1}{2} \| \nabla \loss{trn}_{\tilde{t}} (\bfphi_{\tilde{t}}^k) - \nabla \loss{trn} (\bfphi_{\tilde{t}}^k) \|_2^2 \big) \Big] \\
&\le \alpha \bigg[2\sigma + \sum_{k=1}^{K-1} \gamma^k (1 + \sqrt{T}) \sigma \bigg] = \zeta \sigma
\end{align*}
where the last inequality leverages Assumption~\ref{as:stat-property} and Lemma~\ref{lemma:var-grad-phitk}. 
\end{proof}

\begin{lemma}
\label{lemma:metaloss-2ndmom-z}
Suppose Assumptions~\ref{as:loss-fn}-\ref{as:inv-mirror-map} hold. If $\alpha < \frac{2^{1/K} - 1}{G_h G_\loss{}}$, then it holds for $t = 1,\ldots,T$, and $\forall \bftheta \in \real^D$ that
\begin{align*}
\expect_t^\frac{1}{2} \| \nabla_1 (\loss{val} \circ \nabla_1 h^*)(\bfz_t^K; \bftheta_h) \|_2^2 &\le \frac{\| \nabla_1 \metaloss (\bftheta) \|_2 + C_{\loss{}h} \sigma}{2 - \gamma^K}
\end{align*}
where $C_{\loss{}h} := G_{\loss{}h} \zeta  + (\gamma^K - 1) G_h \sqrt{T}$. 
\end{lemma}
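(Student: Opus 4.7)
\medskip
\noindent\textbf{Proof proposal for Lemma~\ref{lemma:metaloss-2ndmom-z}.}
Introduce the shorthand $F_t := \nabla_1 (\loss{val} \circ \nabla_1 h^*)(\bfz_t^K;\bftheta_h) = \nabla_1^2 h^*(\bfz_t^K;\bftheta_h)\nabla\loss{val}(\bfphi_t^K)$, which differs from $\bfg_t^K = \nabla_1^2 h^*(\bfz_t^K;\bftheta_h)\nabla\loss{val}_t(\bfphi_t^K)$ only in that the full validation loss $\loss{val}$ replaces the per-task $\loss{val}_t$. Let $\bfP_t := \prod_{k=0}^{K-1}(\bfI_d - \alpha \bfG_t^k)$, so that by~\eqref{eq:app-metagrad-z} the per-task meta-gradient is $\nabla_1 \metaloss_t(\bftheta) = \bfP_t \bfg_t^K$ and $\nabla_1 \metaloss(\bftheta) = \expect_t[\bfP_t \bfg_t^K]$. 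Throughout I will repeatedly use the spectral estimates $\|\bfP_t\|_2 \le \gamma^K$ from~\eqref{eq:app-gamma} and $\|\bfP_t - \bfI_d\|_2 \le \gamma^K - 1$, the latter obtained by telescoping $\bfP_t - \bfI_d = -\alpha \sum_{k=0}^{K-1}\big[\prod_{l=0}^{k-1}(\bfI_d-\alpha\bfG_t^l)\big]\bfG_t^k$ and summing the geometric series $\alpha G_h G_\loss{} \sum_{k=0}^{K-1}\gamma^k = \gamma^K-1$.

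The overall strategy is to split $\expect_t^{1/2}\|F_t\|^2 \le \|\bar F\|_2 + \expect_t^{1/2}\|F_t - \bar F\|^2$ with $\bar F := \expect_{\tilde t} F_{\tilde t}$, and then bound the two pieces separately. For the \emph{variance} piece, Assumption~\ref{as:loss-fn}.iv) yields $\|F_t - F_{\tilde t}\|_2 \le G_{\loss{}h}\|\bfz_t^K - \bfz_{\tilde t}^K\|_2$ (since only $\bftheta_z$ differs between the two evaluations, and $\loss{val} = \expect_t \loss{val}_t$ inherits the composite Lipschitz-smoothness). Combining with Jensen's inequality and Lemma~\ref{lemma:dist-ztk} delivers $\expect_t^{1/2}\|F_t - \bar F\|^2 \le G_{\loss{}h}\zeta\sigma$. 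For the \emph{bias} piece, write $\|\bar F\|_2 \le \|\nabla_1 \metaloss\|_2 + \|\bar F - \nabla_1 \metaloss\|_2$, and decompose $\bar F - \nabla_1 \metaloss = \expect_t[(\bfI_d - \bfP_t)F_t] + \expect_t[\bfP_t(F_t - \bfg_t^K)]$. The first summand is controlled by $\|\bfP_t - \bfI_d\|_2 \le \gamma^K - 1$, giving $(\gamma^K-1)\expect_t^{1/2}\|F_t\|^2$, which produces the recursion that yields the $(2-\gamma^K)^{-1}$ factor once moved to the left-hand side.

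The second summand is the delicate one. The key is to further split $\bfP_t(F_t - \bfg_t^K) = (F_t - \bfg_t^K) + (\bfP_t - \bfI_d)(F_t - \bfg_t^K)$ so that the $\gamma^K$-factor attaches only to the already small perturbation $\bfP_t-\bfI_d$. The factor $F_t - \bfg_t^K = \nabla_1^2 h^*(\bfz_t^K;\bftheta_h)(\nabla\loss{val}(\bfphi_t^K) - \nabla\loss{val}_t(\bfphi_t^K))$ is bounded in $L^2(t)$ by $G_h \sqrt{T}\sigma$ using Assumption~\ref{as:inv-mirror-map}.i) on $\|\nabla_1^2 h^*\|_2$ and Lemma~\ref{lemma:var-grad-phitk} on the sampling variance of $\nabla\loss{val}_t$ at $\bfphi_t^K$. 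This produces a contribution of $(\gamma^K-1)G_h\sqrt{T}\sigma$ from the $(\bfP_t - \bfI_d)$ part, with the $(F_t - \bfg_t^K)$ part being absorbed into the existing variance bookkeeping (exploiting that $\expect_t[\nabla\loss{val}_t - \nabla\loss{val}] = 0$ exactly at any fixed argument, so only the $t$-dependence of $\bfz_t^K$ and $\bfphi_t^K$ contributes).

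Putting the pieces together gives $\expect_t^{1/2}\|F_t\|^2 \le \|\nabla_1 \metaloss\|_2 + (\gamma^K-1)\expect_t^{1/2}\|F_t\|^2 + \big(G_{\loss{}h}\zeta + (\gamma^K-1)G_h\sqrt{T}\big)\sigma$; moving the recursive term to the left side and dividing by $2 - \gamma^K > 0$ (which is guaranteed by the hypothesis $\alpha < (2^{1/K}-1)/(G_h G_\loss{})$ so that $\gamma^K < 2$) yields the claim. The main obstacle I anticipate is precisely this constant bookkeeping: one must carefully decide when to apply the cruder $\|\bfP_t\|_2 \le \gamma^K$ versus the tighter $\|\bfP_t - \bfI_d\|_2 \le \gamma^K-1$, and one must use Minkowski's inequality in $L^2(t)$ (rather than $L^1(t)$) at the right moments so that the squared-variance bounds from Lemmas~\ref{lemma:var-grad-phitk} and~\ref{lemma:dist-ztk} slot in cleanly and the $\gamma^K$ prefactor on $G_h\sqrt{T}$ collapses to $\gamma^K - 1$ as claimed.
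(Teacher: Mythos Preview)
Your high-level strategy --- obtain a recursion in $\expect_t^{1/2}\|F_t\|_2^2$ via $\|\bfP_t - \bfI_d\|_2 \le \gamma^K - 1$ and then divide by $2-\gamma^K$ --- matches the paper's, and your applications of Assumption~\ref{as:loss-fn}.iv), Assumption~\ref{as:inv-mirror-map}.i), and Lemmas~\ref{lemma:var-grad-phitk}--\ref{lemma:dist-ztk} are all correct. The gap is precisely the step you flag as ``delicate.'' After your split $\bar F - \nabla_1\metaloss = \expect_t[(\bfI_d - \bfP_t)F_t] + \expect_t[F_t - \bfg_t^K] + \expect_t[(\bfP_t - \bfI_d)(F_t - \bfg_t^K)]$, the middle term $\|\expect_t[F_t - \bfg_t^K]\|_2$ does \emph{not} absorb into the variance bookkeeping at zero cost. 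You are right that it would vanish were $\bfz_t^K$ independent of $t$, but the only control on that $t$-dependence comes from Lemma~\ref{lemma:dist-ztk}: centering at an independent $\bfz_{\tilde t}^K$ and applying Assumption~\ref{as:loss-fn}.iv) to both $\loss{val}$ and $\loss{val}_t$ gives $\|\expect_t[F_t - \bfg_t^K]\|_2 \le 2G_{\loss{}h}\zeta\sigma$, while the naive bound gives $G_h\sqrt{T}\sigma$. Either way your final constant becomes $3G_{\loss{}h}\zeta + (\gamma^K-1)G_h\sqrt{T}$ or $G_{\loss{}h}\zeta + \gamma^K G_h\sqrt{T}$, both strictly larger than the claimed $C_{\loss{}h}$.

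The paper avoids this leftover by a different decomposition. Instead of a bias/variance split around $\bar F$, it writes $F_t = \expect_{\tilde t}\big[\nabla_1(\loss{val}_{\tilde t}\circ\nabla_1 h^*)(\bfz_t^K;\bftheta_h)\big]$ (loss index $\tilde t$, trajectory index $t$) and pivots through $\bfg_{\tilde t}^K$ and $\nabla_1\metaloss_{\tilde t}(\bftheta)$, not $\bfg_t^K$. This produces exactly two residuals: $\nabla_1(\loss{val}_{\tilde t}\circ\nabla_1 h^*)(\bfz_t^K) - \bfg_{\tilde t}^K$, whose $L^2(t,\tilde t)$-norm is $\le G_{\loss{}h}\zeta\sigma$ by Lemma~\ref{lemma:dist-ztk}, and $(\bfI_d - \bfP_{\tilde t})\bfg_{\tilde t}^K$, a function of $\tilde t$ alone, whose $L^2(\tilde t)$-norm is $\le (\gamma^K-1)\expect_t^{1/2}\|\bfg_t^K\|_2^2 \le (\gamma^K-1)\big(\expect_t^{1/2}\|F_t\|_2^2 + G_h\sqrt{T}\sigma\big)$. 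No analogue of your $\expect_t[F_t - \bfg_t^K]$ term ever appears, and the stated $C_{\loss{}h}$ drops out directly.
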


\begin{proof}
Using Jensen's inequality, it follows that
\begin{align}
\label{eq:app-metaloss-2ndmom-z}
&\expect_t^\frac{1}{2} \| \nabla_1 (\loss{val} \circ \nabla_1 h^*)(\bfz_t^K; \bftheta_h) \|_2^2 \nonumber \\
&= \expect_t^\frac{1}{2} \| \expect_{\tilde{t}} [\nabla_1 (\loss{val}_{\tilde{t}} {\circ} \nabla_1 h^*)(\bfz_t^K; \bftheta_h) - \nabla_1 \metaloss_{\tilde{t}} (\bftheta)] + \nabla_1 \metaloss (\bftheta) \|_2^2 \nonumber \\
&\le \expect_{t, \tilde{t}}^\frac{1}{2} \big[ \| \nabla_1 (\loss{val}_{\tilde{t}} \circ \nabla_1 h^*)(\bfz_t^K; \bftheta_h) - \nabla_1 \metaloss_{\tilde{t}} (\bftheta) \|_2 + \nonumber \\
&\qquad \| \nabla_1 \metaloss (\bftheta) \|_2 \big]^2 \nonumber \\
&\le \expect_{t, \tilde{t}}^\frac{1}{2} \big[ \| \nabla_1 (\loss{val}_{\tilde{t}} \circ \nabla_1 h^*)(\bfz_t^K; \bftheta_h) - \bfg_{\tilde{t}}^K \|_2 + \nonumber \\
&\qquad \expect_{\tilde{t}} \| \bfg_{\tilde{t}}^K - \nabla_1 \metaloss_{\tilde{t}} (\bftheta) \|_2 + \| \nabla_1 \metaloss (\bftheta) \|_2 \big]^2 \nonumber \\
&\overset{(a)}{\le} \expect_{t, \tilde{t}}^\frac{1}{2} \| \nabla_1 (\loss{val}_{\tilde{t}} \circ \nabla_1 h^*)(\bfz_t^K; \bftheta_h) - \bfg_{\tilde{t}}^K \|_2^2 + \nonumber \\
&\qquad \expect_{\tilde{t}}^\frac{1}{2} \| \bfg_{\tilde{t}}^K - \nabla_1 \metaloss_{\tilde{t}} (\bftheta) \|_2^2 + \| \nabla_1 \metaloss (\bftheta) \|_2
\end{align}
where $(a)$ is due to Lemma~\ref{lemma:sum-randvar}. 

Relying on Assumption~\ref{as:loss-fn}.iv) and Lemma~\ref{lemma:dist-ztk}, the first term in~\eqref{eq:app-metaloss-2ndmom-z} is upper bounded by
\begin{align}
\label{eq:app-metaloss-2ndmom-z-term1}
&\expect_{t, \tilde{t}}^\frac{1}{2} \| \nabla_1 (\loss{val}_{\tilde{t}} \circ \nabla_1 h^*)(\bfz_t^K; \bftheta_h) - \bfg_{\tilde{t}}^K \|_2^2 \nonumber \\
&\le G_{\loss{}h} \expect_{t, \tilde{t}}^\frac{1}{2} \| \bfz_t^K - \bfz_{\tilde{t}}^K \|_2^2 
\le G_{\loss{}h} \zeta \sigma. 
\end{align}
And the second term has upper bound
\begin{align}
\label{eq:app-metaloss-2ndmom-z-term2}
&\expect_{\tilde{t}}^\frac{1}{2} \| \bfg_{\tilde{t}}^K - \nabla_1 \metaloss_{\tilde{t}} (\bftheta) \|_2^2 = \expect_t^\frac{1}{2} \| \bfg_t^K - \nabla_1 \metaloss_t (\bftheta) \|_2^2 \nonumber \\
&\le \expect_t^\frac{1}{2} \Big[ \Big\| \bfI_d - \prod_{k=0}^{K-1} \big[ \bfI_d - \alpha \bfG_t^k \big] \Big\|_2^2 \| \bfg_t^K \|_2^2 \Big] \nonumber \\
&\overset{(a)}{\le} (\gamma^K - 1) \expect_t^\frac{1}{2} \| \bfg_t^K \|_2^2 \nonumber \\
&\overset{(b)}{\le} (\gamma^K - 1) \big[ \expect_t^\frac{1}{2} \| \nabla_1 (\loss{val} \circ \nabla_1 h^*)(\bfz_t^K; \bftheta_h) \|_2^2 + \nonumber \\
&\qquad \expect_t^\frac{1}{2} \| \nabla_1 (\loss{val} \circ \nabla_1 h^*)(\bfz_t^K; \bftheta_h) - \bfg_t^K \|_2^2 \big] \nonumber \\
&\le (\gamma^K - 1) \big[ \expect_t^\frac{1}{2} \| \nabla_1 (\loss{val} \circ \nabla_1 h^*)(\bfz_t^K; \bftheta_h) \|_2^2 +  \\
&\qquad \expect_t^\frac{1}{2} ( \| \nabla_1^2 h^*(\bfz_t^K; \bftheta_h) \|_2^2 \| \nabla \loss{val} (\bfphi_t^K) - \nabla \loss{val}_t (\bfphi_t^K) \|_2^2) \big] \nonumber \\
&\overset{(c)}{\le} (\gamma^K - 1) \big[ \expect_t^\frac{1}{2} \| \nabla_1 (\loss{val} \circ \nabla_1 h^*)(\bfz_t^K; \bftheta_h) \|_2 + G_h \sqrt{T} \sigma \big] \nonumber
\end{align}
where $(a)$ follows from~\cite[Lemma 13]{converge-multistep-MAML}, $(b)$ utilizes Lemma~\ref{lemma:sum-randvar}, and $(c)$ is via Assumption~\ref{as:inv-mirror-map}.i) and Lemma~\ref{lemma:var-grad-phitk}. 

Relating~\eqref{eq:app-metaloss-2ndmom-z} to~\eqref{eq:app-metaloss-2ndmom-z-term1} and~\eqref{eq:app-metaloss-2ndmom-z-term2}, and rearranging the terms yield Lemma~\ref{lemma:metaloss-2ndmom-z}. 
\end{proof}

\begin{lemma}
\label{lemma:metaloss-2ndmom-h}
Suppose Assumptions~\ref{as:loss-fn}-\ref{as:inv-mirror-map} hold. If $\alpha < \frac{2^{1/K} - 1}{G_h G_\loss{}}$, then it holds for $t = 1,\ldots,T$, and $\forall \bftheta \in \real^D$ that
\begin{align*}
&\expect_t^\frac{1}{2} \| \nabla_2 (\loss{val} \circ \nabla_1 h^*)(\bfz_t^K; \bftheta_h) \|_2^2 \le \\
&\qquad \frac{(\gamma^K-1) \| \nabla_1 \metaloss(\bftheta) \|_2 + C_{\loss{}h} \sigma}{2-\gamma^K} + \| \nabla_2 \metaloss(\bftheta) \|_2. 
\end{align*}
\end{lemma}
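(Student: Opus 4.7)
My plan is to mirror the proof of Lemma~\ref{lemma:metaloss-2ndmom-z} almost verbatim, substituting $\nabla_2$ for $\nabla_1$ and accounting for the different structure of $\nabla_2 \metaloss_t(\bftheta)$ as given in~\eqref{eq:app-metagrad-h}. The key observation is that $\nabla_2 (\loss{val} \circ \nabla_1 h^*)(\bfz_t^K; \bftheta_h) = \expect_{\tilde{t}}[\nabla_2 (\loss{val}_{\tilde{t}} \circ \nabla_1 h^*)(\bfz_t^K; \bftheta_h)]$ since $\loss{val} = \expect_{\tilde{t}} \loss{val}_{\tilde{t}}$, so I can add and subtract $\nabla_2 \metaloss_{\tilde{t}}(\bftheta)$ inside the outer expectation and then pull out $\nabla_2 \metaloss(\bftheta)$ using the tower property.

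First I would apply Jensen's inequality and Lemma~\ref{lemma:sum-randvar} to obtain a decomposition of the form
\begin{align*}
\expect_t^\frac{1}{2} \| \nabla_2 (\loss{val} \circ \nabla_1 h^*)(\bfz_t^K; \bftheta_h) \|_2^2
&\le \expect_{t,\tilde{t}}^\frac{1}{2} \| \nabla_2 (\loss{val}_{\tilde{t}} \circ \nabla_1 h^*)(\bfz_t^K; \bftheta_h) - \bfh_{\tilde{t}}^K \|_2^2 \\
&\quad + \expect_{\tilde{t}}^\frac{1}{2} \| \bfh_{\tilde{t}}^K - \nabla_2 \metaloss_{\tilde{t}}(\bftheta) \|_2^2 + \| \nabla_2 \metaloss (\bftheta) \|_2
\end{align*}
mirroring~\eqref{eq:app-metaloss-2ndmom-z}. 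The first term is bounded by $G_{\loss{}h} \zeta \sigma$ using Assumption~\ref{as:loss-fn}.iv) (which only needs the difference in $\bfz$ since $\bftheta_h$ is held fixed) together with Lemma~\ref{lemma:dist-ztk}.

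For the second term, the formula~\eqref{eq:app-metagrad-h} gives $\bfh_{\tilde{t}}^K - \nabla_2 \metaloss_{\tilde{t}}(\bftheta) = \alpha \sum_{k=0}^{K-1} \bfH_{\tilde{t}}^k \bfg_{\tilde{t}}^K$, so Lemma~\ref{lemma:sum-randvar} together with the bound $\| \bfH_{\tilde{t}}^k \|_2 \le \gamma^{K-k-1} G_h G_{\loss{}}$ from~\eqref{eq:app-Htk-norm} yields
\begin{equation*}
\expect_{\tilde{t}}^\frac{1}{2} \| \bfh_{\tilde{t}}^K - \nabla_2 \metaloss_{\tilde{t}}(\bftheta) \|_2^2 \le (\gamma^K - 1) \expect_{\tilde{t}}^\frac{1}{2} \| \bfg_{\tilde{t}}^K \|_2^2.
\end{equation*}
I then repeat the argument from the proof of Lemma~\ref{lemma:metaloss-2ndmom-z} (adding/subtracting $\nabla_1(\loss{val} \circ \nabla_1 h^*)(\bfz_{\tilde{t}}^K; \bftheta_h)$ and using Assumption~\ref{as:inv-mirror-map}.i) with Lemma~\ref{lemma:var-grad-phitk}) to get $\expect_{\tilde{t}}^\frac{1}{2} \| \bfg_{\tilde{t}}^K \|_2^2 \le \expect_t^\frac{1}{2} \| \nabla_1 (\loss{val} \circ \nabla_1 h^*)(\bfz_t^K; \bftheta_h) \|_2^2 + G_h \sqrt{T} \sigma$, which by Lemma~\ref{lemma:metaloss-2ndmom-z} is at most $\frac{\| \nabla_1 \metaloss(\bftheta) \|_2 + C_{\loss{}h} \sigma}{2 - \gamma^K} + G_h \sqrt{T} \sigma$.

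Finally I would collect all three contributions and simplify. The $\sigma$ coefficient combines via $G_{\loss{}h} \zeta + (\gamma^K-1) G_h \sqrt{T} = C_{\loss{}h}$, and the algebraic identity $C_{\loss{}h}\sigma + (\gamma^K-1)\frac{C_{\loss{}h}\sigma}{2-\gamma^K} = \frac{C_{\loss{}h}\sigma}{2-\gamma^K}$ folds the remaining noise terms together, producing exactly the stated bound. There is no substantive obstacle; the only delicate step is the bookkeeping of the constants so that the final expression matches the definition of $C_{\loss{}h}$ introduced in Lemma~\ref{lemma:metaloss-2ndmom-z}. The assumption $\alpha < (2^{1/K}-1)/(G_h G_{\loss{}})$ is inherited from Lemma~\ref{lemma:metaloss-2ndmom-z} to ensure $2 - \gamma^K > 0$ so the denominator is well-defined.
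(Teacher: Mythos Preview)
Your proposal is correct and follows essentially the same route as the paper's own proof: the same Jensen/Lemma~\ref{lemma:sum-randvar} decomposition into three terms, the same $G_{\loss{}h}\zeta\sigma$ bound on the first term via Assumption~\ref{as:loss-fn}.iv) and Lemma~\ref{lemma:dist-ztk}, the same reduction of the second term to $(\gamma^K-1)\expect_t^{1/2}\|\bfg_t^K\|_2^2$ via~\eqref{eq:app-metagrad-h} and~\eqref{eq:app-Htk-norm}, and the same appeal to the $\|\bfg_t^K\|_2$ bound from~\eqref{eq:app-metaloss-2ndmom-z-term2} followed by Lemma~\ref{lemma:metaloss-2ndmom-z}. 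Your closing algebraic simplification is also exactly what the paper does.
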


\begin{proof}
By Jensen's inequality and Lemma~\ref{lemma:sum-randvar}, it follows that
\begin{align}
\label{eq:app-metaloss-1stmom-h}
&\expect_t^\frac{1}{2} \| \nabla_2 (\loss{val} \circ \nabla_1 h^*)(\bfz_t^K; \bftheta_h) \|_2^2 \nonumber \\
&= \expect_t^\frac{1}{2} \| \expect_{\tilde{t}} [\nabla_2 (\loss{val}_{\tilde{t}} \circ \nabla_1 h^*)(\bfz_t^K; \bftheta_h) - \nabla_2 \metaloss_{\tilde{t}} (\bftheta)] + \nabla_2 \metaloss (\bftheta) \|_2^2 \nonumber \\
&\le \expect_{t, \tilde{t}}^\frac{1}{2} \| \nabla_2 (\loss{val}_{\tilde{t}} \circ \nabla_1 h^*)(\bfz_t^K; \bftheta_h) - \nabla_2 \metaloss_{\tilde{t}} (\bftheta) \|_2^2 + \| \nabla_2 \metaloss (\bftheta) \|_2 \nonumber \\
&\le \expect_{t, \tilde{t}}^\frac{1}{2} \| \nabla_2 (\loss{val}_{\tilde{t}} \circ \nabla_1 h^*)(\bfz_t^K; \bftheta_h) - \bfh_{\tilde{t}}^K \|_2^2 + \nonumber \\
&\qquad \expect_{\tilde{t}}^\frac{1}{2} \| \bfh_{\tilde{t}}^K - \nabla_2 \metaloss_{\tilde{t}} (\bftheta) \|_2^2 + \| \nabla_2 \metaloss (\bftheta) \|_2. 
\end{align}

The first term in~\eqref{eq:app-metaloss-1stmom-h} has upper bound
\begin{align}
\label{eq:app-metaloss-1stmom-h-term1}
&\expect_{t, \tilde{t}}^\frac{1}{2} \| \nabla_2 (\loss{val}_{\tilde{t}} \circ \nabla_1 h^*)(\bfz_t^K; \bftheta_h) - \bfh_{\tilde{t}}^K \|_2^2 \nonumber \\
&\le G_{\loss{}h} \expect_{t, \tilde{t}}^\frac{1}{2} \| \bfz_t^K - \bfz_t^K \|_2^2 
\le G_{\loss{}h} \zeta \sigma
\end{align}
where the two inequalities are from Assumption~\ref{as:loss-fn}.iv) and Lemma~\ref{lemma:dist-ztk}, respectively. 

Using the chain rule~\eqref{eq:app-metagrad-h} and Lemma~\ref{lemma:sum-randvar}, the second term can be bounded through
\begin{align}
\label{eq:app-metaloss-1stmom-h-term2}
&\expect_{\tilde{t}}^\frac{1}{2} \| \bfh_{\tilde{t}}^K - \nabla_2 \metaloss_{\tilde{t}} (\bftheta) \|_2^2 = \expect_t^\frac{1}{2} \| \bfh_t^K - \nabla_2 \metaloss_t (\bftheta) \|_2^2 \nonumber \\
&\le \alpha \sum_{k=0}^{K-1} \expect_t^\frac{1}{2} \| \bfH_t^k \bfg_t^K \|_2^2 \nonumber \\
&\overset{(a)}{\le} \alpha \sum_{k=0}^{K-1} \gamma^{K-k-1} G_h G_\loss{} \expect_t^\frac{1}{2} \| \bfg_t^K \|_2^2 \nonumber \\
&= (\gamma^K - 1) \expect_t^\frac{1}{2} \| \bfg_t^K \|_2^2 \nonumber \\
&\overset{(b)}{\le} (\gamma^K - 1) \big[ \expect_t^\frac{1}{2} \| \nabla_1 (\loss{val} \circ \nabla_1 h^*)(\bfz_t^K; \bftheta_h) \|_2^2 + G_h \sqrt{T} \sigma \big] \nonumber \\
&\overset{(c)}{\le} (\gamma^K - 1) \Big[ \frac{\| \nabla_1 \metaloss (\bftheta) \|_2 + C_{\loss{}h} \sigma}{2 - \gamma^K} + G_h \sqrt{T} \sigma \Big]
\end{align}
where $(a)$ adopts~\eqref{eq:app-Htk-norm}, $(b)$ follows from the upper bound of $\| \bfg_t^K \|_2$ in~\eqref{eq:app-metaloss-2ndmom-z-term2}, and $(c)$ leverages Lemma~\ref{lemma:metaloss-2ndmom-z}. 

Plug~\eqref{eq:app-metaloss-1stmom-h-term1} and~\eqref{eq:app-metaloss-1stmom-h-term2} into~\eqref{eq:app-metaloss-1stmom-h} to arrive at
\begin{align*}
&\expect_t^\frac{1}{2} \| \nabla_2 (\loss{val} \circ \nabla_1 h^*)(\bfz_t^K; \bftheta_h) \|_2^2 \le \frac{\gamma^K-1}{2-\gamma^K} \| \nabla_1 \metaloss(\bftheta) \|_2 + \\
&\qquad \| \nabla_2 \metaloss(\bftheta) \|_2 + \Big[ (\gamma^K-1) \big( \frac{C_{\loss{}h}}{2-\gamma^K} + G_h \sqrt{T} \big) + G_{\loss{}h} \zeta \Big] \sigma \\
&= \frac{(\gamma^K-1) \| \nabla_1 \metaloss(\bftheta) \|_2 + C_{\loss{}h} \sigma}{2-\gamma^K} + \| \nabla_2 \metaloss(\bftheta) \|_2
\end{align*}
which is the desired result. 
\end{proof}

\begin{lemma}
\label{lemma:Lip-smooth-2var}
For $\forall \bftheta, \tilde{\bftheta} \in \real^D$, it holds that
\begin{align*}
\metaloss(\tilde{\bftheta}) &\le \metaloss(\bftheta) + (\tilde{\bftheta} - \bftheta)^\top \nabla \metaloss(\bftheta) + \\
&\qquad \frac{G_{\metaloss,1}}{2} \| \tilde{\bftheta}_z - \bftheta_z \|_2^2 + \frac{G_{\metaloss,2}}{2} \| \tilde{\bftheta}_h - \bftheta_h \|_2^2. 
\end{align*}
\end{lemma}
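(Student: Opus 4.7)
The plan is to prove this block-separable smoothness bound via a two-step path decomposition that isolates the contributions of the $\bftheta_z$ and $\bftheta_h$ blocks. Introduce the intermediate point $\bar{\bftheta} := (\tilde{\bftheta}_z, \bftheta_h)$ and split the total change as $\metaloss(\tilde{\bftheta}) - \metaloss(\bftheta) = [\metaloss(\bar{\bftheta}) - \metaloss(\bftheta)] + [\metaloss(\tilde{\bftheta}) - \metaloss(\bar{\bftheta})]$, where the first segment varies only $\bftheta_z$ with $\bftheta_h$ fixed, and the second varies only $\bftheta_h$ with $\tilde{\bftheta}_z$ fixed. The rationale for this choice is that on each segment the path lies entirely within a single-block slice, so the Lipschitz-continuity of $\nabla_j \metaloss$ from Proposition~\ref{prop:meta-smooth} specializes to a one-dimensional parameter change.

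On each segment I would apply the fundamental theorem of calculus along the straight line joining its endpoints, writing the difference as $\int_0^1 (\tilde{\bftheta}_j - \bftheta_j)^\top \nabla_j \metaloss(\text{path point})\,ds$, then add and subtract the partial gradient evaluated at $\bftheta$. The remainder integrand is bounded by Cauchy--Schwarz combined with the partial-Lipschitz bound $\| \nabla_j \metaloss(\cdot) - \nabla_j \metaloss(\bftheta) \|_2 \le G_{\metaloss,j} \cdot s \| \tilde{\bftheta}_j - \bftheta_j \|_2$ implied by Proposition~\ref{prop:meta-smooth}. Using $\int_0^1 s \, ds = 1/2$, each segment then contributes the target quadratic penalty $\frac{G_{\metaloss,j}}{2} \| \tilde{\bftheta}_j - \bftheta_j \|_2^2$ alongside a linear term in the corresponding partial gradient.

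The main obstacle is that the natural linear term produced by the second segment is $(\tilde{\bftheta}_h - \bftheta_h)^\top \nabla_2 \metaloss(\bar{\bftheta})$ rather than the required $(\tilde{\bftheta}_h - \bftheta_h)^\top \nabla_2 \metaloss(\bftheta)$. Aligning the two requires one further invocation of the Lipschitz bound on $\nabla_2 \metaloss$ across the $\bftheta_z$-block, which introduces a cross-term of order $G_{\metaloss,2} \| \tilde{\bftheta}_z - \bftheta_z \|_2 \cdot \| \tilde{\bftheta}_h - \bftheta_h \|_2$. This term would be absorbed into the two block-quadratic penalties via an AM--GM split, which is the delicate step and forces the constants $G_{\metaloss,j}$ from Proposition~\ref{prop:meta-smooth} to be used as upper estimates on the \emph{joint} smoothness rather than purely partial ones. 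Finally, collecting the two segment bounds and folding the partial linear terms into the single inner product $(\tilde{\bftheta} - \bftheta)^\top \nabla \metaloss(\bftheta)$ yields the stated inequality.
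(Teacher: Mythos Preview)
Your two-segment route is genuinely different from the paper's. The paper integrates along the \emph{single} straight line $t\mapsto t\tilde{\bftheta}+(1-t)\bftheta$, then splits only the inner product $(\tilde{\bftheta}-\bftheta)^\top[\nabla\metaloss(\cdot)-\nabla\metaloss(\bftheta)]$ into its $\bftheta_z$- and $\bftheta_h$-components before applying Cauchy--Schwarz and Proposition~\ref{prop:meta-smooth} block-wise. Because both blocks move simultaneously along one path, the linear term $(\tilde{\bftheta}-\bftheta)^\top\nabla\metaloss(\bftheta)$ drops out of the fundamental theorem of calculus directly, and no realignment of $\nabla_2\metaloss(\bar{\bftheta})$ to $\nabla_2\metaloss(\bftheta)$ is ever needed.

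Your plan does hit the cross-term you flag, and the AM--GM absorption you propose does \emph{not} recover the stated constants. After the two segments plus the alignment step you carry the additional term $G_{\metaloss,2}\,\|\tilde{\bftheta}_z-\bftheta_z\|_2\,\|\tilde{\bftheta}_h-\bftheta_h\|_2$; any AM--GM split of it, e.g.\ $\tfrac{G_{\metaloss,2}}{2}\big(\|\tilde{\bftheta}_z-\bftheta_z\|_2^2+\|\tilde{\bftheta}_h-\bftheta_h\|_2^2\big)$, leaves the final coefficients at $\tfrac{G_{\metaloss,1}+G_{\metaloss,2}}{2}$ on $\|\tilde{\bftheta}_z-\bftheta_z\|_2^2$ and $G_{\metaloss,2}$ on $\|\tilde{\bftheta}_h-\bftheta_h\|_2^2$, not $\tfrac{G_{\metaloss,1}}{2}$ and $\tfrac{G_{\metaloss,2}}{2}$. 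The remark that the constants are ``used as upper estimates on the joint smoothness'' does not close this: Proposition~\ref{prop:meta-smooth} already states $G_{\metaloss,j}$ as Lipschitz constants of $\nabla_j\metaloss$ with respect to the \emph{full} $\bftheta$, and nothing sharper is on offer to absorb the cross term. So as written your argument proves only a version of the lemma with inflated constants. If you want the constants exactly as stated, switch to the paper's single-line path so that the linear term is correct from the outset and no cross-term correction arises.
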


\begin{proof}
Defining $\tilde{\metaloss} (t) := \metaloss (t \tilde{\bftheta} + (1 - t) \bftheta)$, it follows from Newton-Leibniz formula that
\begin{align}
&\metaloss (\tilde{\bftheta}) = \tilde{\metaloss}(1) = \tilde{\metaloss}(0) + \int_0^1 \tilde{\metaloss}'(t) dt \nonumber \\
&= \metaloss (\bftheta) + \int_0^1 (\tilde{\bftheta} - \bftheta)^\top \nabla \metaloss (t \tilde{\bftheta} + (1 - t) \bftheta) dt \nonumber \\
&= \metaloss (\bftheta) + (\tilde{\bftheta} - \bftheta)^\top \nabla \metaloss(\bftheta) + \nonumber \\
&\qquad \int_0^1 (\tilde{\bftheta} - \bftheta)^\top \big[ \nabla \metaloss (t \tilde{\bftheta} + (1 - t) \bftheta) - \nabla \metaloss (\bftheta) \big] dt \nonumber \\
&= \metaloss (\bftheta) + (\tilde{\bftheta} - \bftheta)^\top \nabla \metaloss(\bftheta) + \nonumber \\
&\qquad \int_0^1 (\tilde{\bftheta}_z - \bftheta_z)^\top \big[ \nabla \metaloss (t \tilde{\bftheta}_z + (1 - t) \bftheta_z) - \nabla \metaloss (\bftheta_z) \big] dt + \nonumber \\
&\qquad \int_0^1 (\tilde{\bftheta}_h - \bftheta_h)^\top \big[ \nabla \metaloss (t \tilde{\bftheta}_h + (1 - t) \bftheta_h) - \nabla \metaloss (\bftheta_h) \big] dt \nonumber \\
&\overset{(a)}{\le} \metaloss (\bftheta) + (\tilde{\bftheta} - \bftheta)^\top \nabla \metaloss(\bftheta) + \nonumber \\
&\qquad \| \tilde{\bftheta}_z - \bftheta_z \|_2 \int_0^1 \| \nabla \metaloss (t \tilde{\bftheta}_z + (1 - t) \bftheta_z) - \nabla \metaloss (\bftheta_z) \|_2 dt + \nonumber \\
&\qquad \| \tilde{\bftheta}_h - \bftheta_h \|_2 \int_0^1 \| \nabla \metaloss (t \tilde{\bftheta}_h + (1 - t) \bftheta_h) - \nabla \metaloss (\bftheta_h) \|_2 dt \nonumber \\
&\overset{(b)}{\le} \metaloss(\bftheta) + (\tilde{\bftheta} - \bftheta)^\top \nabla \metaloss(\bftheta) + \\
&\qquad \frac{G_{\metaloss,1}}{2} \| \tilde{\bftheta}_z - \bftheta_z \|_2^2 + \frac{G_{\metaloss,2}}{2} \| \tilde{\bftheta}_h - \bftheta_h \|_2^2
\end{align}
where $(a)$ is from Cauchy-Schwarz inequality, and $(b)$ relies on the definition~\eqref{eq:meta-smooth} of $G_{\metaloss,j}$. 
\end{proof}

\section{Detailed numerical setups}
\label{app:hyperparams}
This section elaborates on the setups used in our numerical tests. All the hyperparameters are determined via a greedy grid search on the validation tasks. 

Following the training protocol in~\cite{MetaCurvature,WarpGrad,GAP}, the default number of convolution filters in the CNN is $128$ per block for improved expressiveness. To avoid overfitting, we reduce the filters to $64$ on 5-class 1-shot miniImageNet. The last block of WRN is a fully connected 2-layer NN of 2,048 hidden neurons, with a softmax activation appended~\cite{MetaCurvature}. 

We set $R = 60,000$,  $B = 4$, and $K = 5$ throughout the tests unless stated. The task-level learning rate $\alpha$ is $10^{-2}$ for the CNN and $2$ for the last block of WRN. To gain better numerical stability, we select SGD with $\beta_1 = 10^{-3}, \beta_2 = 10^{-4}$ and Adam~\cite{Adam} with $\beta_1 = 10^{-4}, \beta_2 = 10^{-5}$ to optimize~\eqref{eq:MetaMDA-obj-meta} on miniImageNet and tieredImageNet, respectively. The conjugate $h^*$ of DGF is constructed according to Remark~\ref{remark:valid-NN} and the last paragraph of Section~\ref{sec:h*}. As in~\cite{MetaCurvature}, the matrix $\bfP$ is not enforced to be positive semi-definite. But we restrict it to have a bounded Frobenius norm through a bounded activation $\sigma_P$. An alternative parameterization with positive semi-definiteness can be readily obtained via $\bfP = \sigma_P(\check{\bfP}) \sigma_P(\check{\bfP})^\top$. For better scalability, the weight matrix parameters $\{ (\check{\bfW}_i, \check{\bfM}_i) \}_{i=1}^I$ and $\check{\bfP}$ of $h^*$ are Kronecker-factorized, as in~\cite{MetaCurvature,MetaKFO}. For the 4-block CNN, we choose $I=3$ for $5$-class $5$-shot tieredImageNet, and $I=2$ for the rest tests. For the WRN, we set $I = 1$ for all the tests.

\vfill

\end{document}